\documentclass[letterpaper]{article} 
\usepackage{aaai23}  
\usepackage{times}  
\usepackage{helvet}  
\usepackage{courier}  
\usepackage[hyphens]{url}  
\usepackage{graphicx} 
\urlstyle{rm} 
\usepackage{natbib}  
\usepackage{caption} 
\frenchspacing  
\setlength{\pdfpagewidth}{8.5in}  
\setlength{\pdfpageheight}{11in}  
%
\usepackage{algorithm}
\usepackage{algorithmic}
\usepackage{amsmath}
\usepackage{amsthm}

\newtheorem{theorem}{Theorem}[section]

\newtheorem{lemma}[theorem]{Lemma}

\usepackage{xcolor}

\usepackage{newfloat}
\usepackage{listings}
\DeclareCaptionStyle{ruled}{labelfont=normalfont,labelsep=colon,strut=off} 
\lstset{%
	basicstyle={\footnotesize\ttfamily},
	numbers=left,numberstyle=\footnotesize,xleftmargin=2em,
	aboveskip=0pt,belowskip=0pt,%
	showstringspaces=false,tabsize=2,breaklines=true}
\floatstyle{ruled}
\newfloat{listing}{tb}{lst}{}
\floatname{listing}{Listing}
%
\pdfinfo{
/TemplateVersion (2023.1)
}

\setcounter{secnumdepth}{0} 

%


\title{A Risk-Sensitive Approach to Policy Optimization}

\author{
    Jared Markowitz\textsuperscript{\rm 1},
    Ryan W. Gardner\textsuperscript{\rm 1},
    Ashley Llorens\textsuperscript{\rm 2},
    Raman Arora\textsuperscript{\rm 3},
    I-Jeng Wang\textsuperscript{\rm 1}
}

\affiliations{
    \textsuperscript{\rm 1}Johns Hopkins University Applied Physics Laboratory\\
    \textsuperscript{\rm 2} Microsoft Corporation\\
    \textsuperscript{\rm 3}Johns Hopkins University\\
    Jared.Markowitz@jhuapl.edu



%
}

\begin{document}

\maketitle

\begin{abstract}
Standard deep reinforcement learning (DRL) aims to maximize expected reward, considering collected experiences equally in formulating a policy. This differs from human decision-making, where gains and losses are valued differently and outlying outcomes are given increased consideration.  It also fails to capitalize on opportunities to improve safety and/or performance through the incorporation of distributional context. Several approaches to distributional DRL have been investigated, with one popular strategy being to evaluate the projected distribution of returns for possible actions.  We propose a more direct approach whereby risk-sensitive objectives, specified in terms of the cumulative distribution function (CDF) of the distribution of full-episode rewards, are optimized. This approach allows for outcomes to be weighed based on relative quality, can be used for both continuous and discrete action spaces, and may naturally be applied in both constrained and unconstrained settings.  We show how to compute an asymptotically consistent estimate of the policy gradient for a broad class of risk-sensitive objectives via sampling, subsequently incorporating variance reduction and regularization measures to facilitate effective on-policy learning.  We then demonstrate that the use of moderately ``pessimistic'' risk profiles, which emphasize scenarios where the agent performs poorly, leads to enhanced exploration and a continual focus on addressing deficiencies.  We test the approach using different risk profiles in six OpenAI Safety Gym environments, comparing to state of the art on-policy methods.  Without cost constraints, we find that pessimistic risk profiles can be used to reduce cost while improving total reward accumulation.  With cost constraints, they are seen to provide higher positive rewards than risk-neutral approaches at the prescribed allowable cost.

\end{abstract}

\section{Introduction}
While deep reinforcement learning (DRL) has been used to master an impressive array of simulated tasks in controlled settings, it has not yet been widely adopted for high-stakes, real-world applications. One reason for this gap is its lack of safety assurances.  Endowing artificial agents with a distributional perspective, potentially used in conjunction with cost constraints, should make their decision-making more robust.  This could in turn lead to increased trust from humans and increased real-world adoption.
 
In reinforcement learning (RL), \emph{risk} arises due to uncertainty around the possible outcomes of an agent's actions.  It is a result of randomness in the operating environment, mismatch between training and test conditions, and the stochasticity of the policy. \emph{Risk-sensitive} policies, or those that consider a quantity other than the mean reward over the distribution of possible outcomes, offer the potential for added robustness under uncertain and dynamic conditions. There is an evolving landscape of algorithmic paradigms for handling risk in RL, including distributional methods \citep{BeDaMu17,DaOsSiMu18,DaRoBeMu18, d4pg, fei21} and constraint-based approaches adapted from optimal control~\citep{BHATNAGAR2010760, AcHeTaAb17, Chow2019, RaAcAm19, TeMaMa19, ZhFaYaWa20, zhang20}.  Within this landscape, learning approaches that optimize distributional measures offer the ability to express design preferences over the full distribution of potential outcomes.  Constraint-based approaches allow the level of \emph{average} cost incurred by an agent to be adjusted, typically through the use of dual methods.

In the following, we introduce a novel method for estimating the policy gradient of a broad class of risk-sensitive objectives, applicable in both the unconstrained and constrained settings.  The approach allows agents to be trained with different risk profiles, based on full episode outcomes.  It can be used to mimic general human decision-making \citep{TvKa92}, but is found to be most effective when implementing one particular human learning strategy: emphasizing improvement on tasks where one is deficient.  We elucidate the mechanisms behind performance gains associated with this strategy and evaluate its effectiveness for several stochastic continuous control problems.

\section{Related Work}\label{related}

The presence of stochasticity in Markov Decision Processes (MDPs) can lead to variability in agent outcomes.  Randomness can come from different sources- for instance the initial state of the environment, noise in transition dynamics, and sampling from a stochastic policy.  Distributional RL methods allow for consideration of outcome variability in formulating policies, and have primarily been explored from a value-based perspective.  For example, Q-value distributions have been explicitly modeled through categorical techniques \citep{BeDaMu17} and quantile regression \citep{DaOsSiMu18}, leading to improved value predictions and overall performance. Recent works utilize distribution modeling in the actor-critic setting to enable application to continuous action spaces, again demonstrating improved performance over baseline approaches \citep{d4pg, MaXiZh20, ZhBeSiWaKo21, DuGuLiReSuCh21}. In value-based approaches, risk-sensitivity criteria are applied at run time as a nonlinear warping of the estimated Q-value distribution.

Policy optimization with a distributional objective offers additional promise for risk-sensitive RL. Some existing methods are limited to a specific class of learning objective, such as the set of concave risk measures that permit a globally-optimal solution \citep{ZhFaYaWa20, taCoherent2015}.  Others allow a broader class of measures but are more restrictive in the class of policies that can be represented \citep{LAJiFuMaSz16, UMD2018, prahsanth_fu_book, jaimungal22}. Our contribution is a risk-sensitive policy gradient approach that offers both significant flexibility in the choice of learning objective and the ability to learn policies parameterized by a deep neural network.  The unconstrained version of the algorithm resembles Proximal Policy Optimization (PPO; \citet{ScWoDhRaKl17}) and is similarly widely applicable.


Various measures have been considered in the context of risk-sensitive RL, including exponential utility \citep{Pr64}, percentile performance criteria \citep{WuLi99}, value-at-risk \citep{Le45}, conditional value-at-risk \citep{RoUr00}, and prospect theory \citep{KaTv79}. In this work, we consider a class of risk-sensitivity measures motivated by Cumulative Prospect Theory (CPT) \citep{TvKa92}.  CPT uniquely models two key aspects of human decision-making: (1) a utility function $u$, computed relative to a reference point and inducing more risk-averse behavior in the presence of gains than losses as well as (2) a weight function $w$ that prioritizes outlying events. Specific forms of $u$ and $w$ are given in \cite{TvKa92}; while we evaluate these specific choices we also consider the much broader class of measures possible with different choices.  In this work, we typically take $u$ to be the reward provided by the environment and evaluate the effect of adjusting $w$.

Constrained reinforcement learning addresses safety concerns \cite{JMLR:garcia15a} explicitly via methods including Lagrangian constraints \cite{BHATNAGAR2010760,RaAcAm19, TeMaMa19, zhang20, UMD2018, PaSaChLuCaRi2019} and constraint coefficients \cite{AcHeTaAb17}. In this work, we pair our risk-sensitive policy gradient estimate with Reward Constrained Policy Optimization (RCPO; \citet{TeMaMa19}) in order to achieve higher accumulation of non-cost rewards than possible with a risk-neutral objective.


\section{Risk-Sensitive Policy Optimization}\label{math}

In this section we formalize the class of distributional objectives to be considered, derive a sampling-based approximation of its policy gradient, enact variance reduction and regularization on this estimate, and use the result to produce practical learning algorithms for both the unconstrained and constrained settings.

\subsection{Preliminaries: Problem and Notation}\label{prelim}
Standard deep reinforcement learning seeks to maximize the expected reward of an agent acting in an MDP. That is, it maximizes the objective
\begin{equation}
J(\theta) =  E_{\tau \sim p_\theta(\tau)} \bigg[ \sum_t r(\mathbf{s}_t, \mathbf{a}_t) \bigg].
\label{rl}
\end{equation}
Here $p_\theta(\tau)$ is the distribution over trajectories $\tau \equiv \mathbf{s}_1, \mathbf{a}_1, \ldots,  \mathbf{s}_T, \mathbf{a}_T $ induced by a policy parameterized by $\theta$;  $\mathbf{s}_t$, $\mathbf{a}_t$, and $r(\mathbf{s}_t, \mathbf{a}_t)$ denote the state, action, and reward at time $t$, respectively. To 
allow a mapping from reward to utility and outcomes to be weighed based on their relative quality, we instead consider the risk-sensitive objective
\begin{equation}
J_{rs}(\theta) =  \int_{-\infty}^{+\infty} u(r(\tau)) \frac{d}{dr(\tau)}\bigg(w(P_{\theta}(r(\tau))\bigg)dr(\tau),
\label{cdf-rl}
\end{equation}
where $u(r(\tau))$ is the utility associated with full-trajectory reward $r(\tau) \equiv \sum_tr(\mathbf{s}_{t}, \mathbf{a}_t)$ and $w$ is a piecewise differentiable weighting function of the CDF of $r(
\tau)$; $ P_{\theta}(r(\tau)) = \int_{-\infty}^{r(\tau)}p_{\theta}(r')dr'$. We assume that the temporal allocation of utility-- whether mapped from reward throughout an episode or provided only at the end-- is additionally specified.

Equation \ref{cdf-rl} is inspired by CPT~\citep{TvKa92}, which includes a pair of integrals of this form.  It was chosen for its generality; by using different utility functions $u$ and/or weight functions $w$ one may represent all of the risk measures mentioned above, all of the risk measures evaluated by \citet{DaOsSiMu18}, and many more.  The form (\ref{cdf-rl}) reduces to (\ref{rl}) when $u$ and $w$ are both the identity mapping.  It accommodates ``cutoff'' risk measures (including CVaR) through the use of piecewise weight functions.  While designed for the episodic setting, the objective (\ref{cdf-rl}) may be considered for infinite horizons through the use of appropriately long windows.

\subsection{Risk-Sensitive Policy Gradient}

To optimize the objective (\ref{cdf-rl}), we first derive an approximation to its gradient with respect to the policy parameters $\theta$. Working toward a representation that can be sampled, we assert the independence of the reward on $\theta$ and use the chain rule to write
\begin{equation}
\small
{\nabla_\theta J_{rs}(\theta) = \int_{-\infty}^{\infty} u(r(\tau)) \frac{d}{dr(\tau)}\bigg( w'(P_\theta(r(\tau))) \nabla_\theta P_\theta(r(\tau)) \bigg)dr(\tau),}
 \label{cr}
\end{equation}
where $w'$ is the derivative of $w$ with respect to $P_\theta(r(\tau))$. The gradient of the CDF may be written as:
\begin{equation}
\begin{split}
\nabla_\theta P_\theta(r(\tau)) &= \nabla_\theta \int_{-\infty}^{r(\tau)} p_\theta(r')dr'\\
&= \nabla_\theta \int_{\tau'} H(r(\tau) - r(\tau')) p_\theta(\tau')d\tau'  \\
&= \int_{\tau'} H(r(\tau) - r(\tau')) \nabla_\theta p_\theta(\tau')d\tau' \\
&= \int_{\tau'} H(r(\tau) - r(\tau')) p_\theta(\tau') \nabla_\theta \log p_\theta(\tau') d\tau'. 
\label{grad-cdf1}
\end{split}
\end{equation}
Here we have used the integral representation of $P_\theta(r(\tau))$, the Heaviside step function $H$ to select all trajectories with total reward $\le r(\tau)$, and the independence of reward on $\theta$.  In the following, we also use the complementary expression
\begin{equation}
\begin{split}
&\nabla_\theta P_\theta(r(\tau))  = \nabla_\theta \bigg(1-\int_{r(\tau)}^\infty p_\theta(r')dr'\bigg)\\
&= -\int_{\tau'} H(r(\tau') - r(\tau)) p_\theta(\tau') \nabla_\theta \log p_\theta(\tau') d\tau'. 
\label{grad-cdf2}
\end{split}
\end{equation}
Either form, or a combination of the two, may be substituted into (\ref{cr}) and the result sampled over $N$ trajectories by first ordering trajectories $i = 1 \ldots N$ by increasing reward $r(\tau)$. Implicitly, this assumes that the collected full-episode rewards are representative of the true distribution.  Then
\begin{equation}
\begin{split}
    \nabla_\theta J_{rs}(\theta) 
    \approx \sum_{i=1}^N u(r(\tau_i))
    \bigg(&w'\bigg(\frac{i}{N}\bigg)\nabla_\theta P_\theta(r(\tau_i)) \\
    - &w'\bigg(\frac{i-1}{N}\bigg)\nabla_\theta P_\theta(r(\tau_{i-1})) \bigg),
    \label{sample1}
\end{split}
\end{equation}
where we have discretized $d/dr(\tau)$ and the term $w'(0)\nabla_\theta P_\theta(r(\tau_0)) \equiv 0$. Such ordering produces an asymptotically consistent estimate of the CPT value \cite{LAJiFuMaSz16}. $\nabla_\theta P_\theta(r(\tau_i))$ may be sampled in one of two ways, based on either (\ref{grad-cdf1}) or (\ref{grad-cdf2}):
\begin{equation}
\begin{split}
\nabla_\theta P_\theta(r(\tau_i))
&\approx \frac{1}{N}\sum_{j=1}^i\sum_{t=1}^{T_j}\nabla_\theta \log \pi_\theta(\mathbf{a}_{j,t}|\mathbf{s}_{j,t}) \\
&\approx -\frac{1}{N}\sum_{j=i+1}^N\sum_{t=1}^{T_j}\nabla_\theta \log \pi_\theta(\mathbf{a}_{j,t}|\mathbf{s}_{j,t}).\label{sample2}
\end{split}
\end{equation}
The expression (\ref{sample1}) may be used to train a policy that optimizes the distributional objective (\ref{cdf-rl}) in a manner similar to REINFORCE ~\citep{Wi92}.  

\subsection{Variance Reduction and Regularization}\label{var_red_main}

Reducing the variance of sample-based gradient estimates enables faster learning.  Here we take several steps to reduce the variance of (\ref{sample1}), similar to what has been done with the policy gradient estimate of REINFORCE ~\cite{Wi92}. First, note that cross-trajectory terms of the form  $f(\tau_i, \mathbf{a}_{j,t},\mathbf{s}_{j,t}) \equiv u(r(\tau_i))\nabla_\theta\log\pi_\theta(\mathbf{a}_{j,t}|\mathbf{s}_{j,t})$, while nonzero, do not contribute to the gradient estimate in expectation when $i \ne j$.  A proof of this assertion (relevant to our approach but not REINFORCE), is given in Appendix A.1.  Using (\ref{grad-cdf1}) for the first term of (\ref{sample1}) and (\ref{grad-cdf2}) for the second allows us to write
\begin{equation}
    \begin{split}
    \nabla_\theta J_{rs}(\theta) \approx 
    \sum_{i=1}^N &u(r(\tau_i))\:*\\
    \bigg( &w'\bigg(\frac{i}{N}\bigg)\frac{1}{N} \sum_{j=1}^i\sum_{t=1}^{T_j}\nabla_\theta \log \pi_\theta(\mathbf{a}_{j,t}|\mathbf{s}_{j,t})\\
    + &w'\bigg(\frac{i-1}{N}\bigg) \frac{1}{N} \sum_{j=i}^N \sum_{t=1}^{T_j}\nabla_\theta \log \pi_\theta(\mathbf{a}_{j,t}|\mathbf{s}_{j,t})\bigg).
\end{split}\label{eq10}
\end{equation}
Removing cross-trajectory terms gives
\begin{equation}
\begin{split}
&\nabla_\theta J_{rs}(\theta) \approx \frac{1}{N} \sum_{i=1}^N u(r(\tau_i))\:* \\
&\bigg( w'\bigg(\frac{i}{N}\bigg) + w'\bigg(\frac{i-1}{N}\bigg)\bigg)
    \sum_{t=1}^{T_i}\nabla_\theta \log \pi_\theta(\mathbf{a}_{i,t}|\mathbf{s}_{i,t}).
\label{gradJ}
\end{split}
\end{equation}
  Note that the weight coefficients $(w'(\frac{i}{N}) + w'(\frac{i-1}{N}))$ should be normalized over each batch. The expression (\ref{gradJ}) is equal to (\ref{sample1}) in expectation, but with reduced variance (justification in Appendix A.1).  It has a clear intuition -- trajectories are assigned utilities based on their rewards and their contributions to the policy gradient are scaled by the derivative of the weight function, just as they are in CPT \citep{TvKa92}.  

Standard variance reduction techniques may be applied to this simplified form. Without further assumption or introduction of additional bias, a static baseline $b$ may be employed:
\begin{equation}
\begin{split}
&\nabla_\theta J_{rs}(\theta) \approx \frac{1}{N} \sum_{i=1}^N \bigg(u(r(\tau_i)) - b \bigg)\:* \\
&\bigg(w'\bigg(\frac{i}{N}\bigg) +  w'\bigg(\frac{i-1}{N} \bigg)\bigg) \sum_{t=1}^{T_i}\nabla_\theta \log \pi_\theta(\mathbf{a}_{i,t}| \mathbf{s}_{i,t}).
\label{grad_baseline}
\end{split}
\end{equation}
Justification for this assertion is given in Appendix A.2.  Learning may be further expedited by considering utilities on a per-step basis.  Given our assumptions that utility is a function of only reward and that its temporal allocation is given with the objective, we may further reduce the variance of (\ref{gradJ}) through the incorporation of utility-to-go and a state-dependent baseline $V_\phi(\mathbf{s}_{i,t})$:
\begin{equation}
\begin{split}
&\nabla_\theta J_{rs}(\theta) \approx \frac{1}{N} \sum_{i=1}^N \bigg[w'\bigg(\frac{i}{N}\bigg) + w'\bigg(\frac{i-1}{N}\bigg)\bigg]\:* \\
&\sum_{t=1}^{T_i}\nabla_\theta \log \pi_\theta(\mathbf{a}_{i,t}| \mathbf{s}_{i,t})\bigg[ \sum_{t'=t}^{T_i} u(\mathbf{s}_{i, t'}, \mathbf{a}_{i, t'}) - V_\phi(\mathbf{s}_{i, t}) \bigg]
\label{rtg}
\end{split}
\end{equation}
Here $u(\mathbf{s}_{i, t'}, \mathbf{a}_{i, t'})$ is the per-step utility. The value function $V_\phi(\mathbf{s}_{i,t})$ is parameterized by $\phi$ and may be trained via regression to minimize
\begin{equation}
    \mathcal{L}(\phi) = \sum_{i, t} \bigg( V_\phi(\mathbf{s}_{i, t}) - \sum_{t'=t}^{T_i}u(\mathbf{s}_{i, t'}, \mathbf{a}_{i, t'})  \bigg)^2.
\end{equation}
A standard argument, similar to the approach taken in \citep{spinning_up}, can be used to show that the incorporation of utility-to-go does not change the expected value of (\ref{gradJ}).  The use of a state-dependent baseline also does not introduce additional bias (see Appendix A.2).

Finally, discount factors, bootstrapping, and trust regions may be used to provide additional variance reduction and regularization (see Appendix A.2).  These measures may introduce additional bias to the policy gradient estimate, but typically lead to more sample-efficient learning. In our experiments, we evaluate the use of generalized advantage estimation (GAE; \cite{ScMoLeJoAb16}) based on utility-to-go as well as clipping-based regularization similar to Proximal Policy Optimization \cite{ScWoDhRaKl17}.  Incorporating these in our policy gradient estimate yields
\begin{equation}
\small
\begin{split}
\nabla_\theta J_{rs}(\theta) \approx \frac{1}{N} &\sum_{i=1}^N  \bigg(w'\bigg(\frac{i}{N}\bigg) +w'\bigg(\frac{i-1}{N}\bigg) \bigg)\:* \\
&\sum_{t=1}^{T_i}\nabla_\theta L_\text{clip}\bigg(\log \pi_\theta(\mathbf{a}_{i,t}| \mathbf{s}_{i,t}), A_u^\pi(\mathbf{s}_{i,t}, \mathbf{a}_{i,t})\bigg),
\label{tr}
\end{split}
\end{equation}
where $A_u^\pi(\mathbf{s}_{i,t}, \mathbf{a}_{i,t})$ is the standard GAE except with per-step utilities in place of rewards.  Trust regions are implemented similarly to PPO, pessimistically clipping policy updates to be within a multiplicative factor of $1 \pm \epsilon$ of the existing policy:
\begin{equation}
\small
\begin{split}
&L_\text{clip} = \min\bigg(\log \pi_\theta(\mathbf{a}_{i,t}| \mathbf{s}_{i,t})A_u^\pi(\mathbf{s}_{i,t}, \mathbf{a}_{i,t}),\\ 
&\log\bigg(\text{clip} \bigg(\frac{\pi_\theta(\mathbf{a}_{i,t}| \mathbf{s}_{i,t})}{\pi_{\theta_\text{old}}(\mathbf{a}_{i,t}| \mathbf{s}_{i,t})}, 1 \pm \epsilon\bigg)\pi_{\theta_\text{old}}(\mathbf{a}_{i,t}| \mathbf{s}_{i,t})\bigg)A_u^\pi(\mathbf{s}_{i,t}, \mathbf{a}_{i,t})\bigg).
\label{clip}
\end{split}
\end{equation}
The form of this clipping differs slightly from that of PPO, due to the difference between our policy gradient and the gradient of the objective used by PPO.  In practice, we found our form to consistently perform better (compare blue and green traces in Figures 3 and 4 and see \citet{copg} for further discussion).  As in PPO, our clipping can be used to perform multiple policy updates with the same batch of data, significantly improving sample efficiency.  When following this route, we apply early stopping based on the Kullback-Leibler divergence ($D_{\text{KL}}$) between old and new policies, as in \cite{RaAcAm19}. 

Finally we note that, in the case of policy distributions with infinite support, the ``clipped action policy gradient'' correction of \cite{pmlr-v80-fujita18a} should be used to properly handle finite control bounds.  This was done for all methods (baselines included) in our experiments.

\subsection{Application in Constrained Settings}
The policy gradient estimate derived above may also be used to maximize a risk-sensitive objective subject to a constraint.  Constrained Markov Decision Processes (CMDPs) have positive rewards $r(\mathbf{s}, \mathbf{a})$ and costs $c(\mathbf{s}, \mathbf{a})$ defined for each time step, as well as an overall constraint $C(\tau) = F(c(\mathbf{s}_1, \mathbf{a}_1), \ldots, c(\mathbf{s}_T, \mathbf{a}_T))$ defined over the whole trajectory.  The associated learning problem is to find
\begin{equation}
    \max_\theta J_R(\theta) \text{ s.t. } J_C(\theta) \le d,
\end{equation}
where $J_R(\theta)$ is the objective based on positive reward, $J_C(\theta) = E_{\tau \sim p_\theta(\tau)}C(\tau)$, and $d$ is a fixed threshold. Reward Constrained Policy Optimization (RCPO; \cite{TeMaMa19}) is a recent method for learning CMDP policies.  RCPO learns to scale the weight of cost terms relative to rewards by solving a dual problem, treating the scaling factor as a Lagrange Multiplier. 

A constrained, risk-sensitive learner may be formulated by using our risk-sensitive policy gradient estimate in a formulation resembling RCPO. Our motivation for doing this is twofold.  First, it allows for an acceptable cost limit to be set ahead of time, removing the need to manually tune the relative weights of positive and negative reward terms.  Second, it leverages the observed tendency of our ``pessimistic'' agents to accumulate lower costs at similar or higher positive reward levels than standard approaches.
    
\subsection{Learning Algorithm}
The above policy gradient estimate may be used to maximize distributional objectives of the form (\ref{cdf-rl}), with or without constraints.  The constrained method, Constrained, Risk-Sensitive Proximal (CRiSP) policy optimization, is given in Algorithm \ref{alg:crisp}.  Note that the Adam optimizer \citep{KiBa17} was used for all parameter sets.  To perform unconstrained learning, one would simply fix $\lambda$ and combine the positive and negative reward terms into a single function $r + c$ (thereby allowing a single value function).  The unconstrained method is given explicitly in Appendix A.3.

\begin{algorithm}[t]
 \caption{CRiSP Policy Optimization\label{alg:crisp}}
 \begin{algorithmic}[1]
\REQUIRE Policy: initial parameters $\theta_0$, learning rate $\alpha_\theta$, updates per batch $M_\theta$
\REQUIRE Value functions: initial parameters for utility, cost value functions $\phi_{u,0}$, $\phi_{c,0}$, steps per update $M_{\phi_u}$, $M_{\phi_c}$
\REQUIRE Penalty: initial value $\lambda_0 \ge 0$, learning rate $\alpha_\lambda$
\REQUIRE Stopping threshold $D_{\text{KL, stop}}$, discount factor $\gamma$
\FOR{$k = 0, 1, 2, \ldots$}
\STATE Collect set of episodes $\mathcal{D}_k = \{ \tau_i \}$ by running policy $\pi(\theta_k)$ in the environment
\STATE Update penalty $\lambda_{k+1}= \lambda_k + \alpha_\lambda (J_C(\theta) - d)$, using cost constraint $J_C(\theta)$ and limit $d$ (1 step)
\STATE Compute discounted  utilities-to-go: \\
\begin{center}
$\hat{u}(\mathbf{s}_{i, t}, \mathbf{a}_{i, t}) = \sum_{t'=t}^{T_i}\gamma^{t'-t}u(\mathbf{s}_{i, t'}, \mathbf{a}_{i, t'})$
\end{center}
\STATE Fit utility value function ($M_{\phi_u}$ steps):\\
\resizebox{0.9\hsize}{!}{$\phi_{u, k+1} = \text{arg} \min_{\phi_u} \frac{1}{\sum_iT_i} \sum_{i,t} \bigg( V_{\phi_u}(\mathbf{s}_{i, t}) - \hat{u}(\mathbf{s}_{i, t}, \mathbf{a}_{i, t}))  \bigg) ^2$}

\STATE Compute discounted cost-to-go: \\
\begin{center}
 $\hat{c}(\mathbf{s}_{i, t}, \mathbf{a}_{i, t}) = \sum_{t'=t}^{T_i}\gamma^{t'-t}c(\mathbf{s}_{i, t'}, \mathbf{a}_{i, t'})$
 \end{center}
 
\STATE Fit cost value function ($M_{\phi_c}$ steps):

\resizebox{.9\hsize}{!}{$\phi_{c, k+1} = \text{arg}\min_{\phi_c} \frac{1}{\sum_iT_i} \sum_{i,t} \bigg( V_{\phi_c}(\mathbf{s}_{i, t}) - \hat{c}(\mathbf{s}_{i, t}, \mathbf{a}_{i, t}))  \bigg) ^2$}

\STATE Update effective utility in batch: \\
\begin{center}
$u(\mathbf{s}_t, \mathbf{a}_t) \leftarrow u(\mathbf{s}_t, \mathbf{a}_t) - \lambda c(\mathbf{s}_t, \mathbf{a}_t)$
\end{center}
\STATE Update utility advantage estimates
$A_u^\pi(\mathbf{s}, \mathbf{a})$ using $V_\phi(\mathbf{s}) = V_{\phi_u}(\mathbf{s}) - \lambda V_{\phi_c}(\mathbf{s})$
\STATE Compute weight coefficients based on ordered full-episode rewards
\STATE Update policy using clipped-action policy gradient correction over $M_\theta$ steps with KL-based early stopping (threshold $D_{\text{KL, stop}}$):\\

$\hspace{10pt}\theta_{k+1} = \text{arg}\max_\theta \bigg( \frac{1}{N} \sum_{i=1}^N  \left(w'(\frac{i}{N})+w'(\frac{i-1}{N})\right)\:* $ \\
 $\hspace{20pt}\sum_{t=1}^{T_i} L_{\text{clip}}(\log \pi_\theta(\mathbf{a}_{i,t}| \mathbf{s}_{i,t}), A_u^\pi(\mathbf{s}_{i,t}, \mathbf{a}_{i,t}))\bigg)$

\ENDFOR
\end{algorithmic}
\end{algorithm}

Algorithm \ref{alg:crisp} differs from conventional methods in the requirement to collect full episodes of data in each batch. This is unnecessary if outcomes can be defined over partial episodes, an assumption that is often viable (for instance with the Atari suite \citep{BeNaVeBo13}) and matches human decision-making. 

\section{Experiments}\label{experiments}
We used the OpenAI Safety Gym \citep{RaAcAm19} to evaluate our approach.  Safety Gym is a configurable suite of continuous, multidimensional control tasks wherein different types of robots must navigate through obstacles with different dynamics to perform different tasks. By including both positive and negative reward terms, it allows evaluation of how agents handle risk and constraints.  Safety Gym is also highly stochastic: the locations of the goals and obstacles are randomized, leading to outcome variability and requiring a generalized strategy.

Safety Gym logs adverse events but does not include them in the reward function. For unconstrained experiments, we assigned each logged adverse event a fixed, negative reward.  The coefficient for cost events was learned in constrained experiments. To highlight performance variability, we focused on the most obstacle-rich (level 2) publicly available environments. Avoiding the longer compute time of the ``Doggo'' robot, we evaluated the ``Point'' and ``Car'' robots on each task (“Goal”, “Button”, and “Push”). Additional details are available in Appendix A.4.

In all experiments, we evaluated five random seeds and matched the hyperparameters used in the baselines accompanying Safety Gym as closely as possible.  The neural networks used to model both policy and value were multilayer perceptrons (MLPs), with two hidden layers of 256 units each and $\tanh$ activations.  The policy networks output the mean values of a multivariate gaussian with diagonal covariance.  Control variances were optimized but independent of state.  All variance reduction and regularization measures were used throughout (see Appendix A.5 for experimental justification).

One drawback of our approach is the additional computational expense of its sorting of full-episode rewards (30 per batch here).  In practice we found this to lead to only minor slowdowns.  On average, our method trained $13\%$ slower than PPO in unconstrained trials and $16\%$ slower in constrained trials.

\subsection{Differing Objectives}\label{diff_obj_text}
Agent performance was first explored under four different distributional objectives. In addition to expected reward and CPT (configured to match the original form of \citet{TvKa92} and as given in Appendix A.5), we optimized for pessimistic ($\eta = 0.5$) and optimistic ($\eta=-0.5$) versions of the distortion risk measure proposed in \cite{Wa2000}.  This measure is defined as $w(p) = \Phi(\Phi^{-1}(p) + \eta)$, where $\Phi$ and $\Phi^{-1}$ are the standard normal cumulative distribution function and its inverse. While  this form is convenient, the ``Pow'' metric in \cite{DaOsSiMu18} or any other set of similarly shaped $w$ curves should produce a similar effect.  Note that only the CPT objective used a non-identity mapping from reward to utility.  The four weight functions and their corresponding coefficients in (\ref{gradJ}) are shown in Figure \ref{weight_fig}.  The effects of varying $\eta$ on weight functions and coefficients are displayed more fully in Appendix A.9.

\begin{figure}
    \centering
    \includegraphics[width=0.234\textwidth]{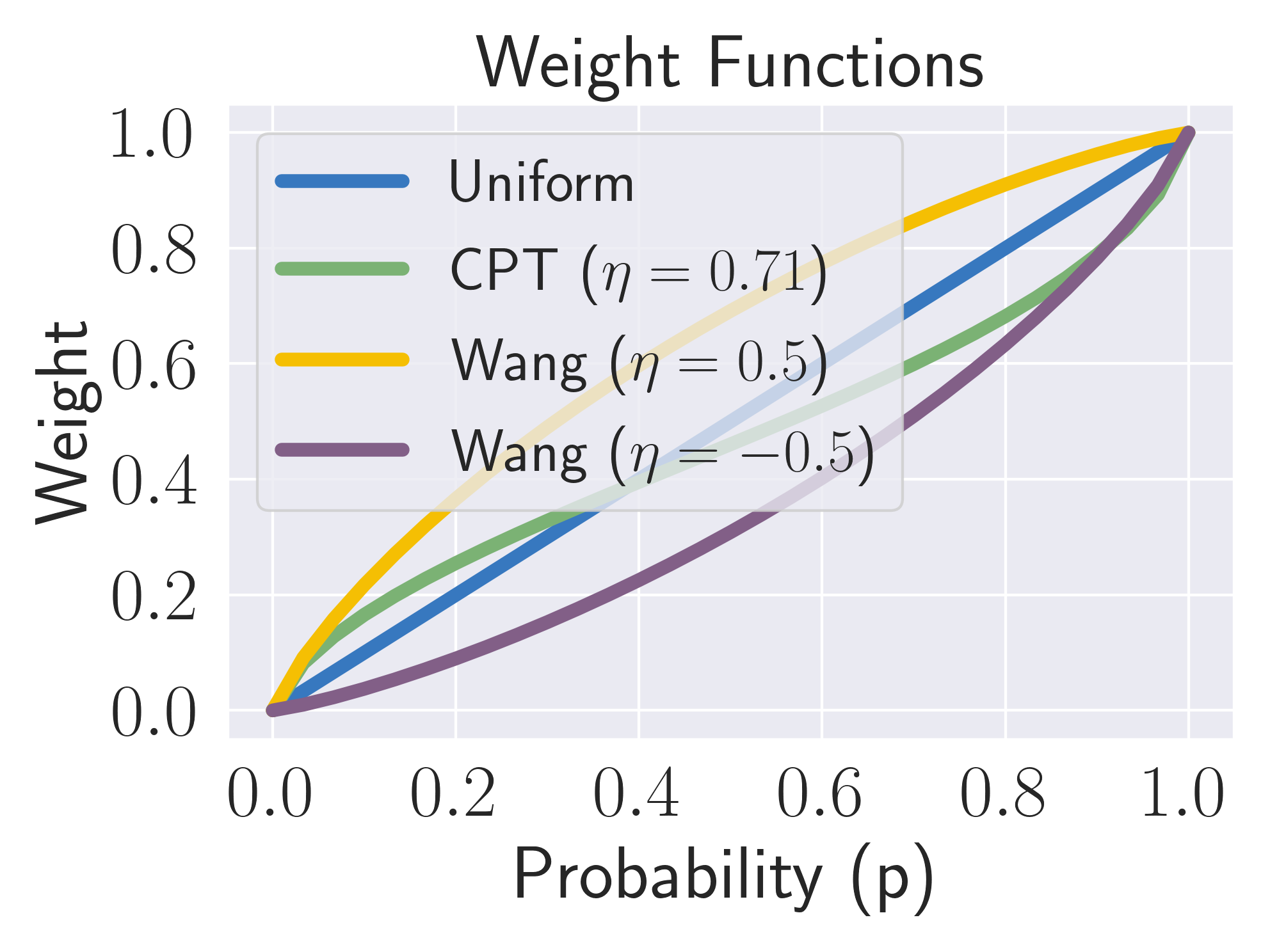}
    \includegraphics[width=0.234\textwidth]{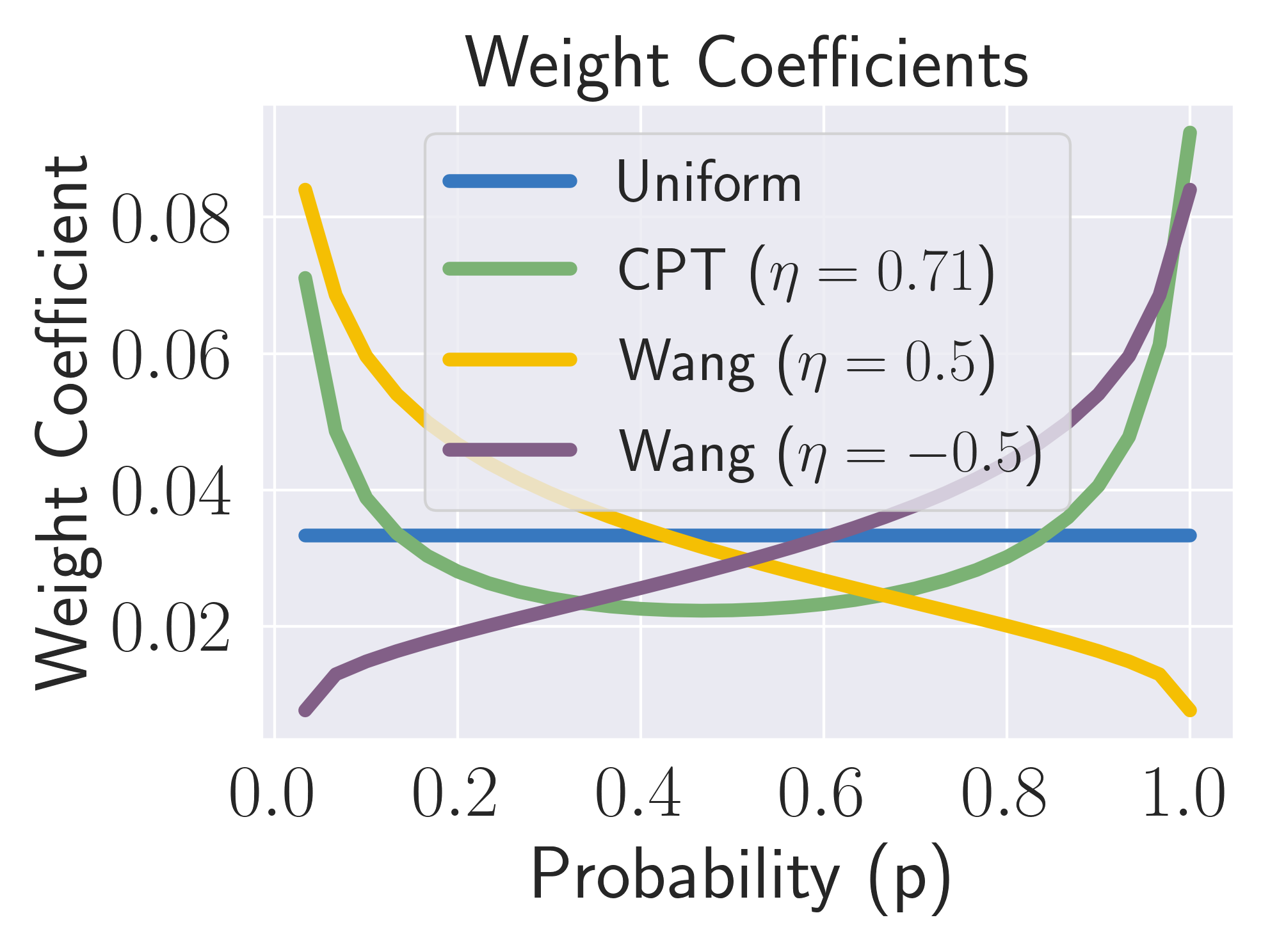}
    \caption{Example weight functions and their resulting coefficients in the policy gradient estimate~(\ref{gradJ}).}
    \label{weight_fig}
\end{figure}

\begin{figure*}[t]
    \centering
    \includegraphics[width=.245\textwidth]{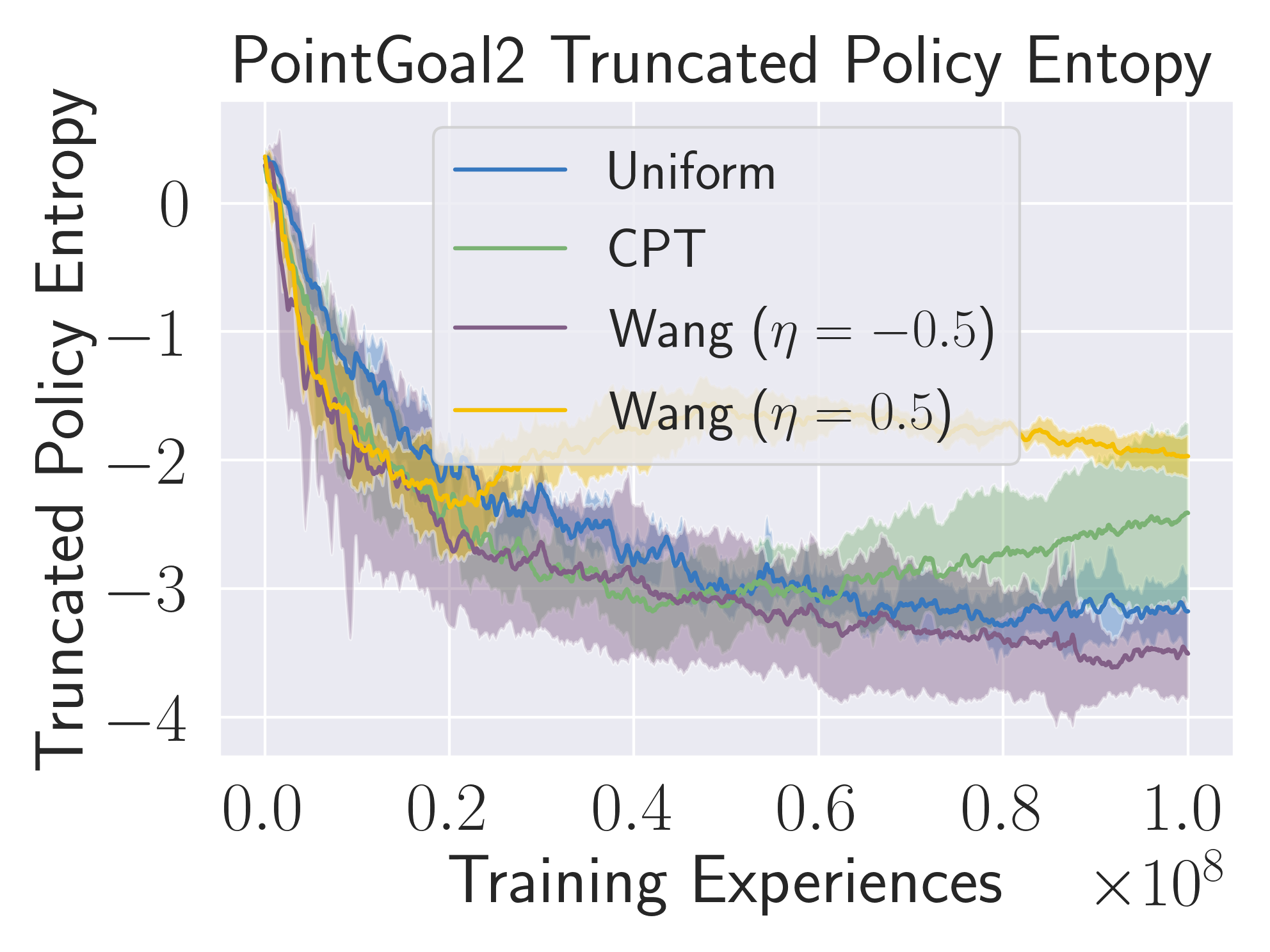}
    \includegraphics[width=.245\textwidth]{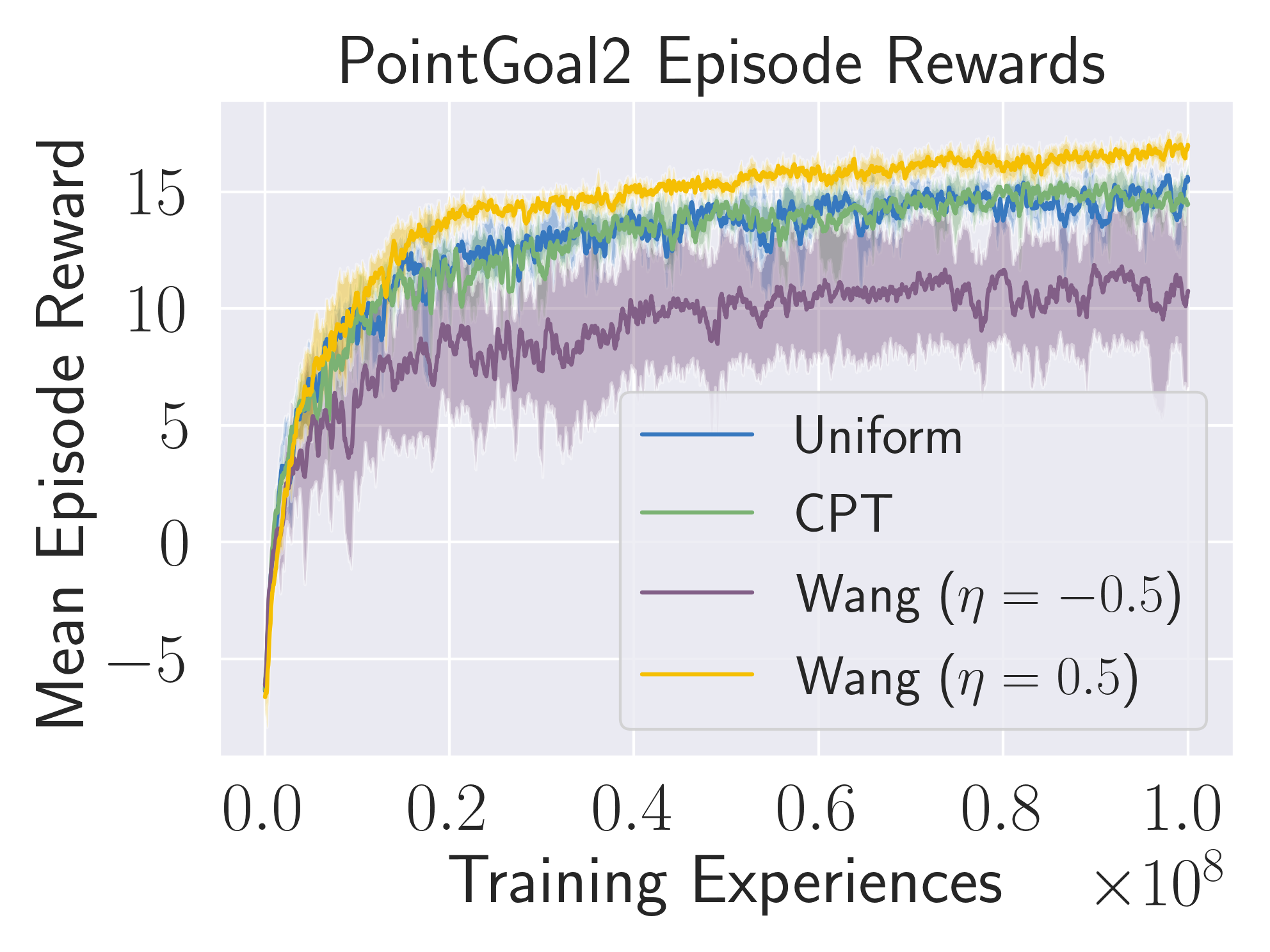}
    \includegraphics[width=.245\textwidth]{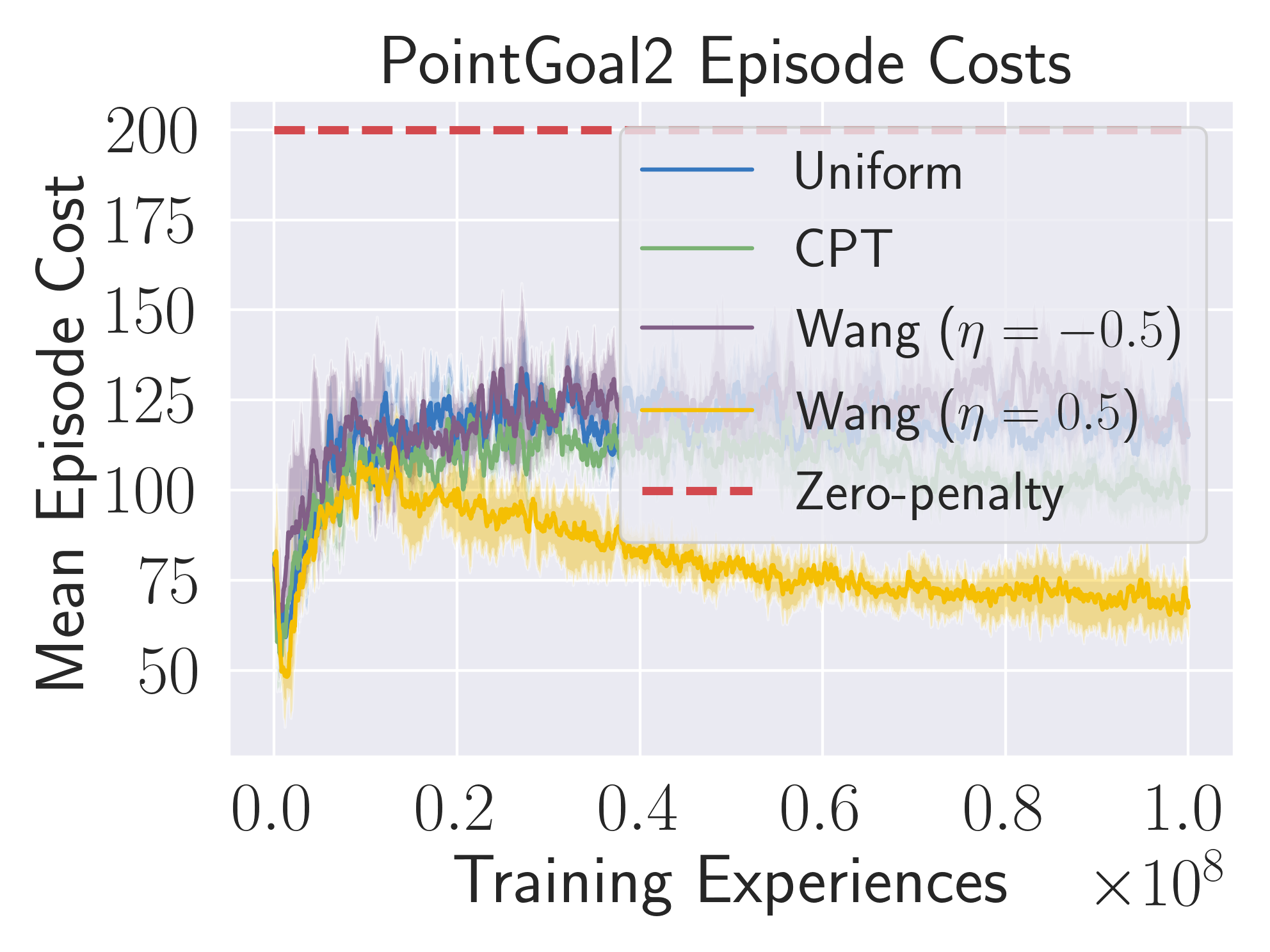}
    \includegraphics[width=.245\textwidth]{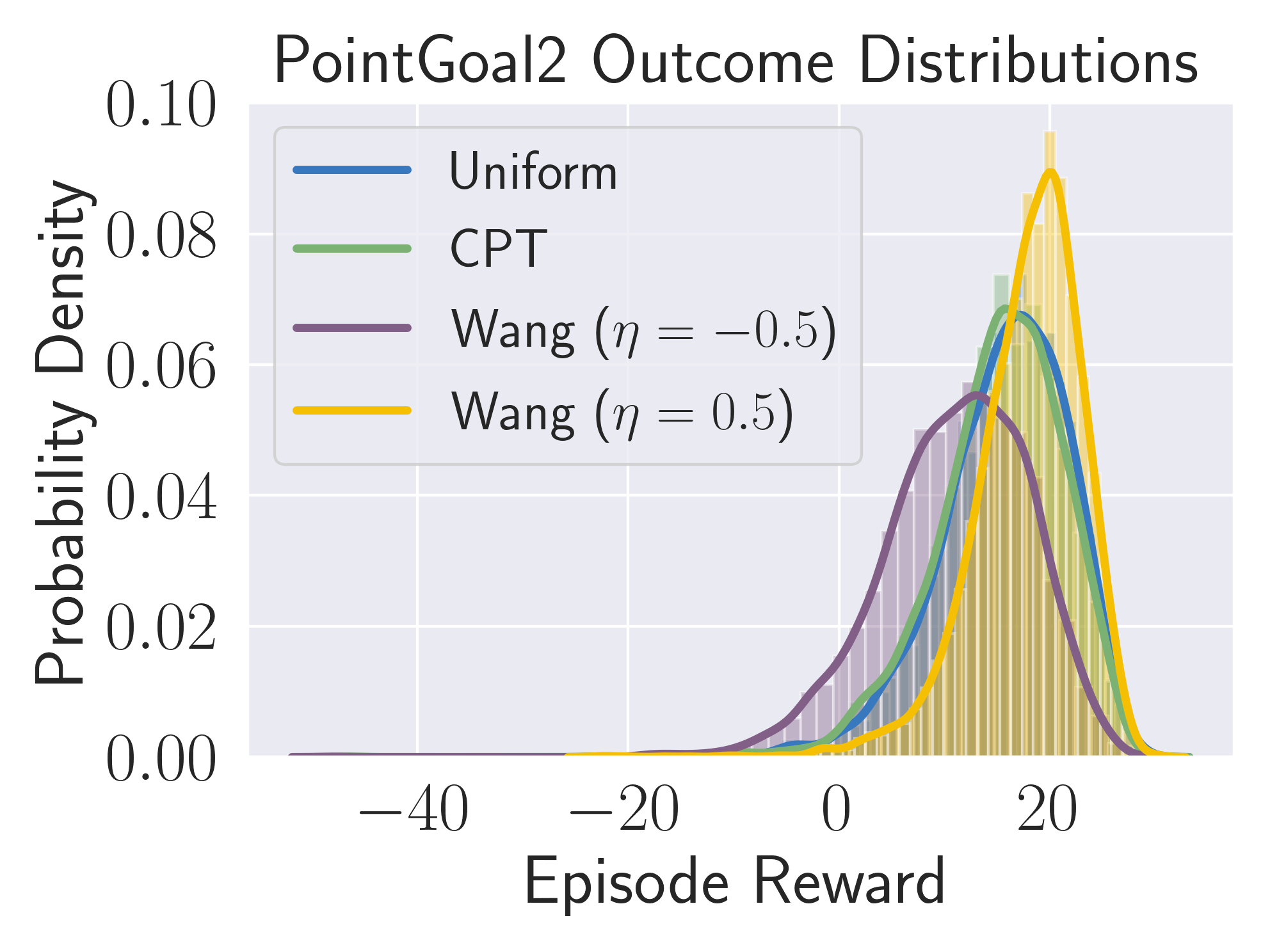}

    \includegraphics[width=.245\textwidth]{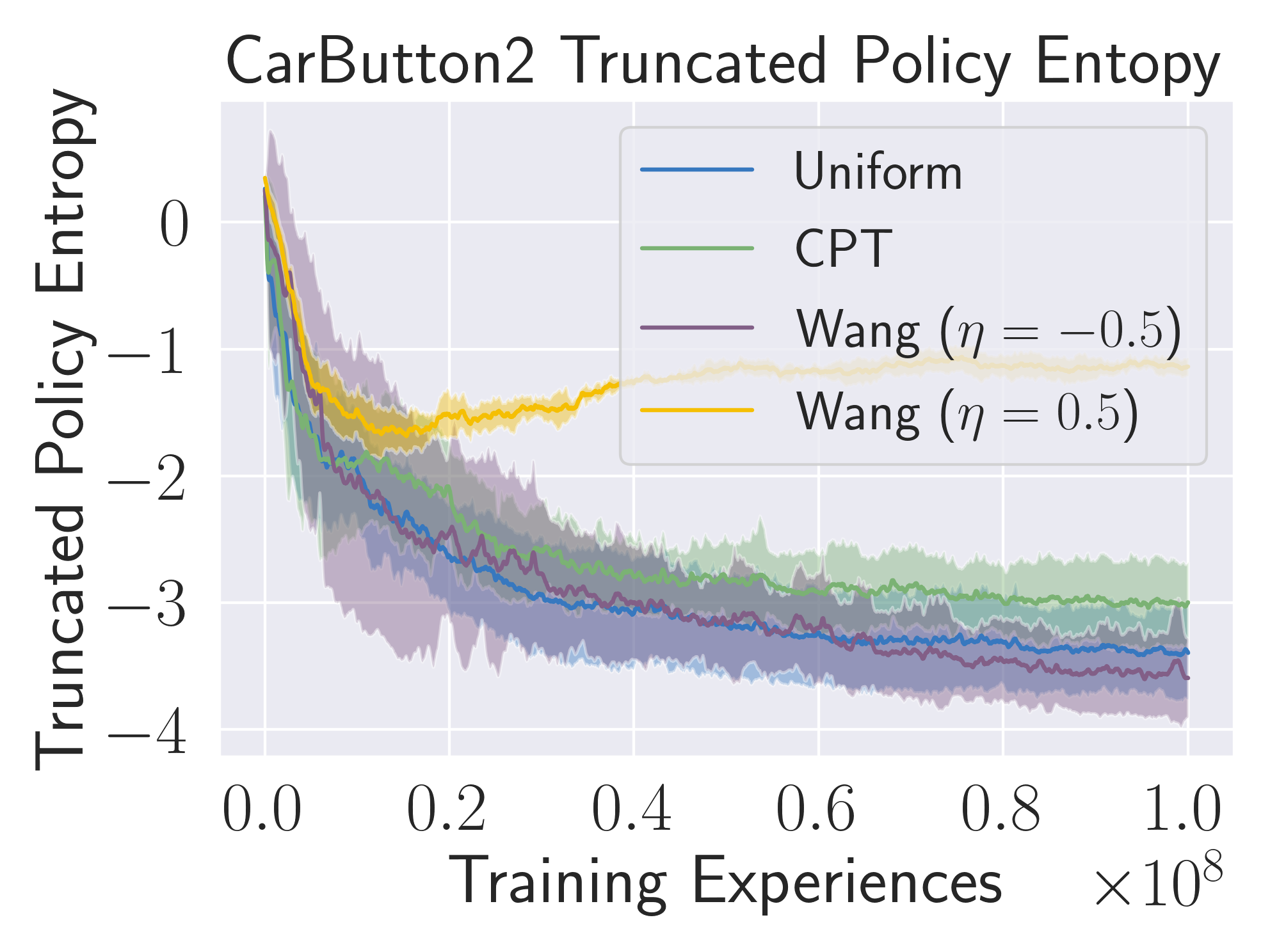}
    \includegraphics[width=.245\textwidth]{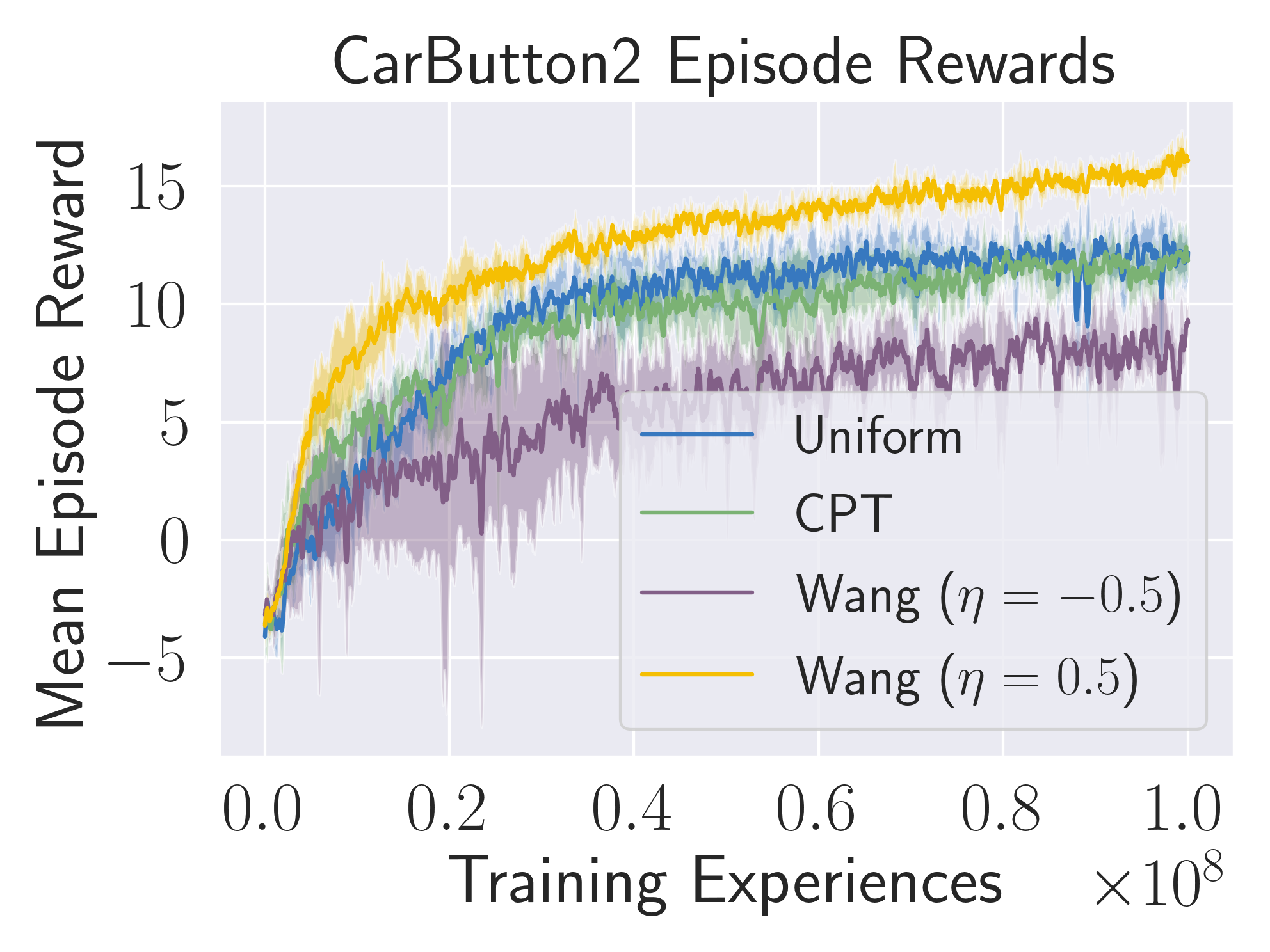}
    \includegraphics[width=.245\textwidth]{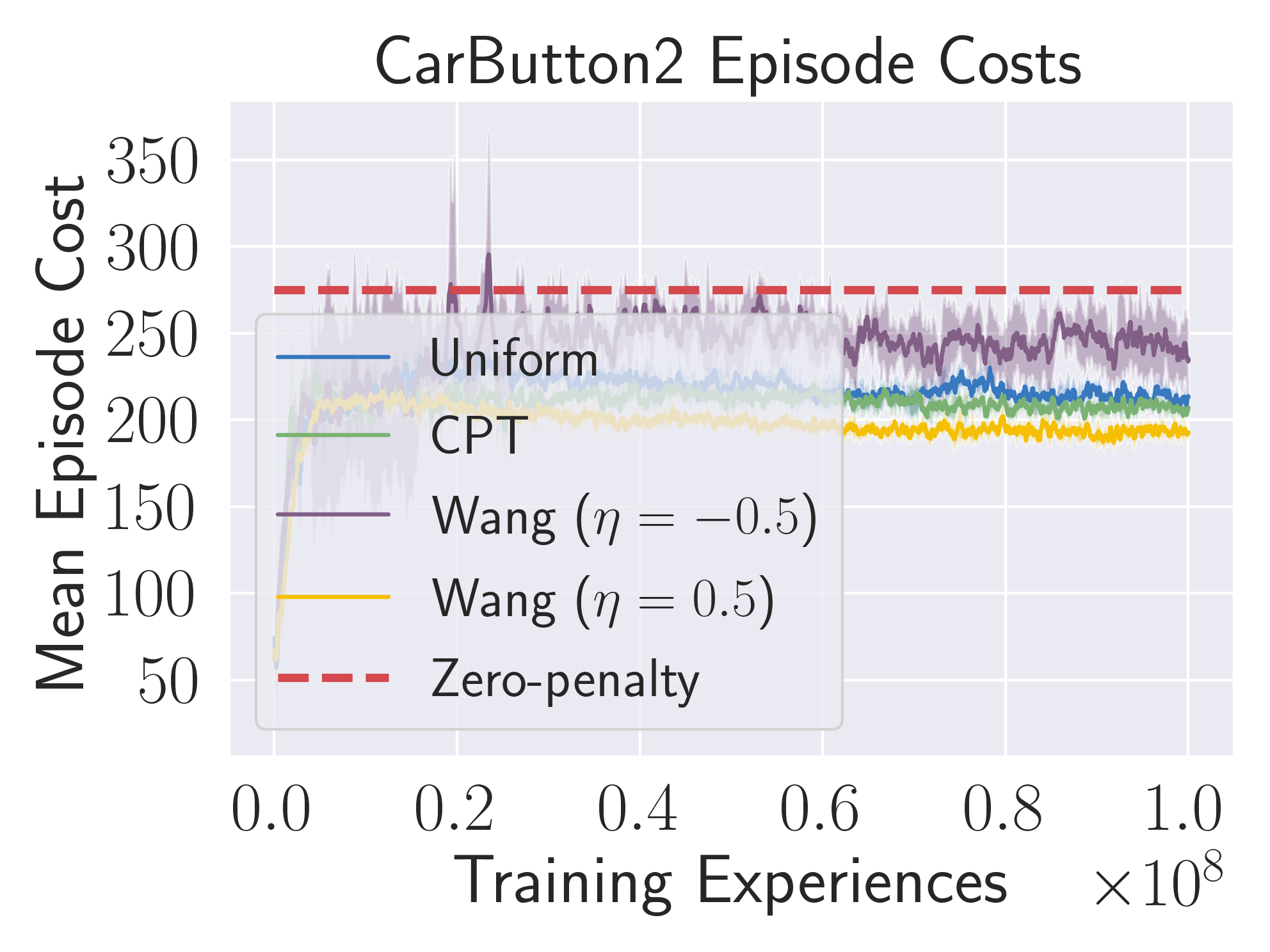}
    \includegraphics[width=.245\textwidth]{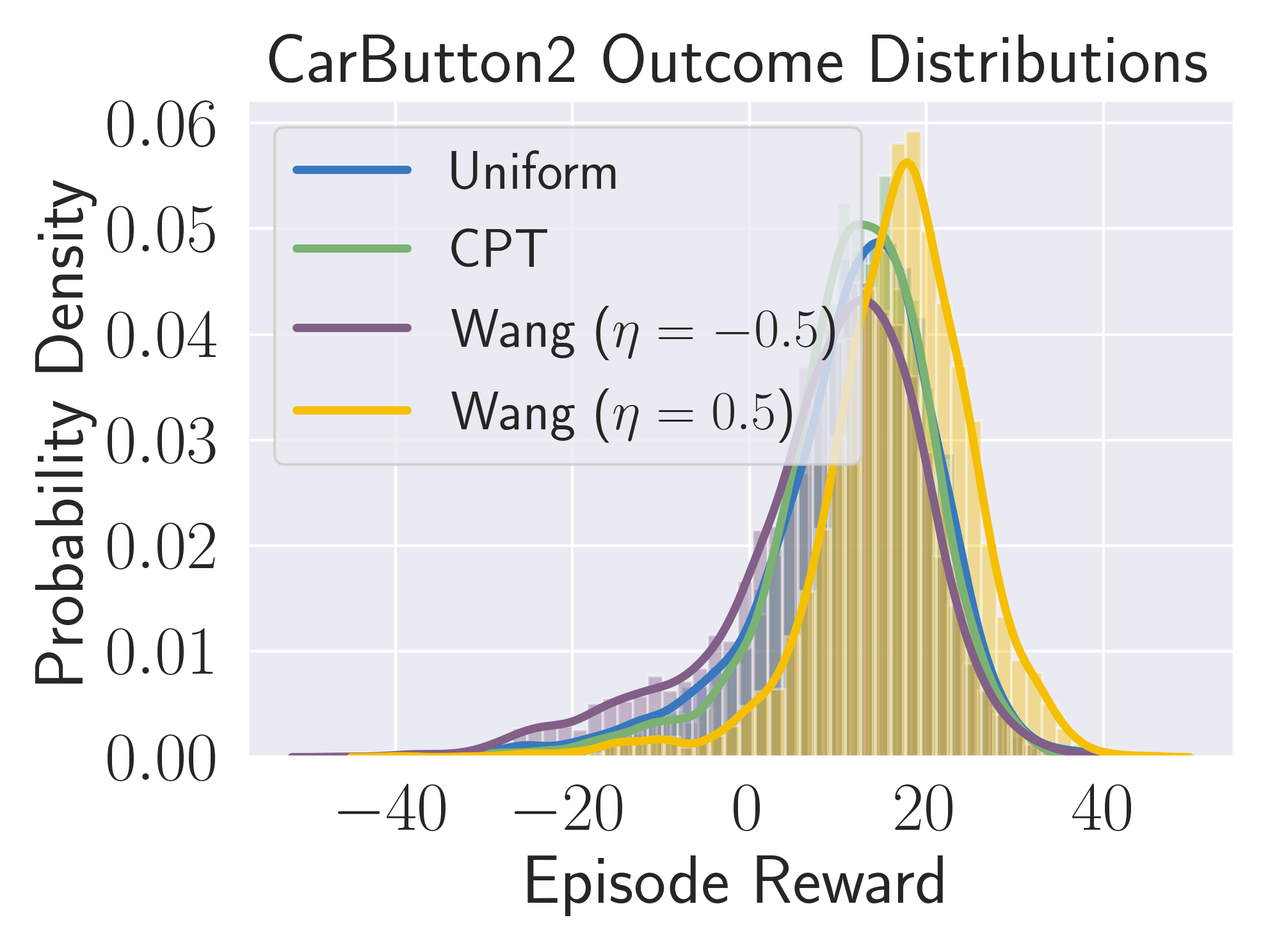}

    \includegraphics[width=.245\textwidth]{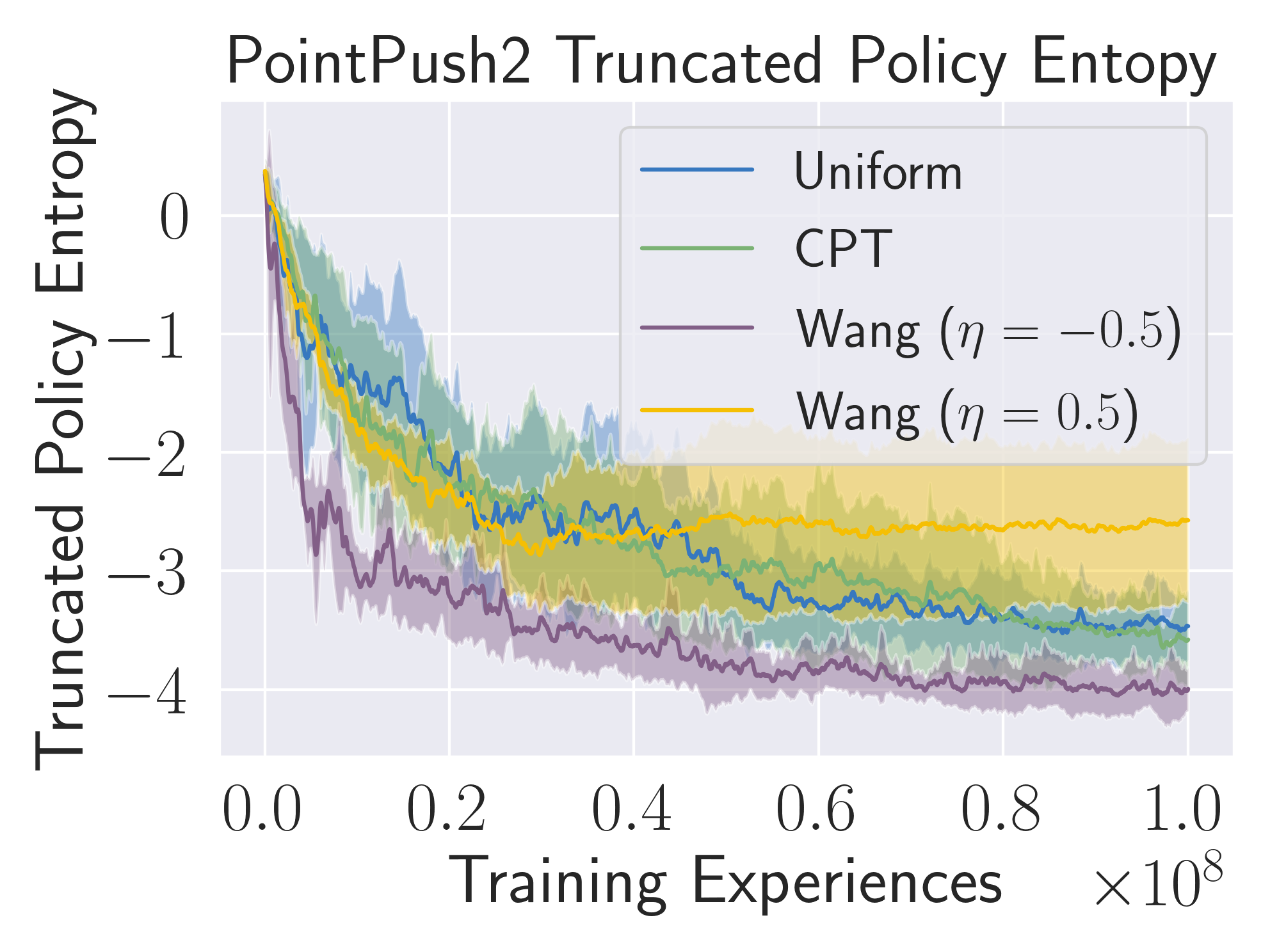}
    \includegraphics[width=.245\textwidth]{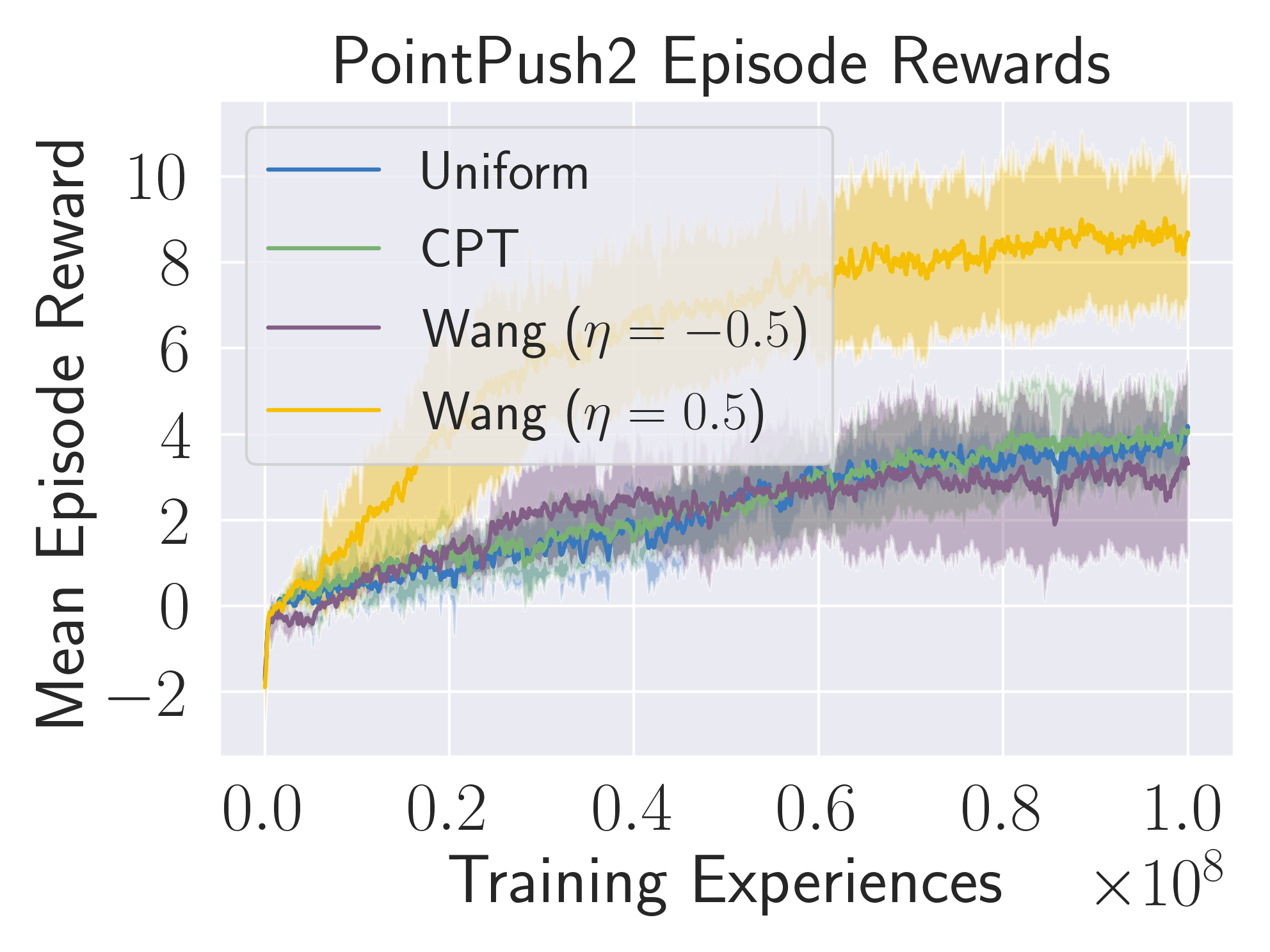}
    \includegraphics[width=.245\textwidth]{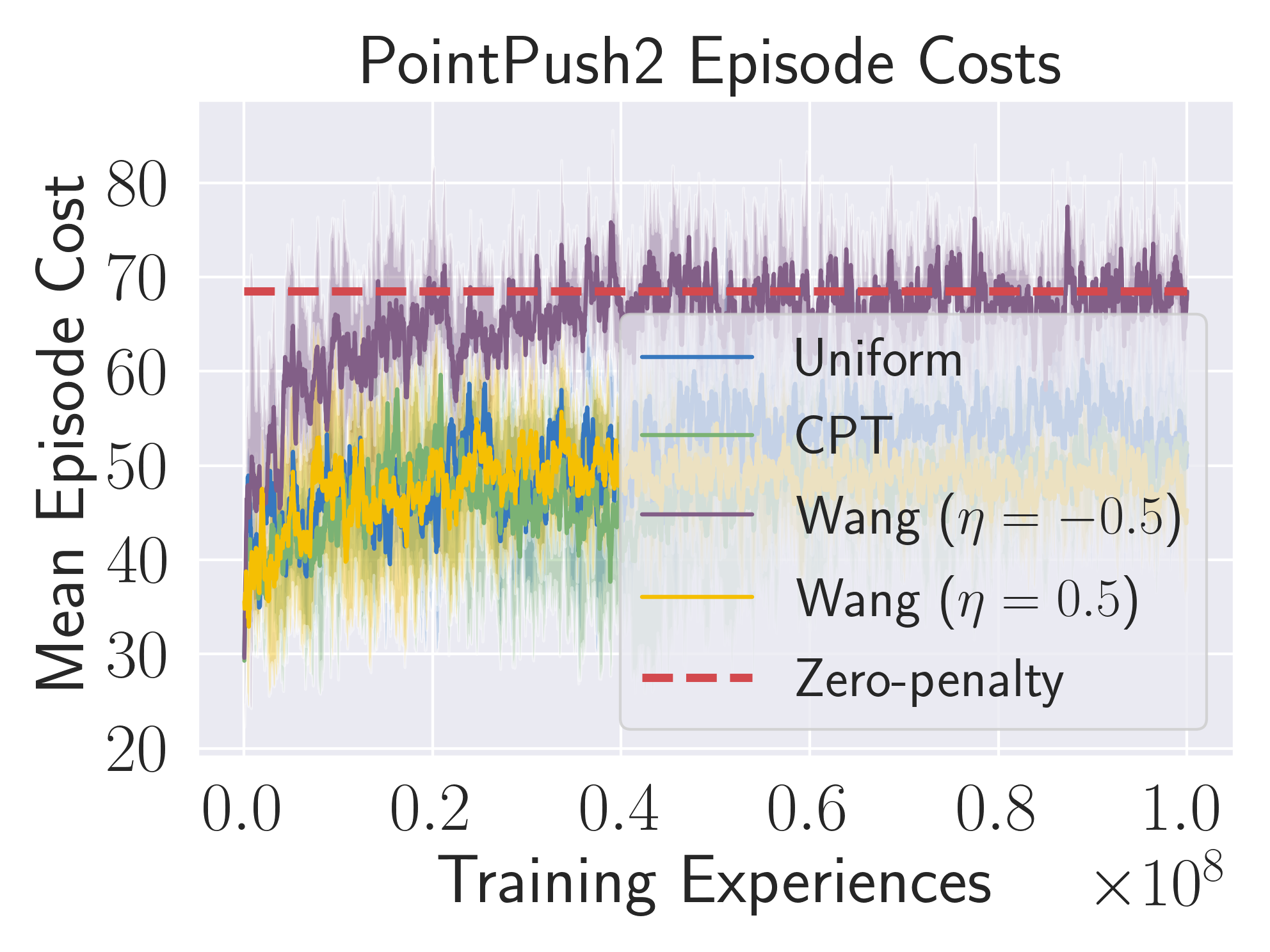}
    \includegraphics[width=.245\textwidth]{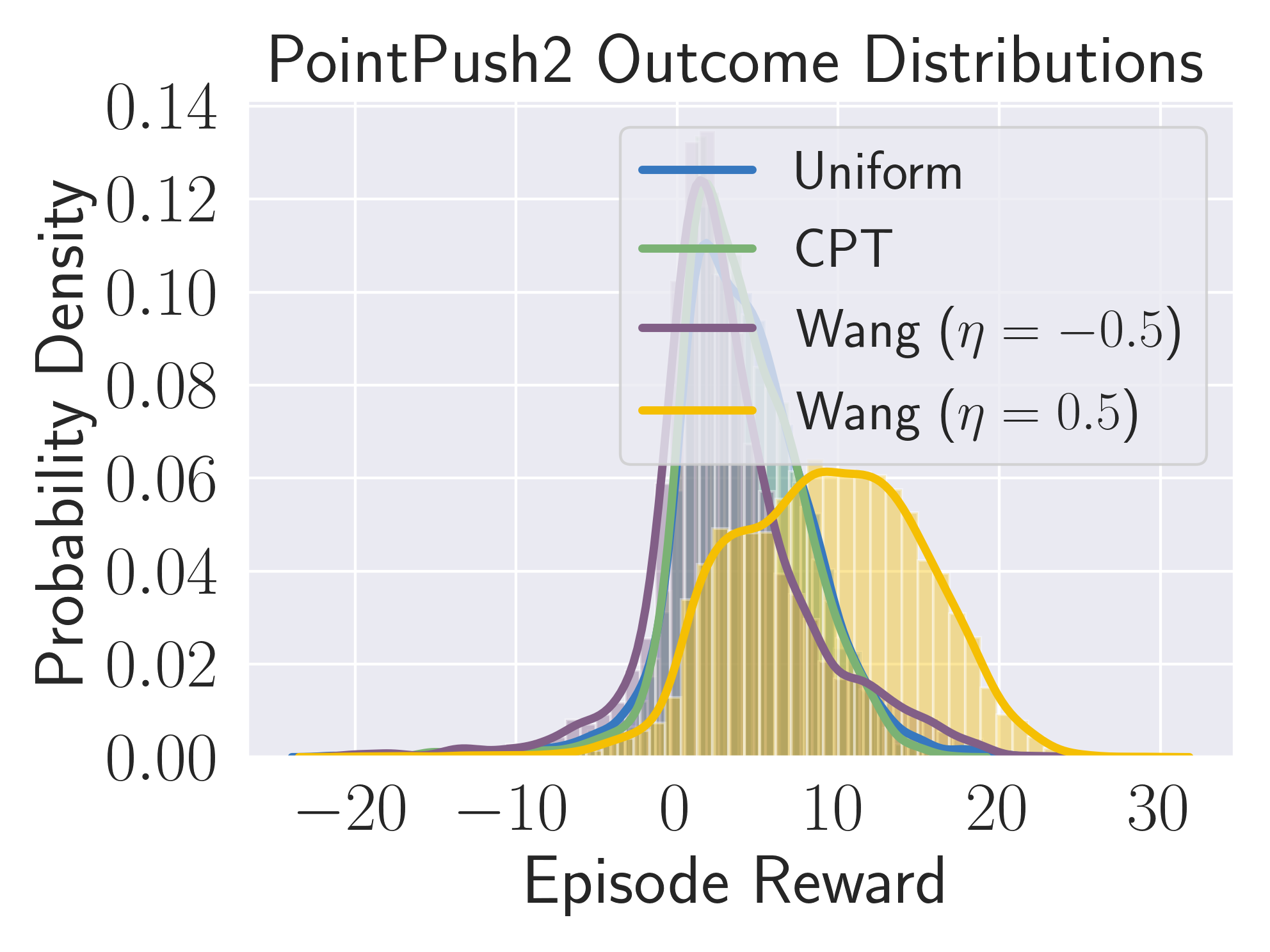}

    \caption{Impact of different distributional objectives in three environments.  Each row represents an environment.  First column: pessimistic weightings ($\eta > 0$; yellow) lead to higher policy entropy, aiding exploration.  Second and third columns: pessimistic weightings lead to higher overall reward (positive reward minus cost) with lower cost contributions than other weightings. Fourth column: testing distributions across seeds (sampling turned off).}
    \label{training}.
\end{figure*}

\subsection{What Should We Expect?}\label{expect}
Before presenting results, it is instructive to inspect the form (\ref{tr}) to set expectations on the impact of different weightings.  The standard policy gradient of \cite{Wi92} can be thought of as maximum likelihood estimation weighted by advantage over observed trajectories.  Over time, it shifts probability mass toward states with positive advantage and away from states with negative advantage.  Weighing gradient contributions differently based on episode outcome adjusts this migration.  In particular, emphasizing experiences from low-reward episodes (pessimistic weighting) should on average lead to stronger contributions from low-advantage terms.  This prioritizes pushing probability mass away from problematic parts of state space, increasing policy entropy.  Conversely, emphasizing experiences from high-reward episodes (optimistic weighting) should prioritize pushing probability mass toward advantageous parts of state space, more rapidly decreasing policy entropy.  One should therefore expect pessimistic weightings to more thoroughly explore the state space, potentially avoiding premature convergence to suboptimal policies.

As a simple quantitative example, consider the contribution to the gradient of the variance parameters of state-independent Gaussian policies for trajectory $i$, to control dimension $j$, at time step $t$:
\begin{equation}
    \begin{split}
    \nabla_{\theta_{\mathbf{\sigma}_j}} &\log \pi_\theta(\mathbf{a}_{i,t}|\mathbf{s}_{i,t})A(\mathbf{s}_{i,t,},\mathbf{a}_{i,t}) = \\ 
    &\frac{1}{\mathbf{\sigma}_j}\left(\frac{(\mathbf{a}_{i,t,j}-\mathbf{\mu}_{i,t,j})^2}{\mathbf{\sigma}_j^2} - 1\right) A(\mathbf{s}_{i,t,},\mathbf{a}_{i,t}).
    \end{split}
\end{equation}
Here we use the chain rule and the fact that $\mathbf{\sigma}$ is a single parameter in each dimension. Note that the first multiplicative term in the $\sigma$ derivative is always positive and the second is negative about $68\%$ of the time (and more negative when sampling near $\mu$).  As learning occurs, sampling close to $\mu$ should correlate with higher advantages.  These trends combine to produce the decrease in $\sigma$ observed in standard learning.  When pessimistic weightings emphasize terms with negative advantage, however, the decrease in $\sigma$ may be slowed or even reversed.

\subsection{Impact of Pessimistic Weightings}\label{pessimistic_text}
 Figure \ref{training} shows the impact of different weightings when training in three environments.  As expected, we observe that policy variance is higher when using pessimistic weightings.  This typically leads to higher truncated (considering control bounds) entropy, depending on how often learned actions are near the boundaries.  Higher levels of total rewards and lower cost levels were achieved with pessimistic weightings, both in training and testing (i.e., with sampling turned off). Note that in this and subsequent plots, the ``zero penalty'' lines reflect the cost levels that standard PPO and TRPO agents reach when unaware of cost \cite{RaAcAm19}. Finally, as shown in Appendix A.6, pessimistic weightings produced higher levels of all metrics in all environments in testing.
 
 
\begin{figure*}[t]
    \centering
    \includegraphics[width=0.245\textwidth]{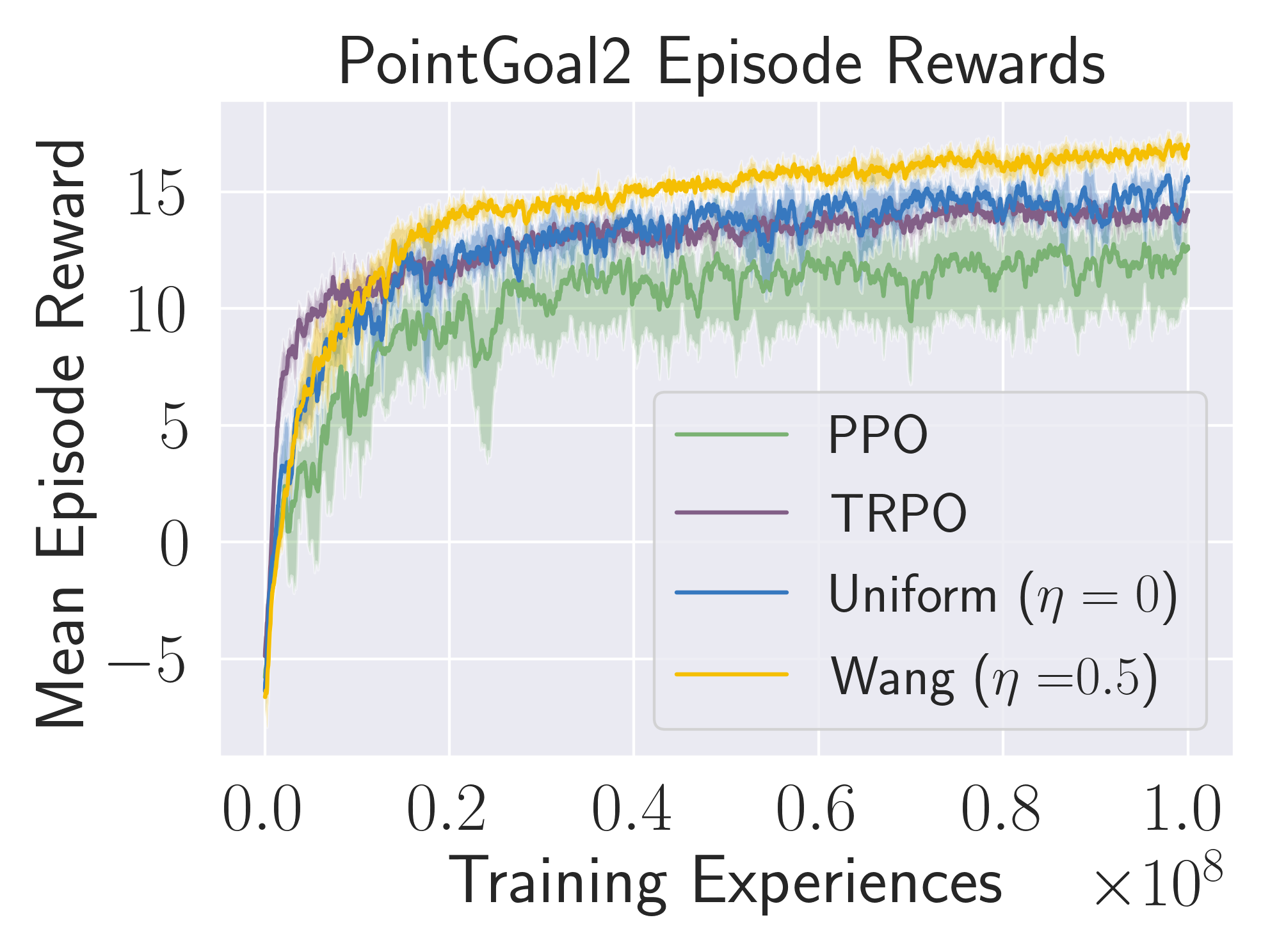}
    \includegraphics[width=0.245\textwidth]{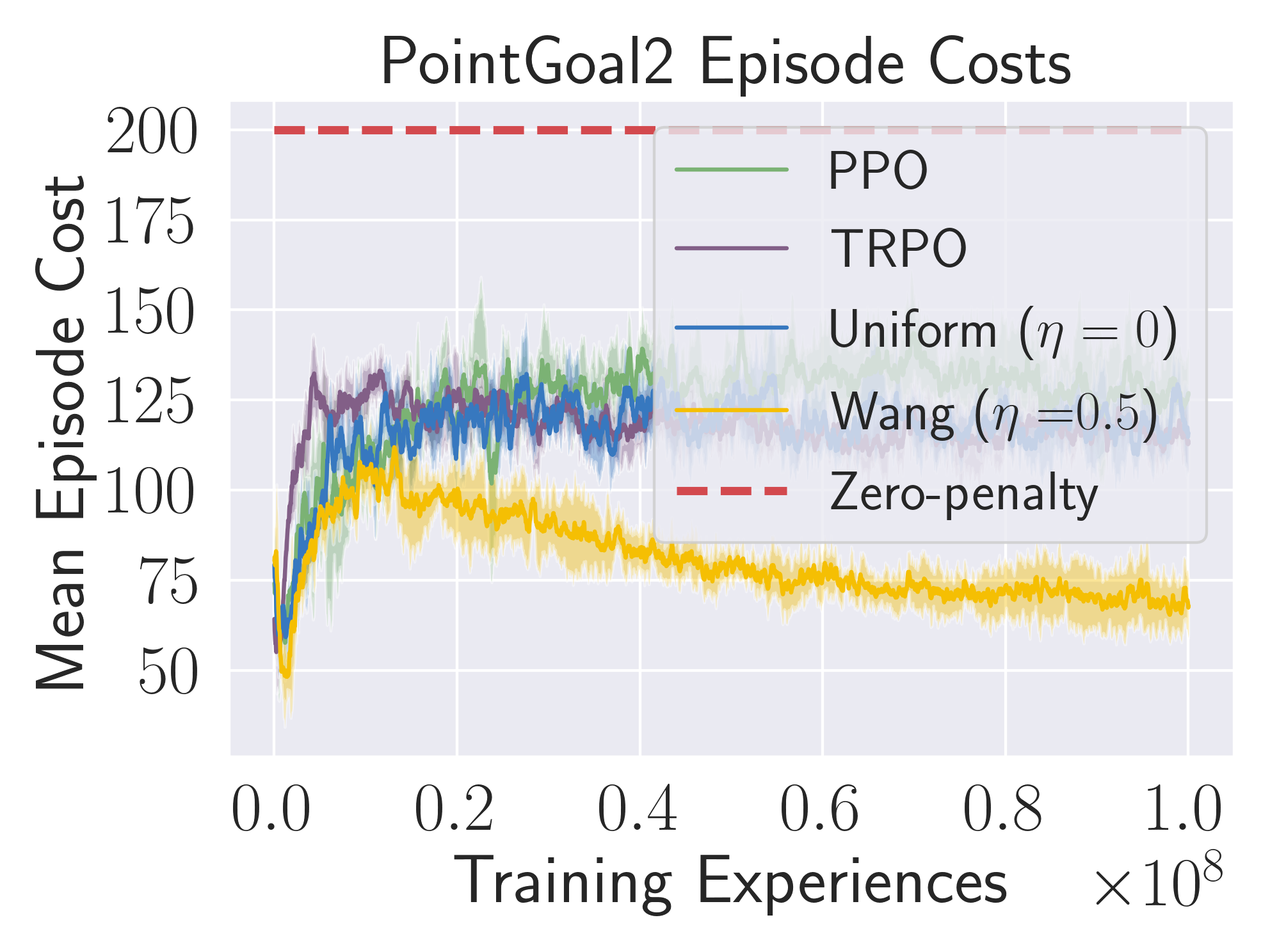}
    \includegraphics[width=0.245\textwidth]{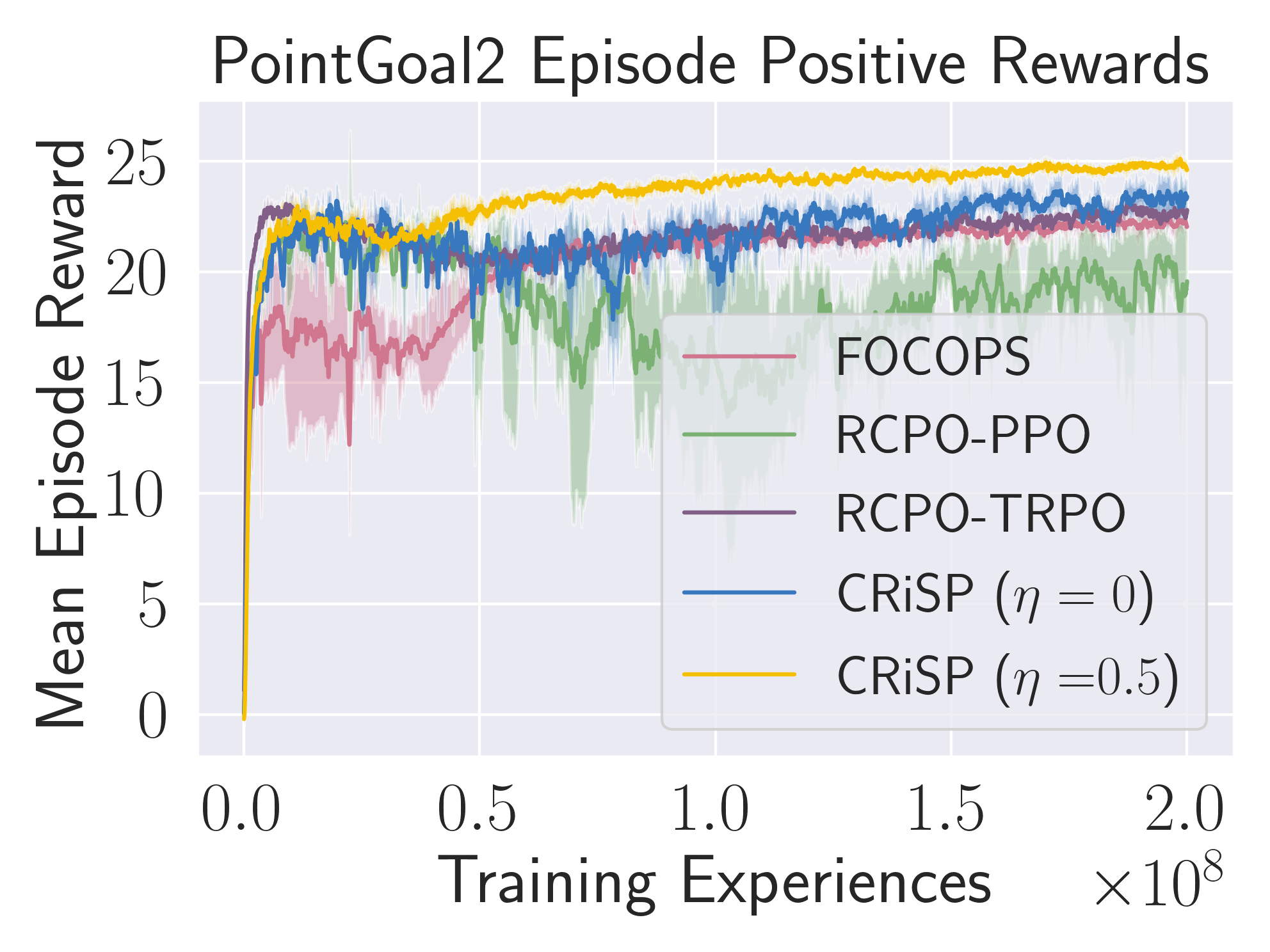}
    \includegraphics[width=0.245\textwidth]{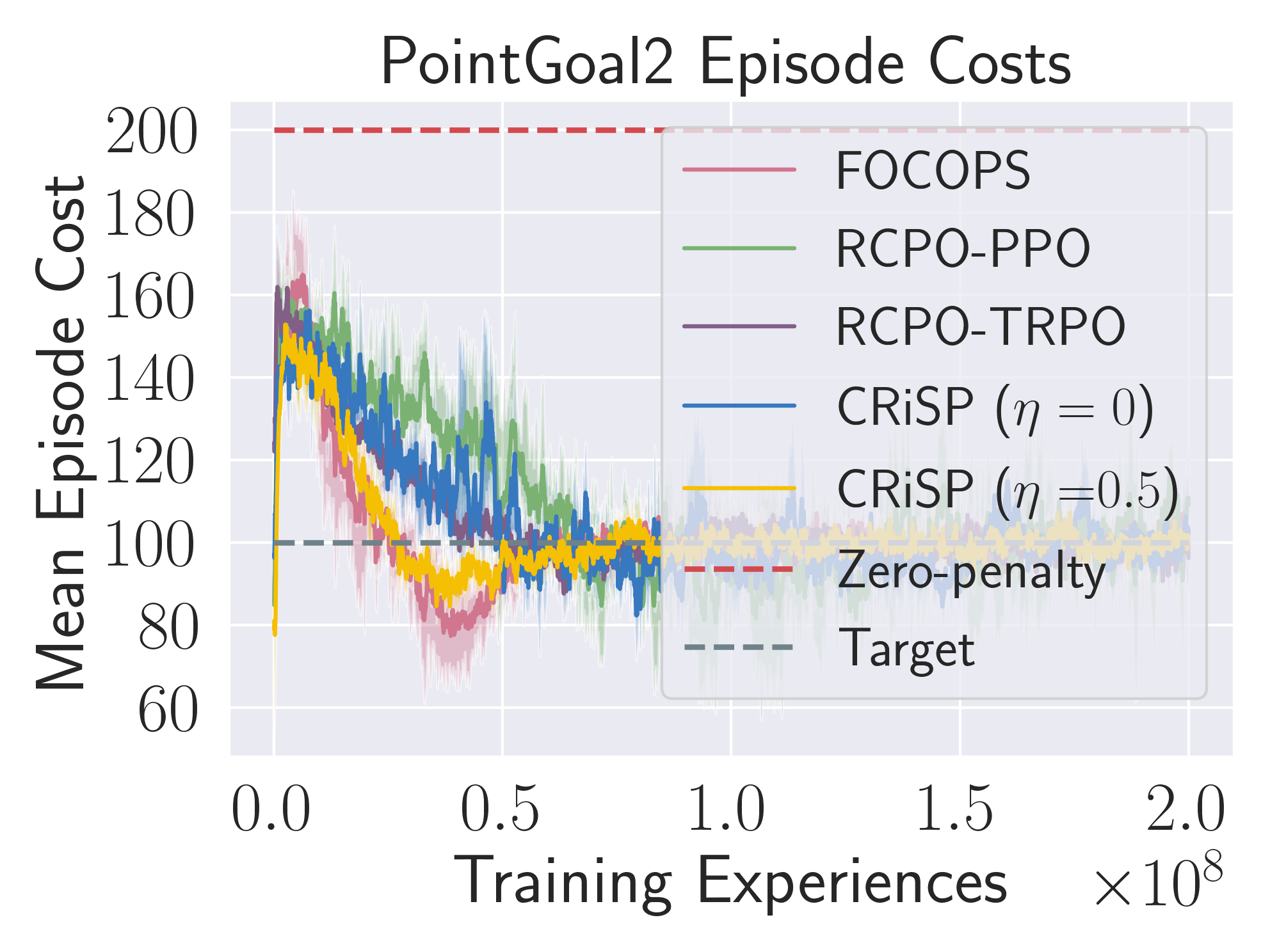}
    \includegraphics[width=0.245\textwidth]{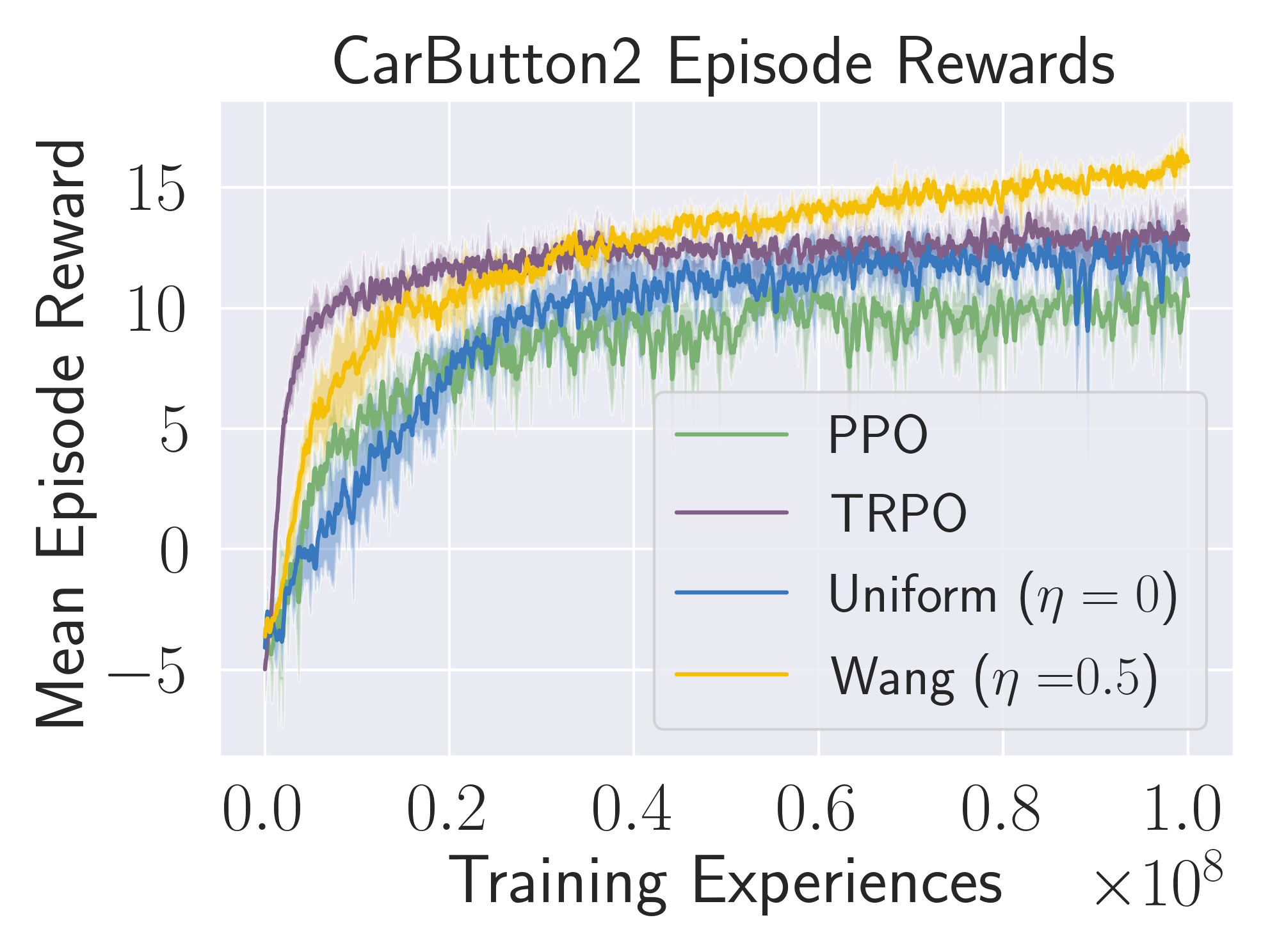}
    \includegraphics[width=0.245\textwidth]{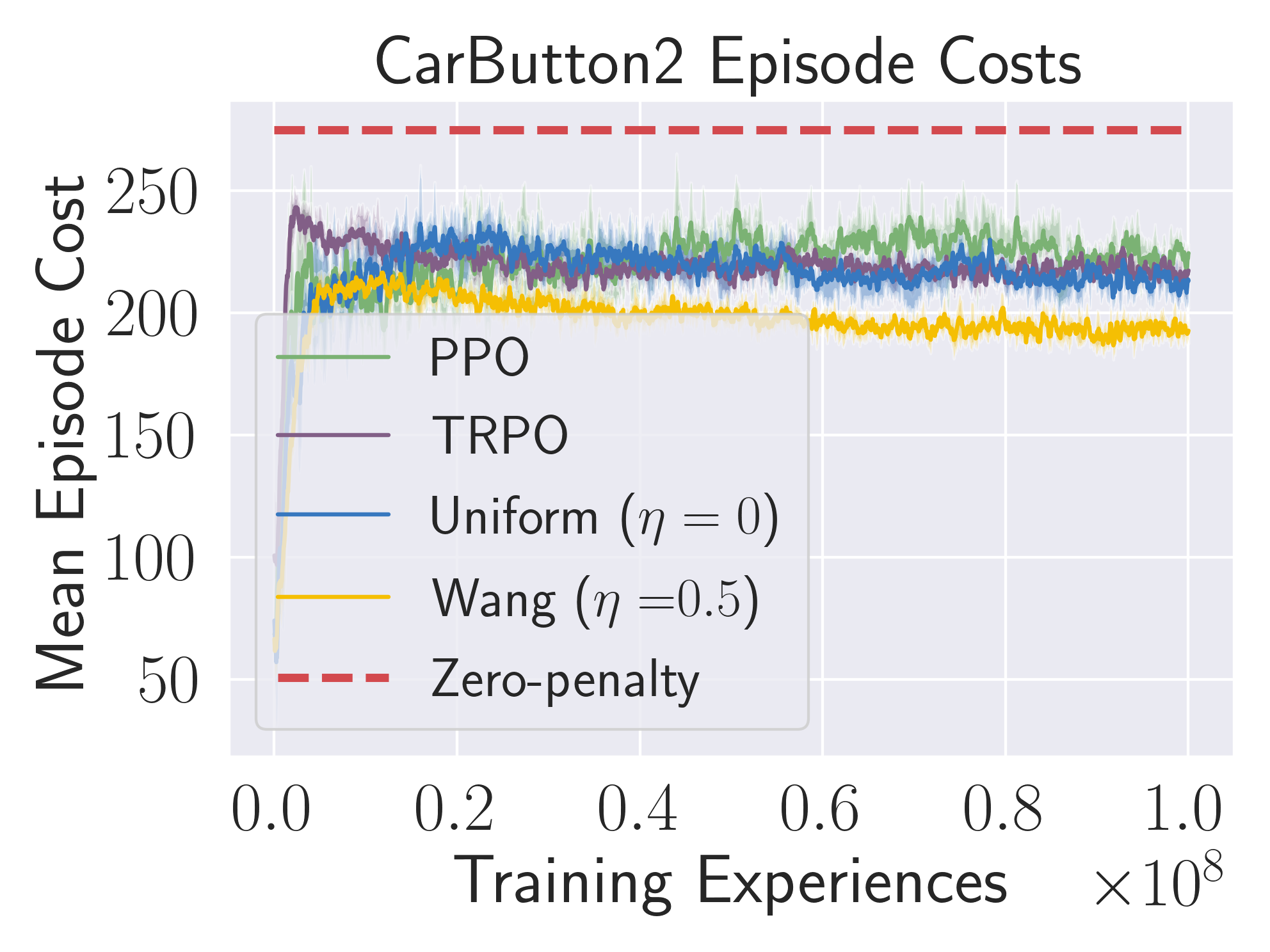}
    \includegraphics[width=0.245\textwidth]{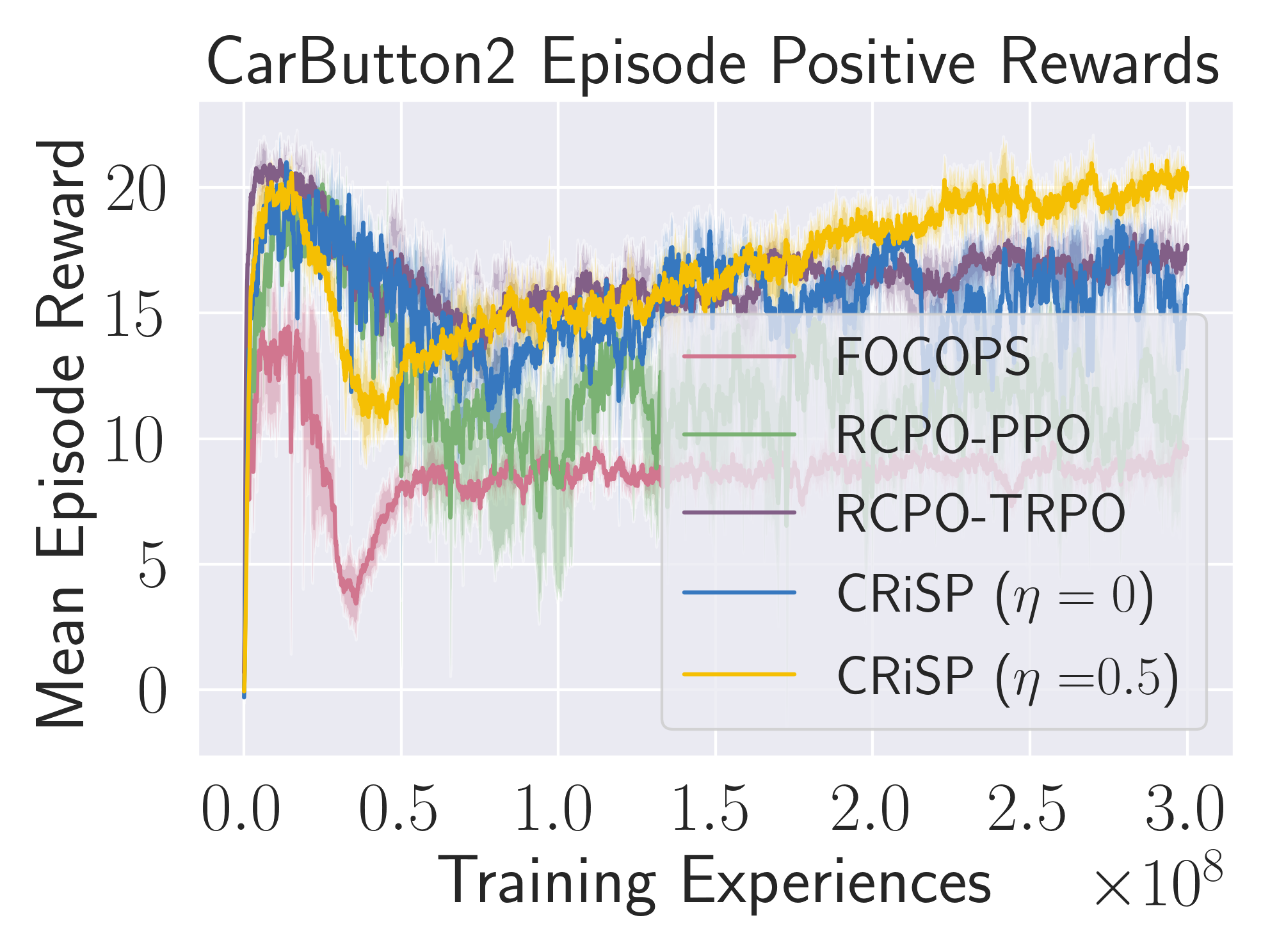}
    \includegraphics[width=0.245\textwidth]{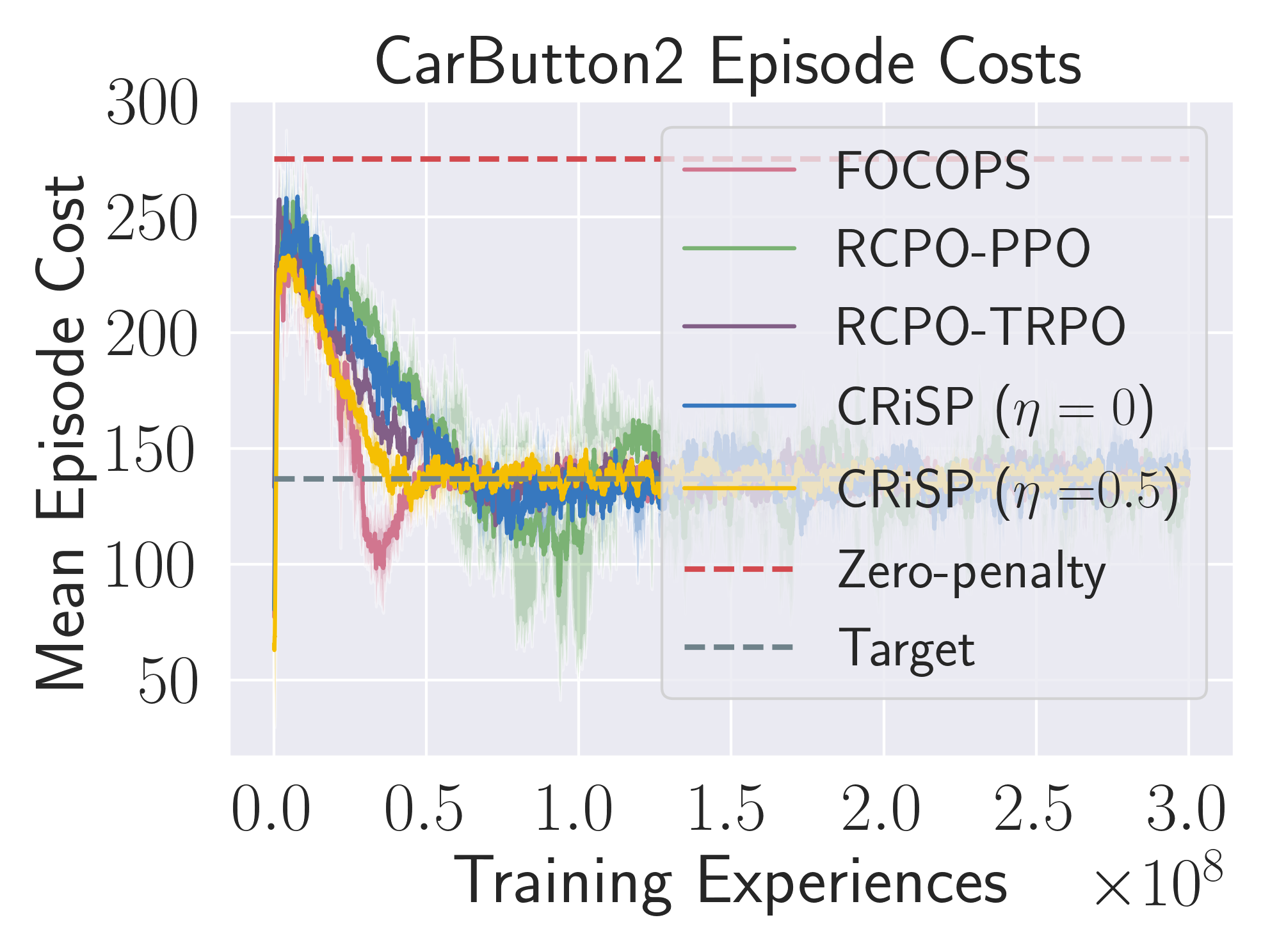}
    \includegraphics[width=0.245\textwidth]{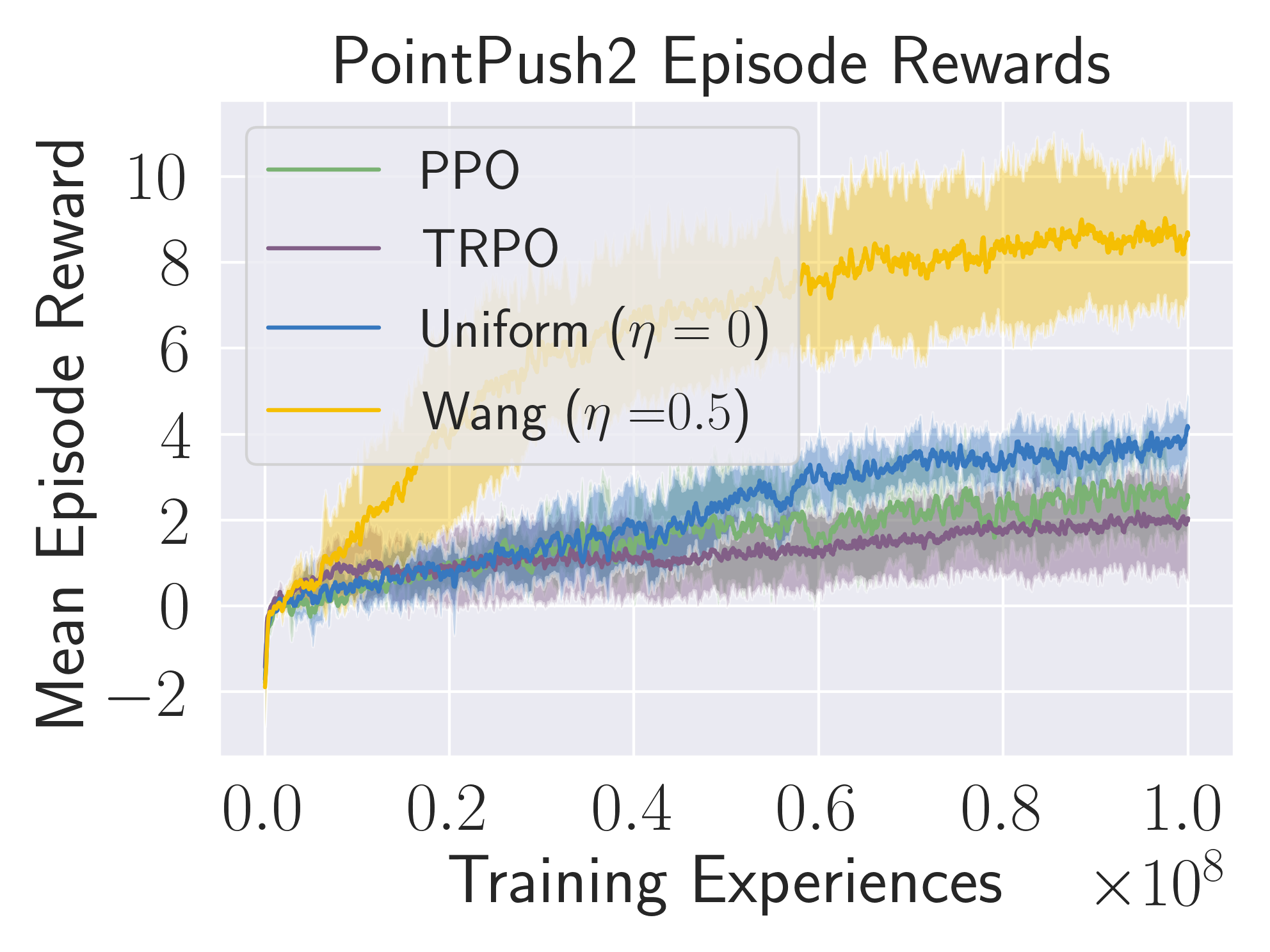}
    \includegraphics[width=0.245\textwidth]{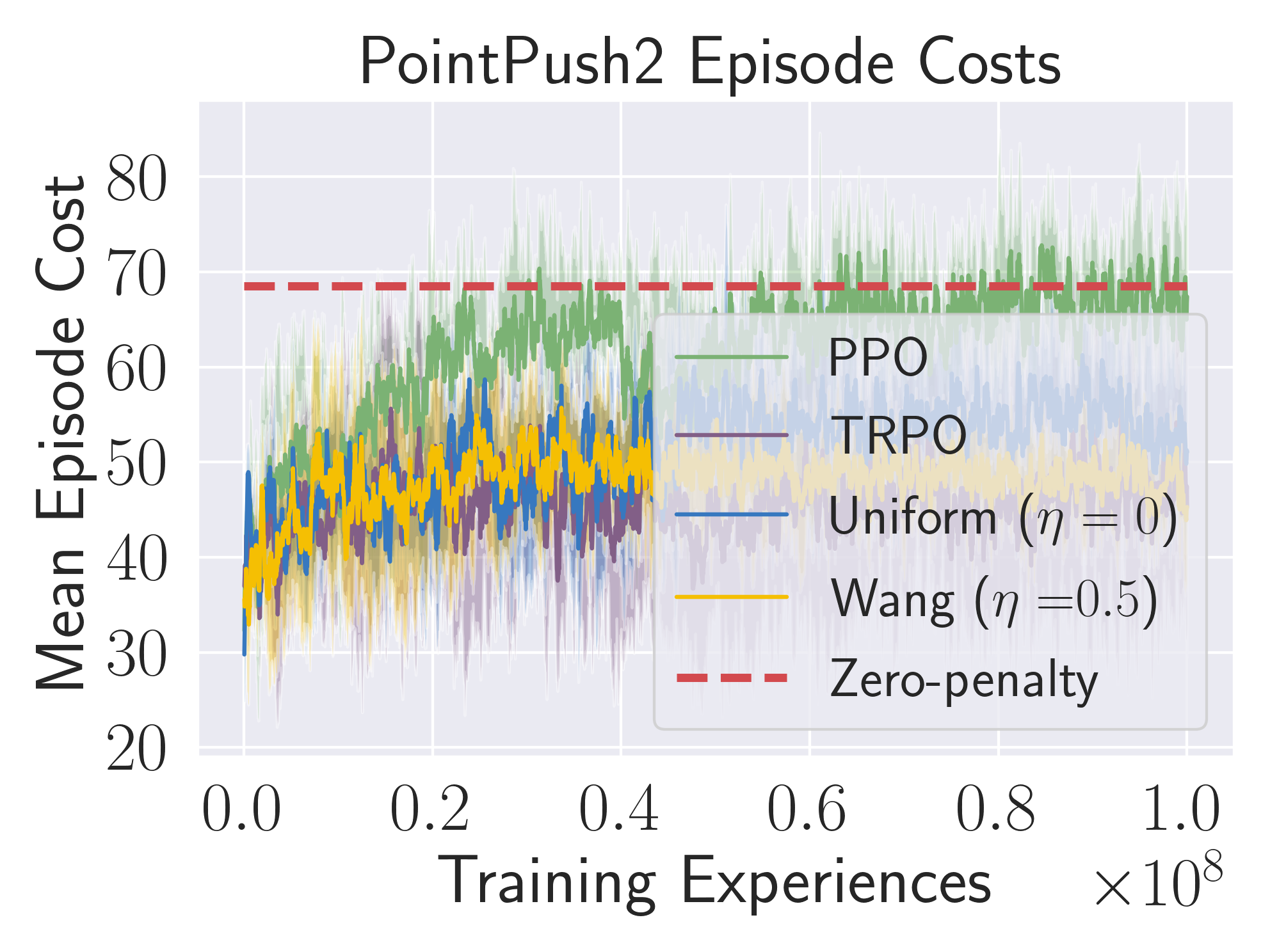}
    \includegraphics[width=0.245\textwidth]{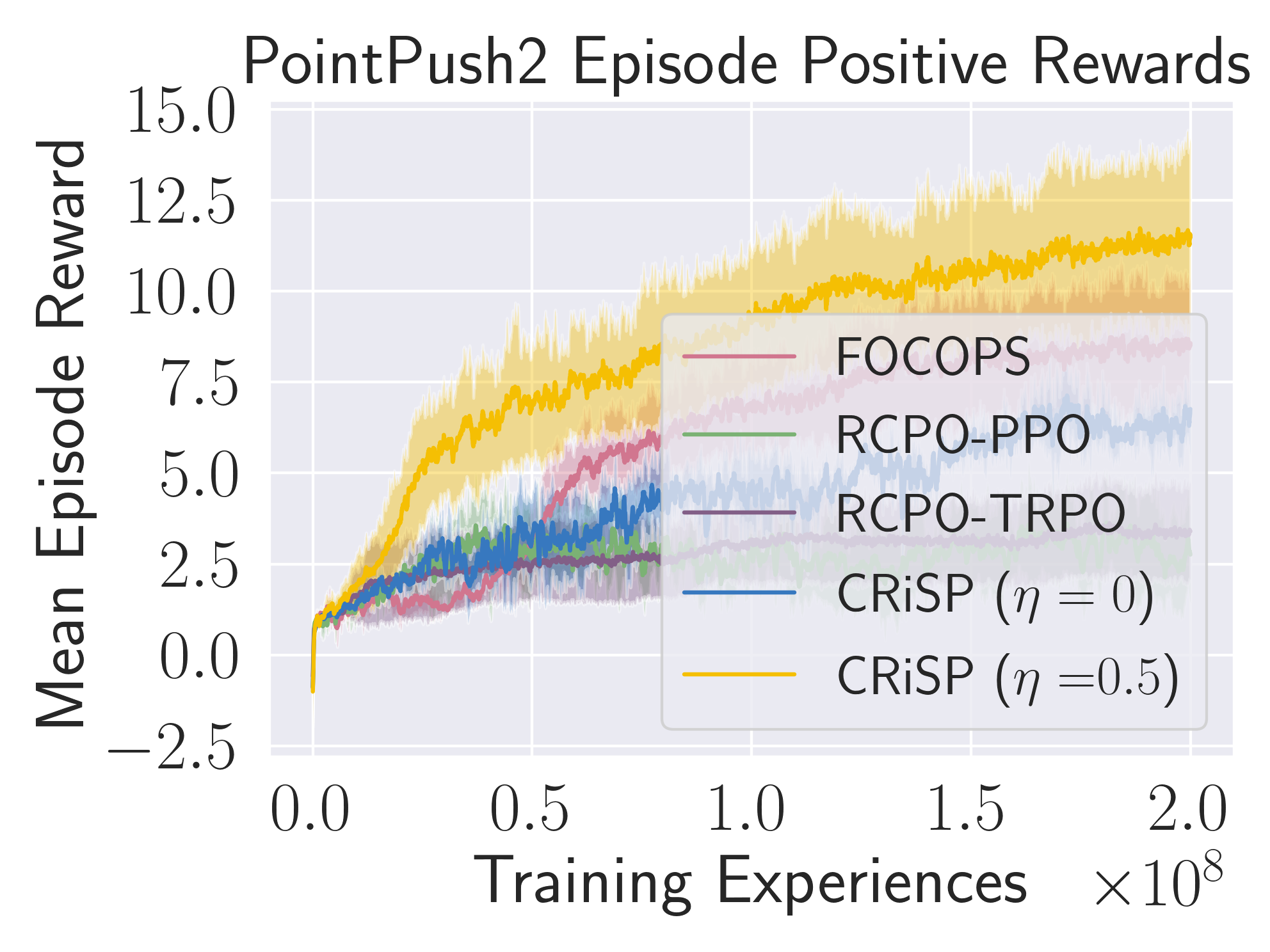}
    \includegraphics[width=0.245\textwidth]{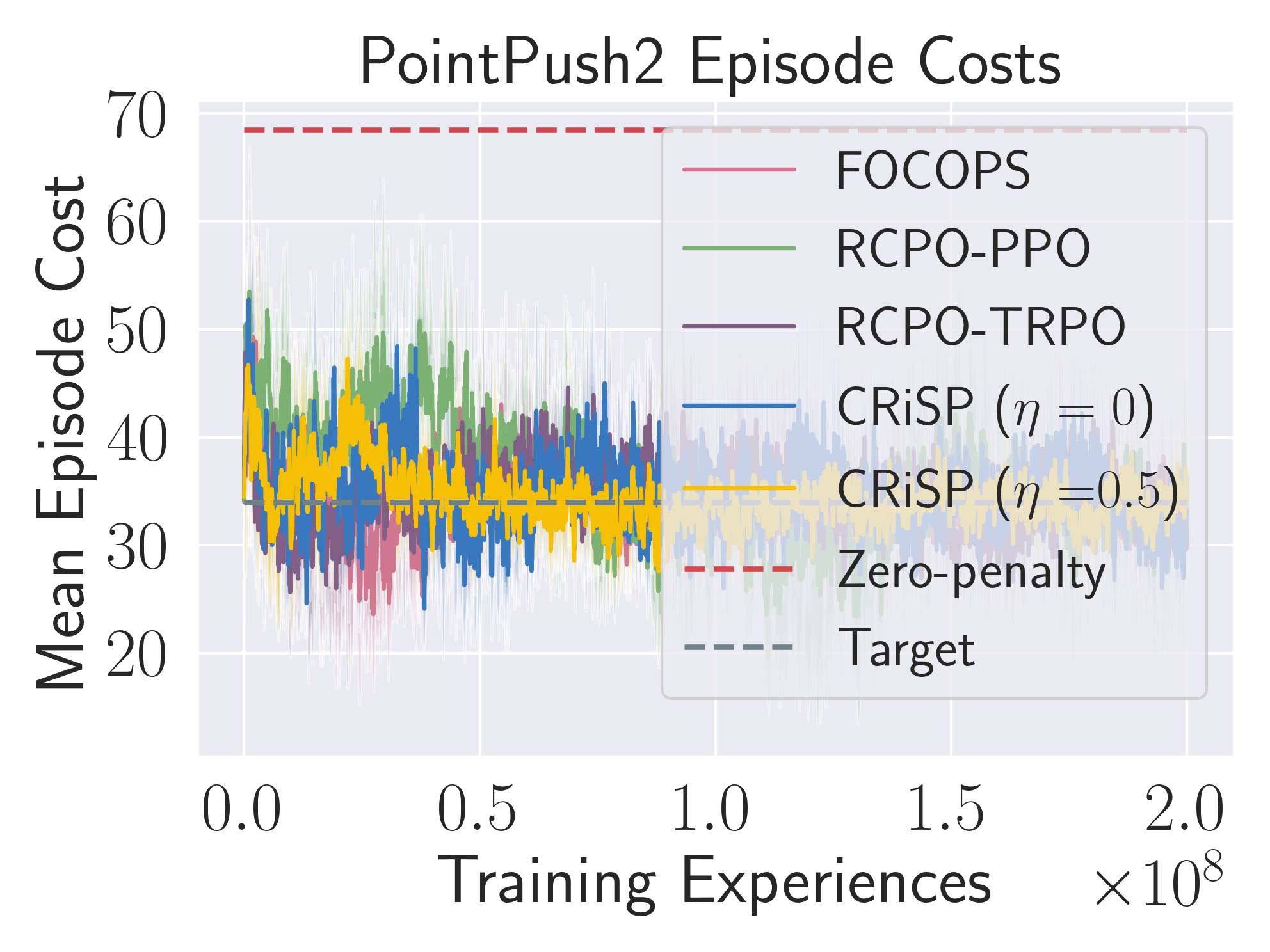}
    \centering
    \caption{Comparison of our pessimistic agents with $\eta=0.5$ (yellow) to other on-policy methods. Columns 1-2: In unconstrained learning, pessimistic agents tend toward higher total reward (including penalty) and lower cost than others. Columns 3-4: In the constrained setting, pessimistic agents accumulate more positive reward than others at the target cost level.}
    \label{long_reward}
\end{figure*}

\subsection{Comparison with Other Unconstrained Baselines}
We pursued comparisons of our risk-sensitive approach, using the pessimistic objective from \cite{Wa2000}, to state-of-the-art on-policy methods. In addition to PPO, we compared performance with Trust Region Policy Optimization (TRPO; \citet{schulman2017trust}).  Both were configured as in \cite{RaAcAm19}.  As shown in Figure \ref{long_reward} and Appendix A.7, a single pessimistic objective ($\eta=0.5$) could be used to provide both higher total reward (sum of positive and negative terms) and lower cost than PPO and TRPO in five of the six environments.  A less aggressive weighting ($\eta=0.25$) provided these gains in the last environment.

\subsection{Comparisons with Constrained Baselines}
We additionally compared the performance of the constrained version of our approach (CRiSP) to RCPO using PPO and TRPO updates as well as First-Order Constrained Optimization in Policy Space (FOCOPS; \citet{zhang20}).  These baselines were found to be significantly stronger than the constrained methods explored in \cite{RaAcAm19}; see \cite{TeMaMa19} for a likely explanation. For all tasks, we chose the cost target to be half of what a trained, unconstrained agent unaware of penalties would accumulate.  

Results are given in Figure \ref{long_reward} and Appendix A.8.  Our pessimistic agents are seen to achieve higher positive rewards than all other methods, at the same cost levels, in all environments tested.  Only FOCOPS consistently allows lower learned penalty coefficients and higher policy entropies than CRiSP; however FOCOPS does not match the positive reward accumulation of our method.  In Appendix A.9, we show that $\eta$ may be increased to hasten convergence to the target cost (though this eventually reduces reward accumulation).

\section{Discussion}\label{discussion}
As shown above, pessimistic agents consistently achieve superior performance in our formulation. One contributing factor is their enhanced exploration, which reduces the chance of premature convergence to a suboptimal policy.
However this cannot be the only factor, as evidenced by the consistently higher entropy and lower performance of the FOCOPS algorithm (see Appendix A.8).  The fact that pessimistic agents emphasize poor outcomes likely plays a significant role, as it allows behavior to continually be adjusted most where it is most necessary.  Once a problematic part of the state space is addressed, a different region takes its place.  Conversely, optimistic weightings emphasize the best outcomes in the distribution.  Already strong outcomes are given increased attention, making them likely to stay on top.  Agents trained optimistically thus become myopic, obsessing over a fraction of the state space while neglecting the rest of it.

While $\eta \ge 0$ produced gains in all environments tested, it does represent an additional hyperparameter.  Our results suggest that a reasonable strategy for choosing it is to start at $\eta=0.5$ and proceed downward toward $\eta=0$ if needed.

Future directions based on these findings include an off-policy formulation (to improve sample efficiency) and additional experiments to clarify the impact of pessimistic weightings in discrete action spaces.


\section{Conclusions}\label{conclusions}
We formulated unconstrained and constrained learning based on a risk-sensitive policy gradient estimate.  Objectives that emphasize improvement where performance is poor produced performance gains in all environments tested.

\section{Acknowledgements}
This work was funded by the Johns Hopkins Institute for Assured Autonomy.  We would like to thank the AAAI reviewers for their constructive feedback.

\bibliography{main}

\begin{thebibliography}{39}
\providecommand{\natexlab}[1]{#1}

\bibitem[{Achiam(2018)}]{spinning_up}
Achiam, J. 2018.
\newblock OpenAI Spinning Up: Proof for Don’t Let the Past Distract You.
\newblock
  \url{http://spinningup.openai.com/en/latest/spinningup/extra_pg_proof1.html}.
\newblock Accessed: 2022-01-27.

\bibitem[{Achiam et~al.(2017)Achiam, Held, Tamar, and Abbeel}]{AcHeTaAb17}
Achiam, J.; Held, D.; Tamar, A.; and Abbeel, P. 2017.
\newblock Constrained policy optimization.
\newblock In \emph{Proceedings of the 34th International Conference on Machine
  Learning (ICML)}, 22--31.

\bibitem[{Barth-Maron et~al.(2018)Barth-Maron, Hoffman, Budden, Dabney, Horgan,
  TB, Muldal, Heess, and Lillicrap}]{d4pg}
Barth-Maron, G.; Hoffman, M.~W.; Budden, D.; Dabney, W.; Horgan, D.; TB, D.;
  Muldal, A.; Heess, N.; and Lillicrap, T. 2018.
\newblock Distributional Policy Gradients.
\newblock In \emph{International Conference on Learning Representations}.

\bibitem[{Bellemare, Dabney, and Munos(2017)}]{BeDaMu17}
Bellemare, M.~G.; Dabney, W.; and Munos, R. 2017.
\newblock A Distributional Perspective on Reinforcement Learning.
\newblock In \emph{Proceedings of the 34th International Conference on Machine
  Learning (ICML)}, volume~70, 449--458.

\bibitem[{{Bellemare} et~al.(2013){Bellemare}, {Naddaf}, {Veness}, and
  {Bowling}}]{BeNaVeBo13}
{Bellemare}, M.~G.; {Naddaf}, Y.; {Veness}, J.; and {Bowling}, M. 2013.
\newblock The Arcade Learning Environment: An Evaluation Platform for General
  Agents.
\newblock \emph{Journal of Artificial Intelligence Research}, 47: 253--279.

\bibitem[{Bhatnagar(2010)}]{BHATNAGAR2010760}
Bhatnagar, S. 2010.
\newblock An actor–critic algorithm with function approximation for
  discounted cost constrained Markov decision processes.
\newblock \emph{Systems \& Control Letters}, 59(12): 760--766.

\bibitem[{Chow et~al.(2019)Chow, Nachum, Faust, Ghavamzadeh, and
  Du{\'{e}}{\~{n}}ez{-}Guzm{\'{a}}n}]{Chow2019}
Chow, Y.; Nachum, O.; Faust, A.; Ghavamzadeh, M.; and
  Du{\'{e}}{\~{n}}ez{-}Guzm{\'{a}}n, E.~A. 2019.
\newblock Lyapunov-based Safe Policy Optimization for Continuous Control.
\newblock \emph{CoRR}, abs/1901.10031.

\bibitem[{Dabney et~al.(2018{\natexlab{a}})Dabney, Ostrovski, Silver, and
  Munos}]{DaOsSiMu18}
Dabney, W.; Ostrovski, G.; Silver, D.; and Munos, R. 2018{\natexlab{a}}.
\newblock Implicit Quantile Networks for Distributional Reinforcement Learning.
\newblock In \emph{Proceedings of the 35th International Conference on Machine
  Learning (ICML)}, volume~80, 1096--1105.

\bibitem[{Dabney et~al.(2018{\natexlab{b}})Dabney, Rowland, Bellemare, and
  Munos}]{DaRoBeMu18}
Dabney, W.; Rowland, M.; Bellemare, M.~G.; and Munos, R. 2018{\natexlab{b}}.
\newblock Distributional Reinforcement Learning With Quantile Regression.
\newblock In \emph{Proceedings of the 32nd {AAAI} Conference on Artificial
  Intelligence (AAAI)}, 2892--2901.

\bibitem[{Duan et~al.(2021)Duan, Guan, Li, Ren, Sun, and
  Cheng}]{DuGuLiReSuCh21}
Duan, J.; Guan, Y.; Li, S.~E.; Ren, Y.; Sun, Q.; and Cheng, B. 2021.
\newblock Distributional Soft Actor-Critic: {O}ff-Policy Reinforcement Learning
  for Addressing Value Estimation Errors.
\newblock \emph{{IEEE} Transactions on Neural Networks and Learning Systems},
  1–15.

\bibitem[{Fei et~al.(2021)Fei, Yang, Chen, and Wang}]{fei21}
Fei, Y.; Yang, Z.; Chen, Y.; and Wang, Z. 2021.
\newblock Exponential Bellman Equation and Improved Regret Bounds for
  Risk-Sensitive Reinforcement Learning.
\newblock In Ranzato, M.; Beygelzimer, A.; Dauphin, Y.; Liang, P.; and Vaughan,
  J.~W., eds., \emph{Advances in Neural Information Processing Systems},
  volume~34, 20436--20446. Curran Associates, Inc.

\bibitem[{Fujita and Maeda(2018)}]{pmlr-v80-fujita18a}
Fujita, Y.; and Maeda, S.-i. 2018.
\newblock Clipped Action Policy Gradient.
\newblock In Dy, J.; and Krause, A., eds., \emph{Proceedings of the 35th
  International Conference on Machine Learning}, volume~80 of \emph{Proceedings
  of Machine Learning Research}, 1597--1606. PMLR.

\bibitem[{Garc\'{\i}a and Fern\'{a}ndez(2015)}]{JMLR:garcia15a}
Garc\'{\i}a, J.; and Fern\'{a}ndez, F. 2015.
\newblock A Comprehensive Survey on Safe Reinforcement Learning.
\newblock \emph{Journal of Machine Learning Research}, 16(42): 1437--1480.

\bibitem[{Jaimungal et~al.(2022)Jaimungal, Pesenti, Wang, and
  Tatsat}]{jaimungal22}
Jaimungal, S.; Pesenti, S.~M.; Wang, Y.~S.; and Tatsat, H. 2022.
\newblock Robust Risk-Aware Reinforcement Learning.
\newblock \emph{SIAM Journal on Financial Mathematics}, 13(1): 213--226.

\bibitem[{Kahneman and Tversky(1979)}]{KaTv79}
Kahneman, D.; and Tversky, A. 1979.
\newblock Prospect Theory: {A}n Analysis of Decision under Risk.
\newblock \emph{Econometrica}, 47(2): 263--291.

\bibitem[{Kingma and Ba(2015)}]{KiBa17}
Kingma, D.~P.; and Ba, J. 2015.
\newblock Adam: {A} Method for Stochastic Optimization.
\newblock In \emph{3rd International Conference on Learning Representations}.

\bibitem[{Leavens(1945)}]{Le45}
Leavens, D.~H. 1945.
\newblock Diversification of Investments.
\newblock \emph{Trusts and Estates}, 80: 469--473.

\bibitem[{Ma et~al.(2020)Ma, Xia, Zhou, Yang, and Zhao}]{MaXiZh20}
Ma, X.; Xia, L.; Zhou, Z.; Yang, J.; and Zhao, Q. 2020.
\newblock {DSAC}: {Distributional} Soft Actor Critic for Risk-Sensitive
  Reinforcement Learning.
\newblock ArXiv:2004.14547.

\bibitem[{Markowitz and Staley(2023)}]{copg}
Markowitz, J.; and Staley, E.~W. 2023.
\newblock Clipped-Objective Policy Gradients for Pessimistic Policy
  Optimization.
\newblock arXiv:2311.05846.

\bibitem[{Paternain et~al.(2019)Paternain, Chamon, Calvo-Fullana, and
  Ribeiro}]{PaSaChLuCaRi2019}
Paternain, S.; Chamon, L.; Calvo-Fullana, M.; and Ribeiro, A. 2019.
\newblock Constrained Reinforcement Learning Has Zero Duality Gap.
\newblock In \emph{Neural Information Processing Systems (NeurIPS)}.

\bibitem[{Prashanth and Fu(2022)}]{prahsanth_fu_book}
Prashanth, L.; and Fu, M. 2022.
\newblock \emph{Risk-sensitive Reinforcement Learning Via Policy Gradient
  Search}.
\newblock Foundations and trends in machine learning. Now Publishers.
\newblock ISBN 9781638280279.

\bibitem[{Prashanth and Fu(2018)}]{UMD2018}
Prashanth, L.; and Fu, M.~C. 2018.
\newblock Risk-Sensitive Reinforcement Learning: {A} Constrained Optimization
  Viewpoint.
\newblock \emph{CoRR}, abs/1810.09126.

\bibitem[{Prashanth et~al.(2016)Prashanth, Jie, Fu, Marcus, and
  Szepesv{\'{a}}ri}]{LAJiFuMaSz16}
Prashanth, L.; Jie, C.; Fu, M.~C.; Marcus, S.~I.; and Szepesv{\'{a}}ri, C.
  2016.
\newblock Cumulative Prospect Theory Meets Reinforcement Learning: {P}rediction
  and Control.
\newblock In \emph{Proceedings of the 33nd International Conference on Machine
  Learning (ICML)}, 1406--1415.

\bibitem[{Pratt(1964)}]{Pr64}
Pratt, J.~W. 1964.
\newblock Risk Aversion in the Small and in the Large.
\newblock \emph{Econometrica}, 32: 122--136.

\bibitem[{Ray, Achiam, and Amodei(2019)}]{RaAcAm19}
Ray, A.; Achiam, J.; and Amodei, D. 2019.
\newblock Benchmarking Safe Exploration in Deep Reinforcement Learning.
\newblock \url{https://cdn.openai.com/safexp-short.pdf}.
\newblock Accessed: 2022-01-27.

\bibitem[{Rockafellar and Uryasev(2000)}]{RoUr00}
Rockafellar, R.~T.; and Uryasev, S. 2000.
\newblock Optimization of Conditional Value-at-Risk.
\newblock \emph{Journal of Risk}, 2: 21--41.

\bibitem[{Schulman et~al.(2017{\natexlab{a}})Schulman, Levine, Moritz, Jordan,
  and Abbeel}]{schulman2017trust}
Schulman, J.; Levine, S.; Moritz, P.; Jordan, M.~I.; and Abbeel, P.
  2017{\natexlab{a}}.
\newblock Trust Region Policy Optimization.
\newblock In \emph{Proceedings of the 32nd International Conference on Machine
  Learning (ICML)}.

\bibitem[{Schulman et~al.(2016)Schulman, Moritz, Levine, Jordan, and
  Abbeel}]{ScMoLeJoAb16}
Schulman, J.; Moritz, P.; Levine, S.; Jordan, M.; and Abbeel, P. 2016.
\newblock High-Dimensional Continuous Control Using Generalized Advantage
  Estimation.
\newblock In \emph{Proceedings of the International Conference on Learning
  Representations (ICLR)}.

\bibitem[{Schulman et~al.(2017{\natexlab{b}})Schulman, Wolski, Dhariwal,
  Radford, and Klimov}]{ScWoDhRaKl17}
Schulman, J.; Wolski, F.; Dhariwal, P.; Radford, A.; and Klimov, O.
  2017{\natexlab{b}}.
\newblock Proximal Policy Optimization Algorithms.
\newblock ArXiv:1707.06347.

\bibitem[{Tamar et~al.(2015)Tamar, Chow, Ghavamzadeh, and
  Mannor}]{taCoherent2015}
Tamar, A.; Chow, Y.; Ghavamzadeh, M.; and Mannor, S. 2015.
\newblock Policy Gradient for Coherent Risk Measures.
\newblock In \emph{Proceedings of the 28th International Conference on Neural
  Information Processing Systems (NeurIPS)}, 1468–1476.

\bibitem[{Tessler, Mankowitz, and Mannor(2019)}]{TeMaMa19}
Tessler, C.; Mankowitz, D.~J.; and Mannor, S. 2019.
\newblock Reward Constrained Policy Optimization.
\newblock In \emph{Proceedings of the International Conference on Learning
  Representations (ICLR)}.

\bibitem[{Tversky and Kahneman(1992)}]{TvKa92}
Tversky, A.; and Kahneman, D. 1992.
\newblock Advances in Prospect Theory: {C}umulative Representation of
  Uncertainty.
\newblock \emph{Journal of Risk and Uncertainty}, 5(4): 297--323.

\bibitem[{Wang(2000)}]{Wa2000}
Wang, S.~S. 2000.
\newblock A Class of Distortion Operators for Pricing Financial and Insurance
  Risks.
\newblock \emph{The Journal of Risk and Insurance}, 67(1): 15.

\bibitem[{Williams(1992)}]{Wi92}
Williams, R.~J. 1992.
\newblock Simple Statistical Gradient-Following Algorithms for Connectionist
  Reinforcement learning.
\newblock \emph{Machine Learning}, 8(3): 229--256.

\bibitem[{Wu and Lin(1999)}]{WuLi99}
Wu, C.; and Lin, Y. 1999.
\newblock Minimizing Risk Models in {M}arkov Decision Processes with Policies
  Depending on Target Values.
\newblock \emph{Journal of Mathematical Analysis and Applications}, 231:
  47--67.

\bibitem[{Yu et~al.(2019)Yu, Quillen, He, Julian, Hausman, Finn, and
  Levine}]{metaworld}
Yu, T.; Quillen, D.; He, Z.; Julian, R.; Hausman, K.; Finn, C.; and Levine, S.
  2019.
\newblock Meta-World: {A} Benchmark and Evaluation for Multi-Task and Meta
  Reinforcement Learning.
\newblock \emph{CoRR}, abs/1910.10897.

\bibitem[{Zhang et~al.(2021)Zhang, Bedi, Wang, and Koppel}]{ZhBeSiWaKo21}
Zhang, J.; Bedi, A.~S.; Wang, M.; and Koppel, A. 2021.
\newblock Cautious Reinforcement Learning via Distributional Risk in the Dual
  Domain.
\newblock \emph{IEEE Journal on Selected Areas in Information Theory}, 2(2):
  611--626.

\bibitem[{Zhang, Vuong, and Ross(2020)}]{zhang20}
Zhang, Y.; Vuong, Q.; and Ross, K. 2020.
\newblock First Order Constrained Optimization in Policy Space.
\newblock In Larochelle, H.; Ranzato, M.; Hadsell, R.; Balcan, M.; and Lin, H.,
  eds., \emph{Advances in Neural Information Processing Systems}, volume~33,
  15338--15349. Curran Associates, Inc.

\bibitem[{Zhong et~al.(2020)Zhong, Fang, Yang, and Wang}]{ZhFaYaWa20}
Zhong, H.; Fang, E.~X.; Yang, Z.; and Wang, Z. 2020.
\newblock Risk-Sensitive Deep {RL:} {V}ariance-Constrained Actor-Critic
  Provably Finds Globally Optimal Policy.
\newblock ArXiv:2012.14098.

\end{thebibliography}

\section{Technical Appendix}
\subsection{A.1 Evaluation of cross-trajectory terms in policy gradient}\label{crossterm_proof}
In this section, we first show that the cross-trajectory terms in our policy gradient estimate (Equation 6) have an expectation value of $0$.  We then argue that their removal leads to a policy gradient estimate with reduced variance.
\begin{lemma}
Cross-trajectory terms of the form $f(\tau_i, \mathbf{a}_{j,t}, \mathbf{s}_{j,t})= u(r(\tau_i))\nabla_\theta\log\pi_\theta(\mathbf{a}_{j,t}|\mathbf{s}_{j,t})$, where $i \ne j$, do not contribute to the gradient estimate (Equation 6) in expectation.
\end{lemma}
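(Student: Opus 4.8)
\emph{Proof plan.} The whole statement reduces to the elementary score-function identity: for every state $\mathbf{s}$, $\int \pi_\theta(\mathbf{a}\mid\mathbf{s})\,\nabla_\theta\log\pi_\theta(\mathbf{a}\mid\mathbf{s})\,d\mathbf{a} = \nabla_\theta\!\int \pi_\theta(\mathbf{a}\mid\mathbf{s})\,d\mathbf{a} = \nabla_\theta 1 = 0$. Because the initial-state distribution and the transition kernel do not depend on $\theta$, averaging this over the states visited under $\pi_\theta$ gives $E_{\tau\sim p_\theta}\big[\nabla_\theta\log\pi_\theta(\mathbf{a}_t\mid\mathbf{s}_t)\big]=0$ for each $t$, and summing over $t$, $E_{\tau\sim p_\theta}\big[\sum_{t}\nabla_\theta\log\pi_\theta(\mathbf{a}_t\mid\mathbf{s}_t)\big]=0$; this is the only property of the policy the argument will need.

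Fix $i\ne j$. The plan is to condition on every trajectory in the batch except $\tau_j$: under this conditioning $u(r(\tau_i))$ is a constant (and so are the ordinal weights, which — per the representativeness assumption invoked around (\ref{sample1}) — we treat as fixed functionals of the reward ordering), while $\tau_j$ is still an independent draw from $p_\theta$. Hence $E\big[f(\tau_i,\mathbf{a}_{j,t},\mathbf{s}_{j,t})\big] = E\big[u(r(\tau_i))\big]\cdot E\big[\nabla_\theta\log\pi_\theta(\mathbf{a}_{j,t}\mid\mathbf{s}_{j,t})\big] = E[u(r(\tau_i))]\cdot 0 = 0$, and likewise after summing over $t$. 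Next I would collect, in (\ref{eq10}), the coefficient multiplying the score $\sum_{t}\nabla_\theta\log\pi_\theta(\mathbf{a}_{j,t}\mid\mathbf{s}_{j,t})$ of the $j$th ordered trajectory: it splits into the ``diagonal'' piece $\tfrac1N u(r(\tau_j))\big(w'(\tfrac{j}{N})+w'(\tfrac{j-1}{N})\big)$ retained in (\ref{gradJ}) plus a ``cross'' piece assembled only from the other trajectories' total rewards. Since every summand of the cross piece is of the form $f(\tau_i,\mathbf{a}_{j,t},\mathbf{s}_{j,t})$ with $i\ne j$, its expectation vanishes, so (\ref{gradJ}) has the same mean as (\ref{eq10}). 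The accompanying variance remark would then follow by the standard baseline-type computation: the cross terms are mean-zero and, in the relevant conditioning, uncorrelated with the retained diagonal terms, so discarding them cannot increase the variance.

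The one place that genuinely requires care — and the reason this is not a triviality as it would be for REINFORCE, which never forms such cross terms — is that sorting the batch by return couples each trajectory to the rest: the rank of $\tau_j$, hence the ordinal weight attached to it, depends on $r(\tau_j)$, so $u(r(\tau_i))$ and the weights are not strictly independent of $\tau_j$ once the reordering is carried out. I would dispatch this exactly as the surrounding text signals: treat the collected returns as representative of the true return distribution, so that $i/N\to P_\theta(r(\tau_i))$ and the ordinal weights become deterministic population quantities, decoupled from the action-level randomness within any single trajectory, which reduces the statement to the clean conditioning step above. (In the risk-neutral special case $w=\mathrm{id}$ the difficulty disappears outright: the cross coefficient of $\tau_j$ collapses to $\tfrac1N\sum_{i\ne j}u(r(\tau_i))$, manifestly independent of $\tau_j$ — the familiar REINFORCE baseline identity.)
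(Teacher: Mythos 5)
Your proof is correct and follows essentially the same route as the paper's: factor the expectation over the two independent trajectories and annihilate the inner expectation with the score-function identity $\int \pi_\theta(\mathbf{a}\mid\mathbf{s})\,\nabla_\theta\log\pi_\theta(\mathbf{a}\mid\mathbf{s})\,d\mathbf{a} = \nabla_\theta 1 = 0$. You are in fact somewhat more careful than the paper, which conditions on $\tau_i$ and stops there; your explicit treatment of the coupling that the rank-ordering induces between $\tau_j$ and the ordinal weights (dispatched via the representativeness assumption invoked around Equation 6) addresses a subtlety the paper's proof leaves implicit.
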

\begin{proof}
First, note that
\begin{equation}
    \begin{split}
        &E_{\tau_i \sim p_\theta(\tau), \tau_j \sim p_\theta(\tau)}f(\tau_i, \mathbf{a}_{j,t}, \mathbf{s}_{j,t}) \\
        = &E_{\tau_i \sim p_\theta(\tau), \tau_j \sim p_\theta(\tau)} u(r(\tau_i))\nabla_\theta\log\pi_\theta(\mathbf{a}_{j,t}| \mathbf{s}_{j,t})\\\nonumber
        = &E_{\tau_i \sim p_\theta(\tau)} \Bigg[ u(r(\tau_i)) E_{\tau_j \sim p_\theta(\tau)} \bigg[ \nabla_\theta\log\pi_\theta(\mathbf{a}_{j,t}|\mathbf{s}_{j,t}) \bigg| \tau_i \bigg] \Bigg]\nonumber
    \end{split}
\end{equation}

Then consider the innermost expectation:

\begin{equation}
\small
\begin{split}
&E_{\tau_j \sim p_\theta(\tau)} \bigg[ \nabla_\theta\log\pi_\theta(\mathbf{a}_{j,t}| \mathbf{s}_{j,t}) \bigg| \tau_i \bigg]  \\\nonumber
= &\int_{\mathbf{s}_{j,t} , \mathbf{a}_{j,t}} p(\mathbf{s}_{j,t} , \mathbf{a}_{j,t} | \pi_\theta, \tau_i)\nabla_\theta\log\pi_\theta(\mathbf{a}_{j,t}|\mathbf{s}_{j,t}) d\mathbf{a}_{j,t} d\mathbf{s}_{j,t} \\\nonumber
= &\int_{\mathbf{s}_{j,t}} p(\mathbf{s}_{j,t} | \pi_\theta, \tau_i) \int_{\mathbf{a}_{j,t}} \pi_\theta(\mathbf{a}_{j,t}|\mathbf{s}_{j,t})\nabla_\theta\log\pi_\theta(\mathbf{a}_{j,t}|\mathbf{s}_{j,t})d\mathbf{a}_{j,t} d\mathbf{s}_{j,t} \\\nonumber
= &\int_{\mathbf{s}_{j,t}} p(\mathbf{s}_{j,t} | \pi_\theta, \tau_i) \int_{\mathbf{a}_{j,t}} \nabla_\theta\pi_\theta(\mathbf{a}_{j,t}|\mathbf{s}_{j,t})d\mathbf{a}_{j,t} d\mathbf{s}_{j,t}\\\nonumber
= &\int_{\mathbf{s}_{j,t}} p(\mathbf{s}_{j,t} | \pi_\theta, \tau_i) \nabla_\theta\int_{\mathbf{a}_{j,t}}  \pi_\theta(\mathbf{a}_{j,t}|\mathbf{s}_{j,t})d\mathbf{a}_{j,t} d\mathbf{s}_{j,t}\\\nonumber
= &\int_{\mathbf{s}_{j,t}} p(\mathbf{s}_{j,t} | \pi_\theta, \tau_i) (\nabla_\theta1) d\mathbf{s}_{j,t}  = 0.\nonumber
\label{crossterms}
 \end{split}
\end{equation}

\end{proof}

\begin{lemma}\label{lemma15}  The removal of cross-trajectory terms in the expression (Equation 6) leads to reduced variance in the policy gradient estimate. \end{lemma}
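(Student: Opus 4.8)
The plan is to split the pre-reduction estimate (\ref{eq10}) into a ``diagonal'' and a ``cross'' part and treat the latter purely as added noise. Writing $\widehat g_{\mathrm{full}}$ for the right-hand side of (\ref{eq10}) and collecting its $j=i$ summands recovers exactly the reduced estimate (\ref{gradJ}), so $\widehat g_{\mathrm{full}}=\widehat g_{\mathrm{diag}}+\widehat g_{\mathrm{cross}}$ with $\widehat g_{\mathrm{diag}}$ the right-hand side of (\ref{gradJ}) and $\widehat g_{\mathrm{cross}}$ the sum, over $i\ne j$, of the weighted cross-trajectory terms $f(\tau_i,\mathbf a_{j,t},\mathbf s_{j,t})$ treated in the preceding lemma. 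By that lemma and linearity of expectation, $E[\widehat g_{\mathrm{cross}}]=0$, so $\widehat g_{\mathrm{diag}}$ and $\widehat g_{\mathrm{full}}$ are unbiased for the same vector. Measuring dispersion by $\mathrm{Var}(\widehat g):=E\|\widehat g-E\widehat g\|^2$ (the per-component statement follows by the same steps), I would then use the identity
\[
\mathrm{Var}(\widehat g_{\mathrm{full}})-\mathrm{Var}(\widehat g_{\mathrm{diag}})=\mathrm{Var}(\widehat g_{\mathrm{cross}})+2\,E\big[(\widehat g_{\mathrm{diag}}-E\widehat g_{\mathrm{diag}})^{\top}\widehat g_{\mathrm{cross}}\big],
\]
so that the lemma reduces to showing the right-hand side is nonnegative.

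The second step is to control the cross-covariance. Expanding $E[(\widehat g_{\mathrm{diag}}-E\widehat g_{\mathrm{diag}})^{\top}\widehat g_{\mathrm{cross}}]$ as a sum of products of one summand of (\ref{gradJ}) (trajectory $k$, step $s$) with one cross summand (utility-trajectory $i$, score-trajectory $j\ne i$, step $t$), every product with $j\ne k$ vanishes: the factor $\nabla_\theta\log\pi_\theta(\mathbf a_{j,t}|\mathbf s_{j,t})$ is then the only appearance of trajectory $j$, and conditioning on everything except $\mathbf a_{j,t}$ and applying the identity $\int\pi_\theta(\mathbf a|\mathbf s)\nabla_\theta\log\pi_\theta(\mathbf a|\mathbf s)\,d\mathbf a=\nabla_\theta\!\int\pi_\theta(\mathbf a|\mathbf s)\,d\mathbf a=0$ from the preceding lemma's proof nulls it in expectation. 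What survives are the intra-trajectory terms $j=k$; these sum to a quantity bounded in the batch size $N$ (schematically, the common cross-weight times $E\big[\gamma(r(\tau))\,u(r(\tau))\,\|\sum_t\nabla_\theta\log\pi_\theta(\mathbf a_t|\mathbf s_t)\|^2\big]$), whereas $\mathrm{Var}(\widehat g_{\mathrm{cross}})$ grows linearly in $N$: writing $\widehat g_{\mathrm{cross}}=\sum_j\beta_j\sum_t\nabla_\theta\log\pi_\theta(\mathbf a_{j,t}|\mathbf s_{j,t})$, the $\beta_j$ are, to leading order, the single nonzero constant $\bar\beta=E[u(r(\tau))\,w'(P_\theta(r(\tau)))]$, and the trajectory scores are conditionally mean-zero with bounded covariance, so a second-moment estimate gives $\mathrm{Var}(\widehat g_{\mathrm{cross}})$ of order $N\bar\beta^2$. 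Hence, by Cauchy--Schwarz, the right-hand side of the identity is at least $\sqrt{\mathrm{Var}(\widehat g_{\mathrm{cross}})}\big(\sqrt{\mathrm{Var}(\widehat g_{\mathrm{cross}})}-2\sqrt{\mathrm{Var}(\widehat g_{\mathrm{diag}})}\big)$, which is nonnegative once $\mathrm{Var}(\widehat g_{\mathrm{cross}})\ge 4\,\mathrm{Var}(\widehat g_{\mathrm{diag}})$; since $\mathrm{Var}(\widehat g_{\mathrm{diag}})$ is of order $1/N$ and $\mathrm{Var}(\widehat g_{\mathrm{cross}})$ of order $N$, this holds at every batch size of practical interest. (If one assumes in addition that $w'\ge0$ and $u\ge0$, the leading intra-trajectory covariance is itself nonnegative and the inequality is immediate with no condition on $N$.)

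The hard part is the sorting. The empirical weights $w'(i/N)$ -- hence the $\beta_j$ and the coefficient of each diagonal term -- are functions of the ranks of all $N$ full-episode rewards, so they are \emph{not} independent of the action sequences whose score factors the above conditioning integrates out, and a trajectory's reward is moreover correlated with its own score. Making the argument rigorous therefore needs either (i) the ``representative sample'' idealization already invoked in the derivation, under which $w'(i/N)$ is replaced by the population value $w'(P_\theta(r(\tau_i)))$ so that the weight carried by trajectory $i$ depends on $\tau_i$ alone and decouples from the other trajectories' noise, plus moment and regularity bounds on $u$, $w'$ and $\nabla_\theta\log\pi_\theta$ to absorb the residual order-statistic corrections; or (ii) an explicit concession that -- exactly as for the reward-to-go reduction of REINFORCE \citep{Wi92} that this parallels -- the claim is best read as a statement about the dominant-order variance rather than an exact inequality for every distribution and every $N$. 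I would present the skeleton of the first two paragraphs and flag this coupling plainly, since that is where either the real work or the honest hand-waving resides.
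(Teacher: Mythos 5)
Your proof starts from the same decomposition the paper uses --- write (\ref{eq10}) as $\widehat g_{\mathrm{diag}}+\widehat g_{\mathrm{cross}}$ and expand $\mathrm{Var}(\widehat g_{\mathrm{full}})=\mathrm{Var}(\widehat g_{\mathrm{diag}})+\mathrm{Var}(\widehat g_{\mathrm{cross}})+2\,\mathrm{Cov}(\widehat g_{\mathrm{diag}},\widehat g_{\mathrm{cross}})$ --- but it diverges at the decisive step, and in a way worth noting. The paper's proof asserts that the covariance vanishes outright, ``because each term in the covariance contains at least one trajectory that differs from the rest,'' i.e.\ it reuses the score-integrates-to-zero argument of the preceding lemma, and concludes the exact identity $\mathrm{Var}(\widehat g_{\mathrm{full}})-\mathrm{Var}(\widehat g_{\mathrm{diag}})=\mathrm{Var}(\widehat g_{\mathrm{cross}})\ge 0$. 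You correctly observe that this reuse is not licensed for every product: when the cross term's score trajectory coincides with the diagonal term's trajectory ($j=k$), the only ``isolated'' trajectory is the one supplying the utility factor $u(r(\tau_i))$, whose expectation is not zero, so those products survive and the covariance is not exactly zero. Your replacement --- Cauchy--Schwarz plus the order count $\mathrm{Var}(\widehat g_{\mathrm{cross}})=O(N)$ versus $\mathrm{Var}(\widehat g_{\mathrm{diag}})=O(1/N)$ --- buys a weaker, asymptotic-in-$N$ conclusion, but it is the more defensible argument: it exposes a real gap in the paper's own three-sentence proof rather than papering over it, and it makes vivid just how badly behaved the un-reduced estimator (\ref{eq10}) actually is. Your closing caveat about the rank-dependent weights $w'(i/N)$ coupling all trajectories is also well taken; the paper's Lemma A.1 and its baseline lemmas quietly rely on the same ``representative sample'' idealization when they pull $W(i/N)$ out of the expectation, so this is a shared, not a new, debt. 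In short: same skeleton, genuinely different (and more honest) treatment of the covariance; if you tighten the moment bounds on $u$, $w'$, and the scores, your version upgrades the paper's claim from an asserted identity to a proved asymptotic inequality.
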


\begin{proof}
Consider that the full expression (Equation 6) may be written as the sum of terms of the form $f(\tau_i, \mathbf{a}_{j,t}, \mathbf{s}_{j,t})= u(r(\tau_i))\nabla_\theta\log\pi_\theta(\mathbf{a}_{j,t}, \mathbf{s}_{j,t})$.  Its variance is the sum of the total variance from terms where $i=j$, the total variance from terms where $i \ne j$, and a term proportional to the covariance of these two totals.  However, because each term in the covariance contains at least one trajectory that differs from the rest, the above reasoning may be applied to argue that the covariance is 0. Hence, the removal of the cross-trajectory terms lowers the variance of the policy gradient estimate by the variance of the cross-trajectory terms.
\end{proof}

\subsection{A.2 Introduction of Static and State-Dependent Baselines} \label{baseline_app}
\begin{lemma}\label{lemma2} A static baseline of the utility may be added to the policy gradient estimate (Equation 9) without introduction of bias.
\end{lemma}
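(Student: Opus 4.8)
The plan is to show that replacing $u(r(\tau_i))$ by $u(r(\tau_i))-b$ in \eqref{gradJ} changes the estimator only by a term of mean zero, so that the estimate \eqref{grad_baseline} has the same expectation as \eqref{gradJ}. Since $b$ is a fixed scalar, the difference is
\[
\Delta_b \;=\; -\frac{b}{N}\sum_{i=1}^N\Bigl(w'\bigl(\tfrac{i}{N}\bigr)+w'\bigl(\tfrac{i-1}{N}\bigr)\Bigr)\sum_{t=1}^{T_i}\nabla_\theta\log\pi_\theta(\mathbf{a}_{i,t}\mid\mathbf{s}_{i,t}),
\]
and, writing $c_i := w'(i/N)+w'((i-1)/N)$ and using that the transition dynamics do not depend on $\theta$, so that $\sum_{t}\nabla_\theta\log\pi_\theta(\mathbf{a}_{i,t}\mid\mathbf{s}_{i,t})=\nabla_\theta\log p_\theta(\tau_i)$, it suffices to prove $E\bigl[\sum_{i=1}^N c_i\,\nabla_\theta\log p_\theta(\tau_i)\bigr]=0$. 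The only ingredient is the score identity already used in Appendix~A.1: integrating the score over the action at a step gives $\int\pi_\theta(\mathbf{a}\mid\mathbf{s})\nabla_\theta\log\pi_\theta(\mathbf{a}\mid\mathbf{s})\,d\mathbf{a}=\nabla_\theta 1=0$, and, summed over a whole episode, $\int p_\theta(\tau)\nabla_\theta\log p_\theta(\tau)\,d\tau=\nabla_\theta 1=0$.

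I would carry this out by first conditioning on the batch ordering --- equivalently on the ordered rewards $r(\tau_1)\le\cdots\le r(\tau_N)$ --- which freezes every coefficient $c_i$; inside this conditioning one pulls each (now deterministic) $c_i$ out of the expectation and applies the score identity to the corresponding $\nabla_\theta\log p_\theta(\tau_i)$, then averages over orderings. A cleaner variant that I would use as the workhorse is to introduce the baseline one step earlier, in the unsimplified form \eqref{sample1}: there the $u$-independent part, $-b\sum_i\bigl(w'(i/N)\nabla_\theta P_\theta(r(\tau_i))-w'((i-1)/N)\nabla_\theta P_\theta(r(\tau_{i-1}))\bigr)$, telescopes to the single boundary term $-b\,w'(1)\,\nabla_\theta P_\theta(r(\tau_N))$, the $\tau_0$ contribution vanishing by the stated convention $w'(0)\nabla_\theta P_\theta(r(\tau_0))\equiv 0$. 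Substituting the sampled expression $\nabla_\theta P_\theta(r(\tau_N))\approx\frac1N\sum_{j=1}^N\nabla_\theta\log p_\theta(\tau_j)$ from \eqref{sample2} turns this into $-\frac{b\,w'(1)}{N}$ times a sum over \emph{all} trajectories in the batch, whose expectation is $0$ by the episode-level score identity (the sum runs over every sampled trajectory, so the ordering is irrelevant); invoking Appendix~A.1 once more to see that the ensuing removal of cross-trajectory terms from the $u$-dependent part preserves the expectation then closes the argument.

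The delicate point --- and the reason the first, direct route cannot be written down naively --- is the coupling between the weight $c_i$ and the trajectory $\tau_i$: because $c_i$ is fixed by the rank of $\tau_i$ within the batch, it is a function of $r(\tau_i)$ and of the other trajectories' rewards, hence not independent of $\nabla_\theta\log p_\theta(\tau_i)$, and $\tau_i$ (an order statistic) is not a fresh draw from $p_\theta$, so $E[\nabla_\theta\log p_\theta(\tau_i)]$ need not vanish term by term. One must therefore be careful about exactly what is held fixed when the score identity is applied: conditioning on the full ordering does freeze $c_i$ but also restricts $\tau_i$, so the identity has to be used at the level of the per-step action integral with the state and remaining randomness correctly accounted for, or --- the route I would actually take --- the dependence should be dissolved via the telescoping above, which isolates the entire effect of the baseline into one term that is a plain, unweighted sum over the batch and hence manifestly mean-zero.
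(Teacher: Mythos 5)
Your proposal reaches the right conclusion, and your primary argument is genuinely different from the paper's. The paper proves the lemma directly: it writes the added term as $b\,(w'(i/n)+w'((i-1)/n))$ times a per-step score, pulls the weight coefficient out of the expectation on the grounds that it is fixed by the rank of the trajectory's reward, and kills the remaining integral with the identity $\int\pi_\theta(\mathbf{a}|\mathbf{s})\nabla_\theta\log\pi_\theta(\mathbf{a}|\mathbf{s})\,d\mathbf{a}=\nabla_\theta 1=0$ --- i.e., exactly your ``first, direct route.'' Your telescoping variant is a different decomposition: pushing the constant $-b$ back into Equation 6, the weighted differences collapse to the single boundary term $-b\,w'(1)\nabla_\theta P_\theta(r(\tau_N))$, whose sampled form is an \emph{unweighted} sum of scores over the entire batch and is therefore mean-zero by symmetry alone, with no need to reason about any individual ranked trajectory. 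What this buys is precisely the point you flag as delicate: in the paper's route the expectation is taken as if $\tau_i$ were a fresh draw from $p_\theta$, whereas it is an order statistic whose conditional law is tilted by the ranking, so $E[\nabla_\theta\log p_\theta(\tau_i)]$ need not vanish term by term at finite $N$; your critique of the direct route is in fact a fair critique of the paper's own proof (and of Lemma A.1, which has the same structure). Be aware, though, that your route does not fully dissolve the issue: to land on Equation 10 exactly you still invoke the cross-trajectory removal of Appendix A.1 (applied with $u-b$ in place of $u$), which reintroduces the same order-statistic idealization. So your argument is cleaner and more nearly airtight for the baseline term itself, but it is ultimately exact only to the same order of approximation as the rest of the derivation --- consistent with the estimator being only asymptotically consistent in the first place.
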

\begin{proof}
First define
\begin{equation}
\small
W\bigg(\frac{i}{n}\bigg) \equiv \bigg( w'\bigg(\frac{i}{n} \bigg) + w'\bigg(\frac{i-1}{n} \bigg) \bigg).  \nonumber
\end{equation}
The additional term is $0$ in expectation as
\begin{equation}
\small
\begin{split}
    &E_{\tau_i \sim p_\theta(\tau)} \bigg[ b\bigg( w'\bigg(\frac{i}{n} \bigg) + w'\bigg(\frac{i-1}{n} \bigg) \bigg) \nabla_\theta \log \pi_\theta(\mathbf{a}_{i,t}| \mathbf{s}_{i,t}) \bigg] \\ \nonumber
    = &b W\bigg(\frac{i}{n}\bigg) \int_{\mathbf{s}_{i,t}, \mathbf{a}_{i,t}} p(\mathbf{s}_{i,t} , \mathbf{a}_{i,t} | \pi_\theta)\bigg[ \nabla_\theta \log \pi_\theta(\mathbf{a}_{i,t}| \mathbf{s}_{i,t}) \bigg]d\mathbf{a}_{i,t} d\mathbf{s}_{i,t}\\ \nonumber
    = &b W\bigg(\frac{i}{n}\bigg)\ \int_{\mathbf{s}_{i, t}} \!\!\!\!\! p(\mathbf{s}_{i,t}|\pi_\theta) \int_{\mathbf{a}_{i, t}} \!\!\!\!\! \pi_\theta(\mathbf{a}_{i,t}| \mathbf{s}_{i,t}) \nabla_\theta \log\pi_\theta(\mathbf{a}_{i,t}| \mathbf{s}_{i,t})d\mathbf{a}_{i,t} d\mathbf{s}_{i,t}\\ \nonumber
    = &b W\bigg(\frac{i}{n}\bigg) \int_{\mathbf{s}_{i, t}} p(\mathbf{s}_{i,t}|\pi_\theta) \nabla_\theta \int_{\mathbf{a}_{i, t}} \pi_\theta(\mathbf{a}_{i,t}| \mathbf{s}_{i,t})d\mathbf{a}_{i,t} d\mathbf{s}_{i,t}\\ \nonumber
    = &b W\bigg(\frac{i}{n}\bigg) \int_{\mathbf{s}_{i, t}} p(\mathbf{s}_{i,t}|\pi_\theta) (\nabla_\theta 1) d\mathbf{s}_{i,t}  = 0. \nonumber
\end{split}
\end{equation}
The contribution of the weight terms $(w'(\frac{i}{n}) + w'(\frac{i-1}{n}))$ may be pulled out of the integral between the first and second line because of its independence on both state and action.  This term is fixed for a given trajectory by the rank of its reward amongst the rewards accumulated on all trajectories in the current batch.

\end{proof}
In our variance reduction experiments (Appendix A.5), the ``Base" agent uses $b$ equal to the mean of full-episode utility in the current batch. 

As described in the main text, we may further adjust the policy gradient estimate through introduction of per-step utilities.  In this case, we may justify the use of a state-dependent baseline through the following.
\begin{lemma}\label{lemma3} A state-dependent baseline $V_\phi(\mathbf{s}_{i,t})$ may be added to the policy gradient estimate (Equation 9) without introduction of bias, if per-step utilities are assumed.
\end{lemma}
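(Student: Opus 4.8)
The plan is to follow the structure of the preceding (static-baseline) lemma, changing only what is forced by the new baseline depending on the state rather than being a constant. Because utilities are assumed to be allocated per step, Equation~9 can first be rewritten with the utility-to-go $\sum_{t'=t}^{T_i}u(\mathbf{s}_{i,t'},\mathbf{a}_{i,t'})$ in place of the full-episode utility $u(r(\tau_i))$; this replacement leaves the expectation unchanged by the same causality argument used elsewhere here, since the dropped past utilities are independent of $\mathbf{a}_{i,t}$ given $\mathbf{s}_{i,t}$ and so vanish under the log-derivative identity. Subtracting the state-dependent baseline then adds the term
\begin{equation*}
-\frac{1}{N}\sum_{i=1}^N W\bigg(\frac{i}{N}\bigg)\sum_{t=1}^{T_i}\nabla_\theta\log\pi_\theta(\mathbf{a}_{i,t}\mid\mathbf{s}_{i,t})\,V_\phi(\mathbf{s}_{i,t}),
\end{equation*}
where $W(i/N)\equiv w'(i/N)+w'((i-1)/N)$ as in the preceding lemma, and it suffices to show each summand is zero in expectation.

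Fixing $i$ and $t$, I would first argue exactly as in the static-baseline proof that $W(i/N)$ is fixed for a given trajectory by the rank of its total reward within the batch, hence acts as a constant coefficient and pulls out of the expectation. What remains is $E_{\tau_i\sim p_\theta(\tau)}\big[\nabla_\theta\log\pi_\theta(\mathbf{a}_{i,t}\mid\mathbf{s}_{i,t})\,V_\phi(\mathbf{s}_{i,t})\big]$. Unlike the static case, $V_\phi(\mathbf{s}_{i,t})$ cannot be taken out of the state integral, so I would split the expectation over $(\mathbf{s}_{i,t},\mathbf{a}_{i,t})$ just as in the earlier appendix proofs:
\begin{equation*}
\int_{\mathbf{s}_{i,t}} p(\mathbf{s}_{i,t}\mid\pi_\theta)\,V_\phi(\mathbf{s}_{i,t})\int_{\mathbf{a}_{i,t}}\pi_\theta(\mathbf{a}_{i,t}\mid\mathbf{s}_{i,t})\,\nabla_\theta\log\pi_\theta(\mathbf{a}_{i,t}\mid\mathbf{s}_{i,t})\,d\mathbf{a}_{i,t}\,d\mathbf{s}_{i,t},
\end{equation*}
which is legitimate precisely because $V_\phi(\mathbf{s}_{i,t})$ is constant in $\mathbf{a}_{i,t}$. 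The inner integral equals $\nabla_\theta\!\int\pi_\theta(\mathbf{a}_{i,t}\mid\mathbf{s}_{i,t})\,d\mathbf{a}_{i,t}=\nabla_\theta 1=0$, which is the same identity already used for the cross-trajectory and static-baseline lemmas, so the whole summand vanishes. Summing over $i$ and $t$ gives the claim, and together with the utility-to-go step this shows the practical estimate~(\ref{rtg}) is unbiased relative to Equation~9.

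I expect the only real subtlety---everything else being the textbook baseline manipulation---to be the bookkeeping around the batch-dependent weight: one must state explicitly that $W(i/N)$, though a random function of the entire batch, is constant \emph{as a function of $(\mathbf{s}_{i,t},\mathbf{a}_{i,t})$ once the trajectory's rank is held fixed}, which is what licenses pulling it out of the per-timestep conditional expectation (this is exactly the step the static-baseline proof already relies on). The second point worth stating clearly is why a \emph{state}-dependent baseline is admissible where an action-dependent one would not be: $V_\phi$ must be a function of $\mathbf{s}_{i,t}$ alone so that the action integral can be isolated while the surviving state-integral of zero is still zero---and the per-step-utility hypothesis is exactly what makes $V_\phi(\mathbf{s}_{i,t})$, whose regression target is the expected utility-to-go, the natural object to subtract.
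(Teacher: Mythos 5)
Your proof is correct and follows essentially the same route as the paper's: pull the batch-rank weight coefficient $W(i/N)$ out as a constant, factor the expectation over $(\mathbf{s}_{i,t},\mathbf{a}_{i,t})$ so that $V_\phi(\mathbf{s}_{i,t})$ sits outside the action integral, and apply the log-derivative identity $\int\pi_\theta\nabla_\theta\log\pi_\theta\,d\mathbf{a}=\nabla_\theta 1=0$. The only difference is that you fold in the utility-to-go substitution, which the paper treats as a separate (cited) argument rather than part of this lemma; that does not change the correctness of your baseline argument.
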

\begin{proof}
As above, define
\begin{equation}
\small
W\bigg(\frac{i}{n}\bigg) \equiv \bigg( w'\bigg(\frac{i}{n} \bigg) + w'\bigg(\frac{i-1}{n} \bigg)\bigg). \nonumber
\end{equation}

The additional term is $0$ in expectation as
\begin{equation}
\small
\begin{split}
    &E_{\tau_i \sim p_\theta(\tau)} \bigg[\bigg( w'\bigg(\frac{i}{n} \bigg) + w'\bigg(\frac{i-1}{n} \bigg) \bigg) \nabla_\theta \log \pi_\theta(\mathbf{a}_{i,t}| \mathbf{s}_{i,t})V_\phi(\mathbf{s}_{i,t}) \bigg] \\\nonumber
    = &W\bigg(\frac{i}{n}\bigg) \int_{\mathbf{s}_{i,t}, \mathbf{a}_{i,t}} p(\mathbf{s}_{i,t} , \mathbf{a}_{i,t} | \pi_\theta) V_\phi(\mathbf{s}_{i,t}) \nabla_\theta \log \pi_\theta(\mathbf{a}_{i,t}| \mathbf{s}_{i,t}) d\mathbf{a}_{i,t} d\mathbf{s}_{i,t}\\ \nonumber
    = &W\bigg(\frac{i}{n}\bigg) \int_{\mathbf{s}_{i, t}} \!\!\!\!\! p(\mathbf{s}_{i,t}|\pi_\theta) V_\phi(\mathbf{s}_{i,t})  \int_{\mathbf{a}_{i, t}} \!\!\!\!\! \pi_\theta(\mathbf{a}_{i,t}| \mathbf{s}_{i,t}) \nabla_\theta \log\pi_\theta(\mathbf{a}_{i,t}| \mathbf{s}_{i,t})d\mathbf{a}_{i,t} d\mathbf{s}_{i,t}\\\nonumber
    = &W\bigg(\frac{i}{n}\bigg) \int_{\mathbf{s}_{i, t}} p(\mathbf{s}_{i,t}|\pi_\theta) V_\phi(\mathbf{s}_{i,t}) \nabla_\theta \int_{\mathbf{a}_{i, t}} \pi_\theta(\mathbf{a}_{i,t}| \mathbf{s}_{i,t})d\mathbf{a}_{i,t} d\mathbf{s}_{i,t} \\ \nonumber
     = &W\bigg(\frac{i}{n}\bigg) \int_{\mathbf{s}_{i, t}} p(\mathbf{s}_{i,t}|\pi_\theta) V_\phi(\mathbf{s}_{i,t}) (\nabla_\theta 1) d\mathbf{s}_{i,t} = 0. \nonumber
\end{split}
\end{equation}
The rationale for pulling the $w'$ terms out of the integral is the same as in Lemma \ref{lemma2}.
\end{proof}
Finally, we note that the ability to pull the contribution of the weight terms $(w'(\frac{i}{n}) + w'(\frac{i-1}{n}))$ to the front of Equation 11 allows us to formulate advantage estimates based on per-step utility.  Bootstrap estimates of the value function $V_\phi(\mathbf{s}_{i,t})$ and Generalized Advantage Estimation  as in \cite{ScMoLeJoAb16} can be conducted exactly as they are in standard on-policy learning, if rewards are replaced by per-step utilities.

\subsection{A.3 Cumulative Prospect Proximal Policy Optimization}\label{c3po_app}
Algorithm~\ref{alg:c3po} provides pseudocode for applying our risk-sensitive policy gradient in an unconstrained setting.  It is a special case of the more general, constrained method.

\begin{algorithm}
 \caption{Cumulative Prospect PPO (C3PO)}
 \begin{algorithmic}[1]
\REQUIRE Policy: initial parameters $\theta_0$, learning rate $\alpha_\theta$, updates per batch $M_\theta$
\REQUIRE Value: initial parameters $\phi_0$, learning rate $\alpha_\phi$, updates per batch $M_\phi$
\REQUIRE Stopping threshold $D_{\text{KL, stop}}$, discount factor $\gamma$
\FOR{$k = 0, 1, 2, \ldots$}
\STATE Collect set of episodes $\mathcal{D}_k = \{ \tau_i \}$ by running policy $\pi(\theta_k)$ in the environment
\STATE Compute discounted utilities-to-go: \\
\begin{center}
$\hat{u}(\mathbf{s}_{i, t}, \mathbf{a}_{i, t}) = \sum_{t'=t}^{T_i}\gamma^{t'-t}u(\mathbf{s}_{i, t'}, \mathbf{a}_{i, t'})$
\end{center}
\STATE Fit value function ($M_\phi$ steps): \\
\begin{center}
\small
$\phi_{k+1} = \text{arg}\min_{\phi} \frac{1}{\sum_iT_i} \sum_{i,t} \bigg( V_\phi(\mathbf{s}_{i, t}) - \hat{u}(\mathbf{s}_{i, t}, \mathbf{a}_{i, t}))  \bigg) ^2$
\end{center}
\STATE Update utility advantage estimates $A_u^\pi(\mathbf{s}, \mathbf{a})$ using new $V_\phi(\mathbf{s})$
\STATE Compute weight coefficients based on ordered full-episode rewards
\STATE Update policy using clipped-action policy gradient correction over $M_\theta$ steps with KL-based early stopping (threshold $D_{\text{KL, stop}}$): \\
\begin{center}
$\theta_{k+1} = \text{arg}\max_{\theta} \bigg(\frac{1}{N}\sum_{i=1}^N  \left(w'(\frac{i}{N})+w'(\frac{i-1}{N})\right) * $ \\
$\sum_{t=1}^{T_i}\nabla_\theta L_{\text{clip}}(\log \pi_\theta(\mathbf{a}_{i,t}| \mathbf{s}_{i,t}), A_u^\pi(\mathbf{s}_{i,t}, \mathbf{a}_{i,t})) \bigg)$
\end{center}
\ENDFOR
\end{algorithmic}
\label{alg:c3po}
\end{algorithm}

\subsection{A.4 Additional Information on Safety Gym}\label{safety_gym_app}
As mentioned in the Experiments section of the main text, we chose to evaluate our approach using the OpenAI Safety Gym \citep{RaAcAm19}.  The choice was governed by our desire to test in conditions with clear cost-benefit trade-offs, significant stochasticity, adequate complexity, and available benchmarks.  While our methods are not limited to particular task types or observation/action spaces, we found Safety Gym to be suitable for exploring their potential.  

The six environments chosen were the most obstacle-rich of the publicly available environments that used the ``Point'' and ``Car'' robots.  The Point robot is constrained to the 2D plane and has two control dimensions: one for moving forward/backward and one for turning. The Car robot also has two control dimensions, corresponding to independently actuated parallel wheels. It has a freely rotating wheel and, while it is not constrained to the 2D plane, typically remains in it.  While we expect our results to extend to the remaining default robot, ``Doggo'', we did not experiment with it because of the order of magnitude longer training times it exhibited in \cite{RaAcAm19}.
Several types of obstacles and tasks were present in the environments we evaluated.  In all cases, the robot is given a fixed amount of time (1000 steps) to complete the prescribed task as many times as possible and is motivated by both sparse and dense reward contributions.  In the ``Goal'' environments, the robot must navigate to a series of randomly-assigned goal positions, with a new target being assigned as soon as a goal is reached.  In the ``Button'' environments, the robot must reach and press a sequence of goal buttons while avoiding other buttons.  In the ``Push'' task, the robot must push a box to a series of goal positions.  The set of obstacles are different for each task; among the three environments there are a total of five different constraint elements (hazards, vases, incorrect buttons, pillars, and gremlins), each with different dynamics.  See \cite{RaAcAm19} for further details.

All of our experiments used a single indicator for overall cost at each time step (the OpenAI default).  In the unconstrained experiments, each cost event was assigned a fixed (negative) weight in the reward function.  For Push environments, this was 0.025.  For Button environments, it was 0.05.  For Goal environments, it was 0.075.  These choices were made to illustrate reasonable learning for the three different task types.

\subsection{A.5 Empirical Performance of Variance Reduction and Regularization Measures}\label{var_red_text}
To gauge the impact of the variance reduction and regularization techniques discussed in the main text, we evaluated their performance in maximizing the value function of Cumulative Prospect Theory \citep{TvKa92}.  This function has two integrals of the form used in Equation 2:

\begin{equation}
\begin{split}
J&_{CPT}(\theta) = -\int_{-\infty}^{\infty} u^-(r(\tau)) \frac{d}{dr(\tau)}\bigg( w^-(P_\theta(r(\tau)))\bigg)dr(\tau)\\
 + &\int_{-\infty}^{\infty} u^+(r(\tau))\frac{d}{dr(\tau)}\bigg( -w^+(1-P_\theta(r(\tau)))\bigg) dr(\tau)\nonumber
\label{cpt}
\end{split}
\end{equation}

In \cite{TvKa92}, the utility functions are computed relative to a reference point and reflect the tendency of humans to be more risk-averse in the presence of gains than in the presence of losses.  The weight functions $\{w^+, w^-\}$ model our inclination to emphasize the best and worst possible outcomes in our decision-making.

\begin{figure*}[h]
    \centering
    \includegraphics[width=0.234\textwidth]{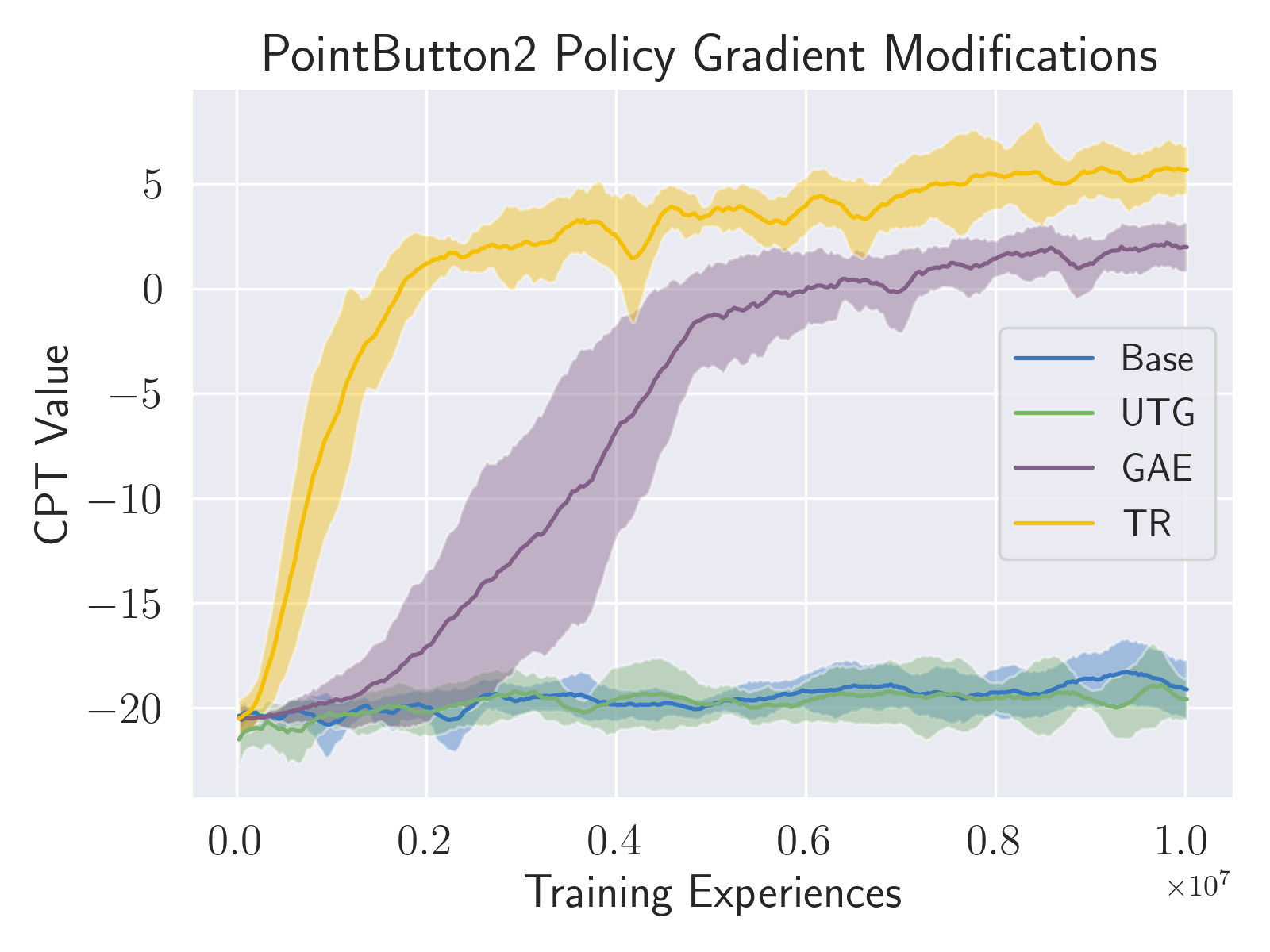}
    \includegraphics[width=0.234\textwidth]{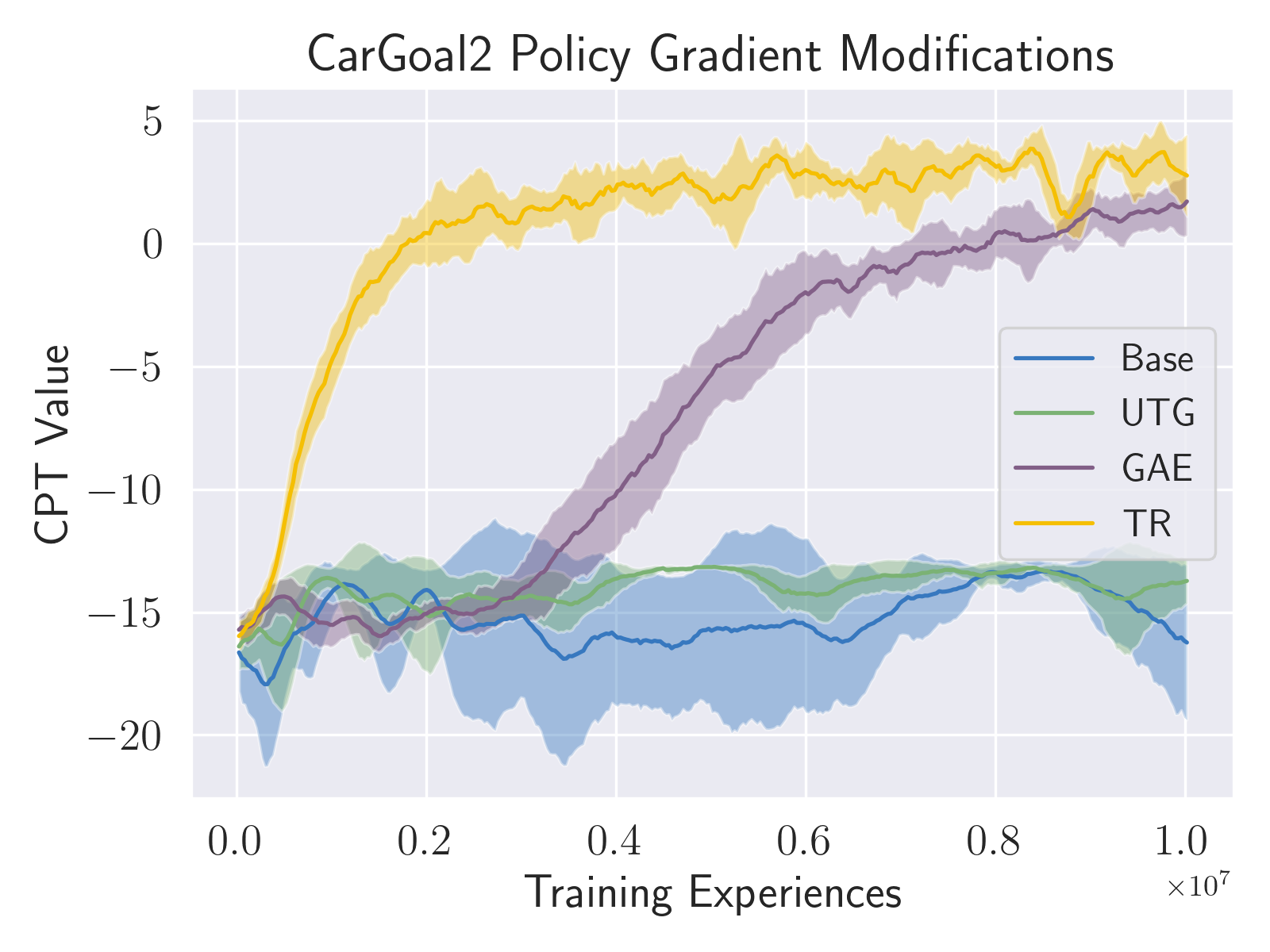}
    \includegraphics[width=0.234\textwidth]{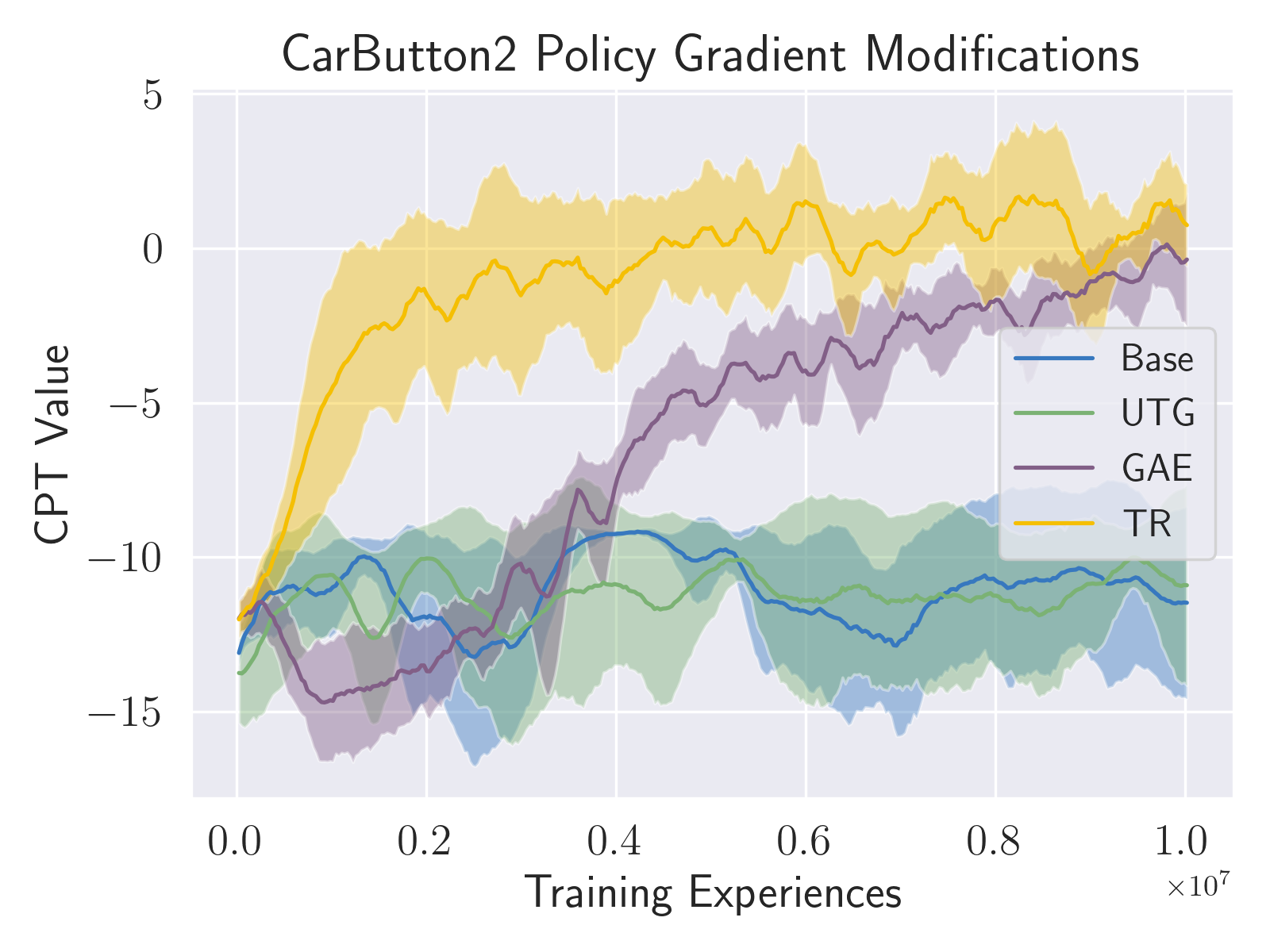}
    \caption{Impact of variance reduction and regularization measures on optimization of the CPT value function.  Here ``Base'' refers to the risk-sensitive policy gradient estimate (10), ``UTG'' adds utility-to-go and a neural network baseline (11), ``GAE'' incorporates generalized advantage estimation, and ``TR'' implements trust regions via clipping.  Shading represents the variation over five random seeds.}
    \label{varRed}
\end{figure*}

\begin{table*}[bp]
\centering
\begin{tabular}{|l|clll|}
\hline
\multicolumn{1}{|c|}{}     & \multicolumn{4}{c|}{\textbf{PointGoal2}}                                                                                                                                                             \\
                           & \textbf{Expected Reward}                        & \multicolumn{1}{c}{\textbf{CPT Value}}         & \multicolumn{1}{c}{\textbf{Wang($\eta=-0.5$)}}  & \multicolumn{1}{c|}{\textbf{Wang($\eta=0.5$)}}  \\ \hline
\textbf{Expected Reward}   & 15.6 $\pm$ 0.3                                  & 3.7 $\pm$ 0.3                                  & 18.5 $\pm$ 0.3                                  & 12.4 $\pm$ 0.4                                  \\
\textbf{CPT Value}         & 15.2 $\pm$ 1.4                                  & 3.3 $\pm$ 1.4                                  & 17.9 $\pm$ 1.3                                  & 12.1 $\pm$ 1.6                                  \\
\textbf{Wang($\eta=-0.5$)} & 11.0 $\pm$ 3.2                                  & -0.3 $\pm$ 3.1                                 & 14.0 $\pm$ 3.0                                 & 7.8 $\pm$ 3.3                                   \\
\textbf{Wang ($\eta=0.5$)} & \textcolor{blue}{\textbf{17.9 $\pm$ 0.4}} & \textcolor{blue}{\textbf{5.7 $\pm$ 0.5}} & \textcolor{blue}{\textbf{20.2 $\pm$ 0.5}} & \textcolor{blue}{\textbf{15.3 $\pm$ 0.5}} \\ \hline
                           & \multicolumn{4}{c|}{\textbf{CarButton2}}                                                                                                                                                             \\
                           & \textbf{Expected Reward}                        & \multicolumn{1}{c}{\textbf{CPT Value}}         & \multicolumn{1}{c}{\textbf{Wang($\eta=-0.5$)}}  & \multicolumn{1}{c|}{\textbf{Wang($\eta=0.5$)}}  \\ \hline
\textbf{Expected Reward}   & 11.5 $\pm$ 1.2                                  & 0.5 $\pm$ 1.5                                  & 16.0 $\pm$ 0.7                                  & 6.4 $\pm$ 1.7                                   \\
\textbf{CPT Value}         & 11.8 $\pm$ 1.2                                  & 1.2 $\pm$ 1.0                                  & 15.9 $\pm$ 1.0                                  & 7.1 $\pm$ 1.4                                   \\
\textbf{Wang($\eta=-0.5$)} & 8.7 $\pm$ 1.0                                   & -2.6 $\pm$ 1.7                                 & 13.8 $\pm$ 0.9                                  & 2.8 $\pm$ 1.8                                   \\
\textbf{Wang ($\eta=0.5$)} & \textcolor{blue}{\textbf{16.7 $\pm$ 0.5}} & \textcolor{blue}{\textbf{5.2 $\pm$ 0.8}} & \textcolor{blue}{\textbf{20.7 $\pm$ 0.4}} & \textcolor{blue}{\textbf{12.2 $\pm$ 0.7}} \\ \hline
                           & \multicolumn{4}{c|}{\textbf{PointPush2}}                                                                                                                                                             \\
                           & \textbf{Expected Reward}                        & \multicolumn{1}{c}{\textbf{CPT Value}}         & \multicolumn{1}{c}{\textbf{Wang($\eta=-0.5$)}}  & \multicolumn{1}{c|}{\textbf{Wang($\eta=0.5$)}}  \\ \hline
\textbf{Expected Reward}   & 4.1 $\pm$ 0.6                                   & 1.0 $\pm$ 0.5                                  & 6.1 $\pm$ 0.8                                   & 2.0 $\pm$ 0.6                                   \\
\textbf{CPT Value}         & 3.9 $\pm$ 1.5                                   & 0.9 $\pm$ 1.2                                  & 5.5 $\pm$ 1.9                                   & 2.2 $\pm$ 1.2                                   \\
\textbf{Wang($\eta=-0.5$)} & 3.2 $\pm$ 2.1                                   & -0.2 $\pm$ 1.5                                 & 5.2 $\pm$ 2.7                                   & 1.0 $\pm$ 1.6                                   \\
\textbf{Wang ($\eta=0.5$)} & \textcolor{blue}{\textbf{9.3 $\pm$ 2.1}}  & \textcolor{blue}{\textbf{4.5 $\pm$ 1.3}} & \textcolor{blue}{\textbf{11.8 $\pm$ 2.4}} & \textcolor{blue}{\textbf{6.6 $\pm$ 1.7}}  \\ \hline
\end{tabular}
\caption{Testing performance in terms of different methods for different training objectives.  Rows reflect training objectives and columns are statistics for metrics computed on test distributions for each of the 5 networks trained as described in the ``Differing Objectives'' section of the main text.  For each metric in each environment tested, the ``pessimistic'' weighting (Wang $\eta=0.5$) outperformed others.}
\centering
\end{table*}

More specifically, in these experiments we used the piecewise utility functions $u^+(r)=H(r-r_0)(r-r_0)^\sigma $ and $u^-=\lambda H(r_0-r)(r_0-r)^\sigma $ with static reference $r_0=10$, $\sigma=0.88$, and $\lambda=2.25$.  The weight function $w(p) = \frac{p^\eta}{(p^{\eta} + (1-p)^{\eta})^{\frac{1}{\eta}}}$ was used, where $\eta = 0.61$ for $r < r_0$ and $\eta = 0.69$ for $r \ge r_0$. The cost weight in each environment's reward function was fixed at 0.05, and the clipped action policy gradient correction \cite{pmlr-v80-fujita18a} was not used.

Four methods were evaluated:

\begin{itemize}
    \item \textbf{Base}: Risk-sensitive policy gradient with a static baseline (Equation 10)
    \item \textbf{UTG}: Base with utility-to-go and a neural network baseline (Equation 11)
    \item \textbf{GAE}: UTG with generalized advantage estimation (Equation 13 without clipping)
    \item \textbf{TR}: GAE with trust regions (Equation 13 with clipping (Equation 14))
\end{itemize}

As shown in Figure \ref{varRed}, the incorporation of these techniques increased the sample efficiency of the CPT value optimization significantly.  Consequently, we used the full complement (TR) in all other experiments.

\subsection{A.6 Weighting Effect on Metric Optimization}\label{diff_obj}

In Table 1, we display computed metrics of the reward distribution in testing for each of the objectives evaluated in our experiments.  We ran 1000 episodes for each of the 5 random seeds trained, with the numbers below reflecting statistics of the different objectives computed over those 5 evaluations.  In all cases, sampling was turned off- the agent simply chose the action corresponding to the max of its policy distribution.  In all cases, the agents with ``pessimistic'' risk profiles outperformed all others.

\subsection{A.7 Additional Comparisons with Unconstrained Methods}\label{unconstrained_app}
Figure 2 shows plots of average episode reward and average number of episode cost events throughout training for the remainder of the environments on which we conducted unconstrained runs.

\begin{figure}
    \centering
    \includegraphics[width=0.234\textwidth]{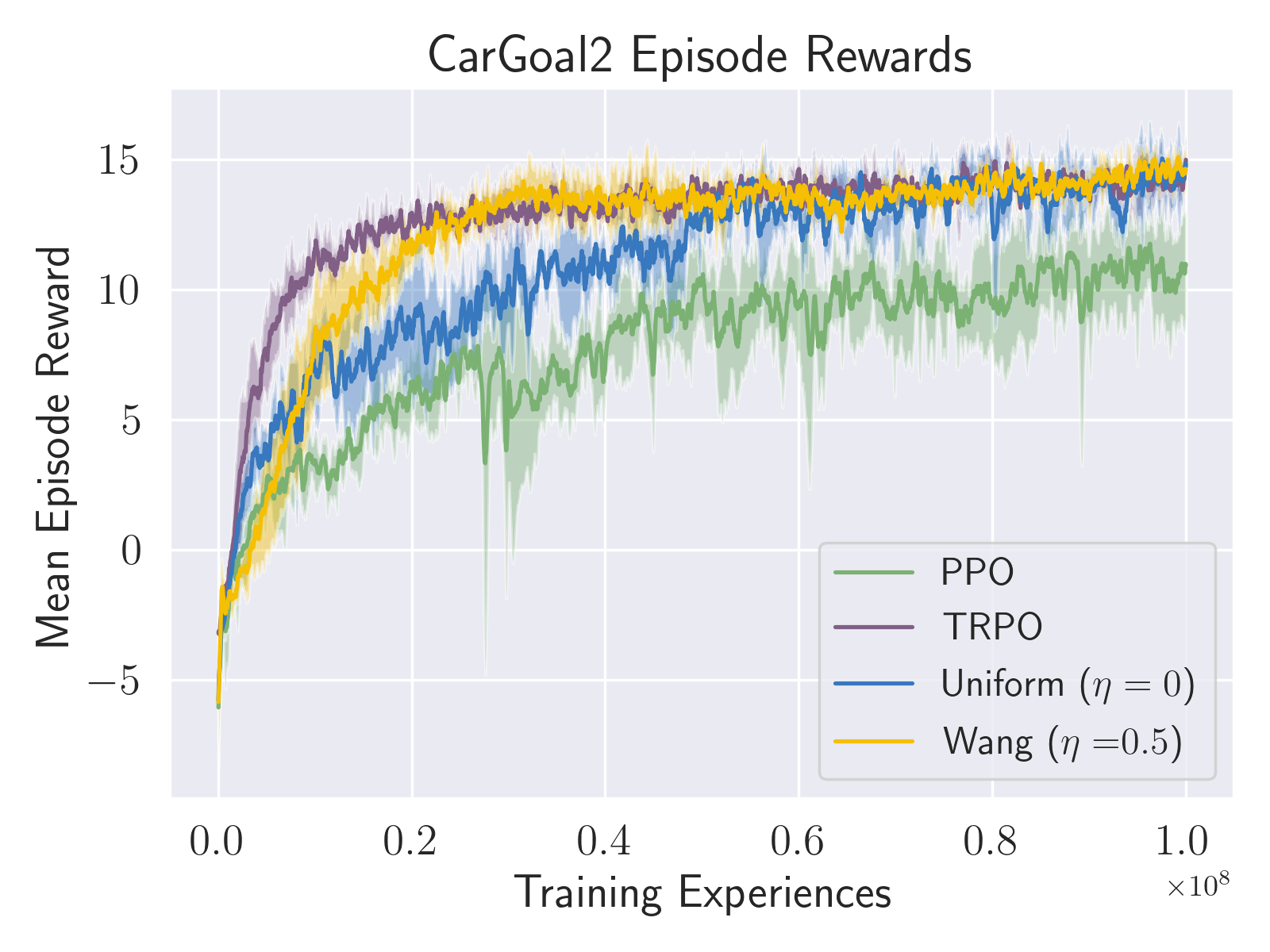}
    \includegraphics[width=0.234\textwidth]{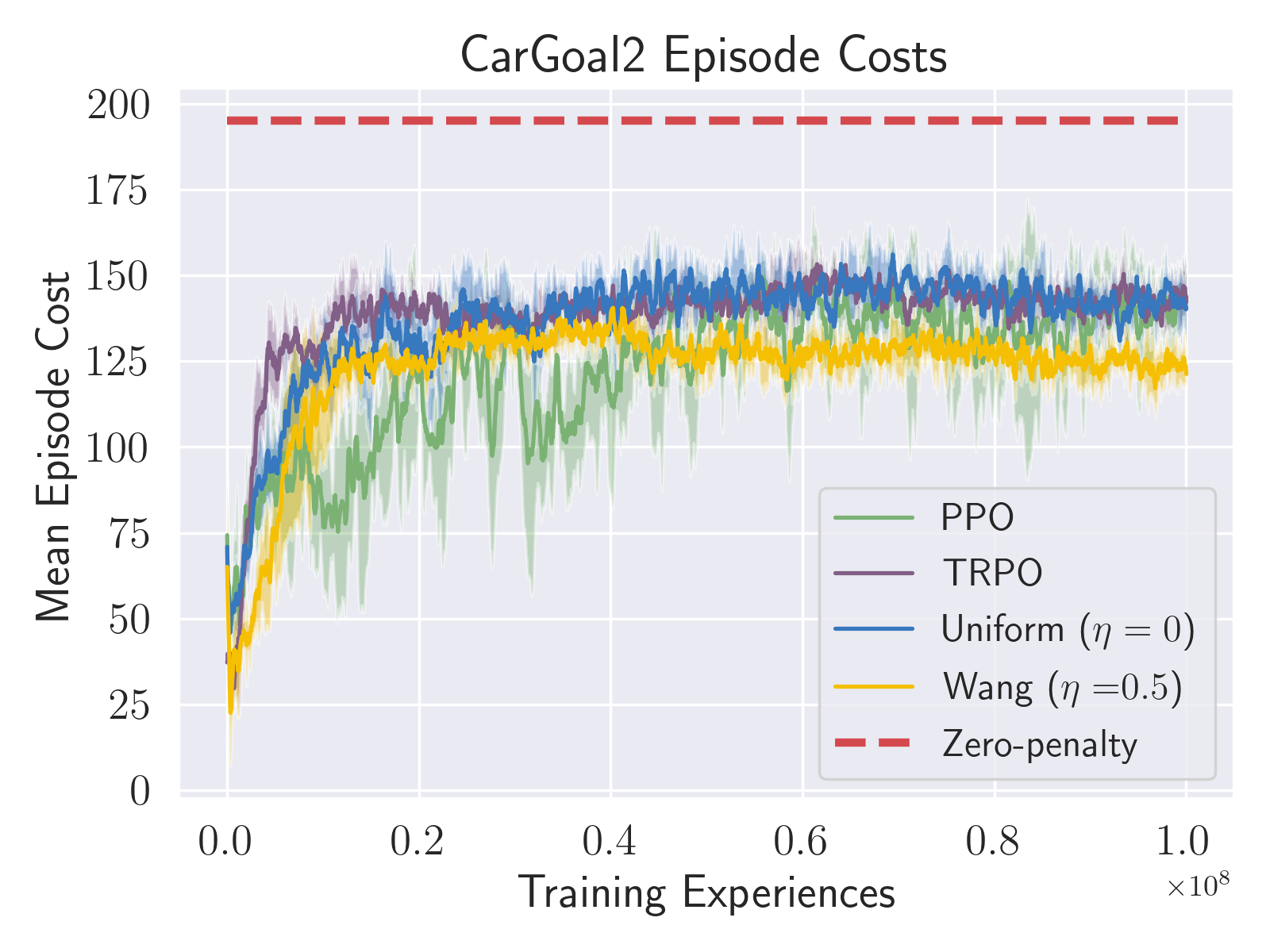}
    \includegraphics[width=0.234\textwidth]{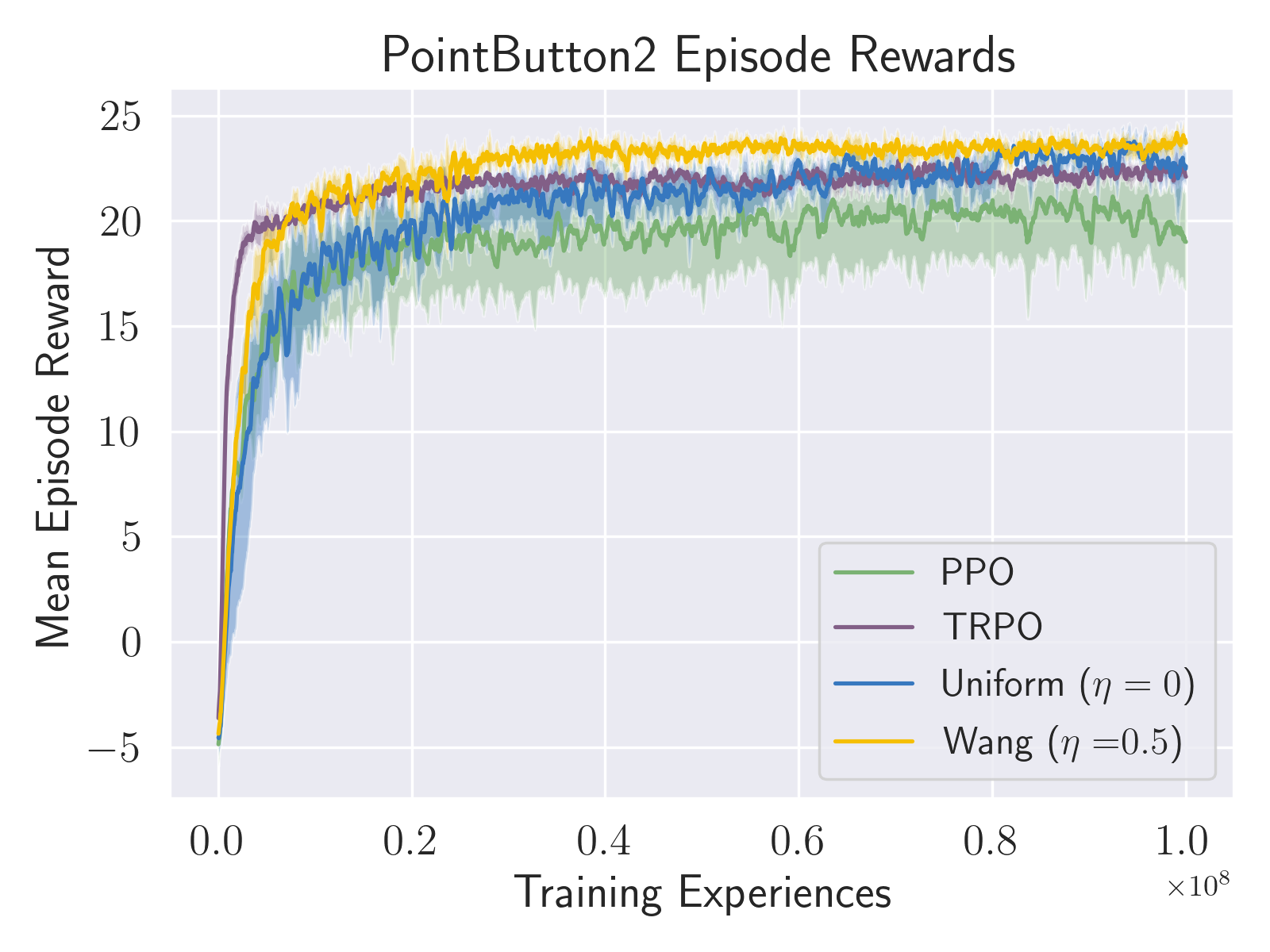}
    \includegraphics[width=0.234\textwidth]{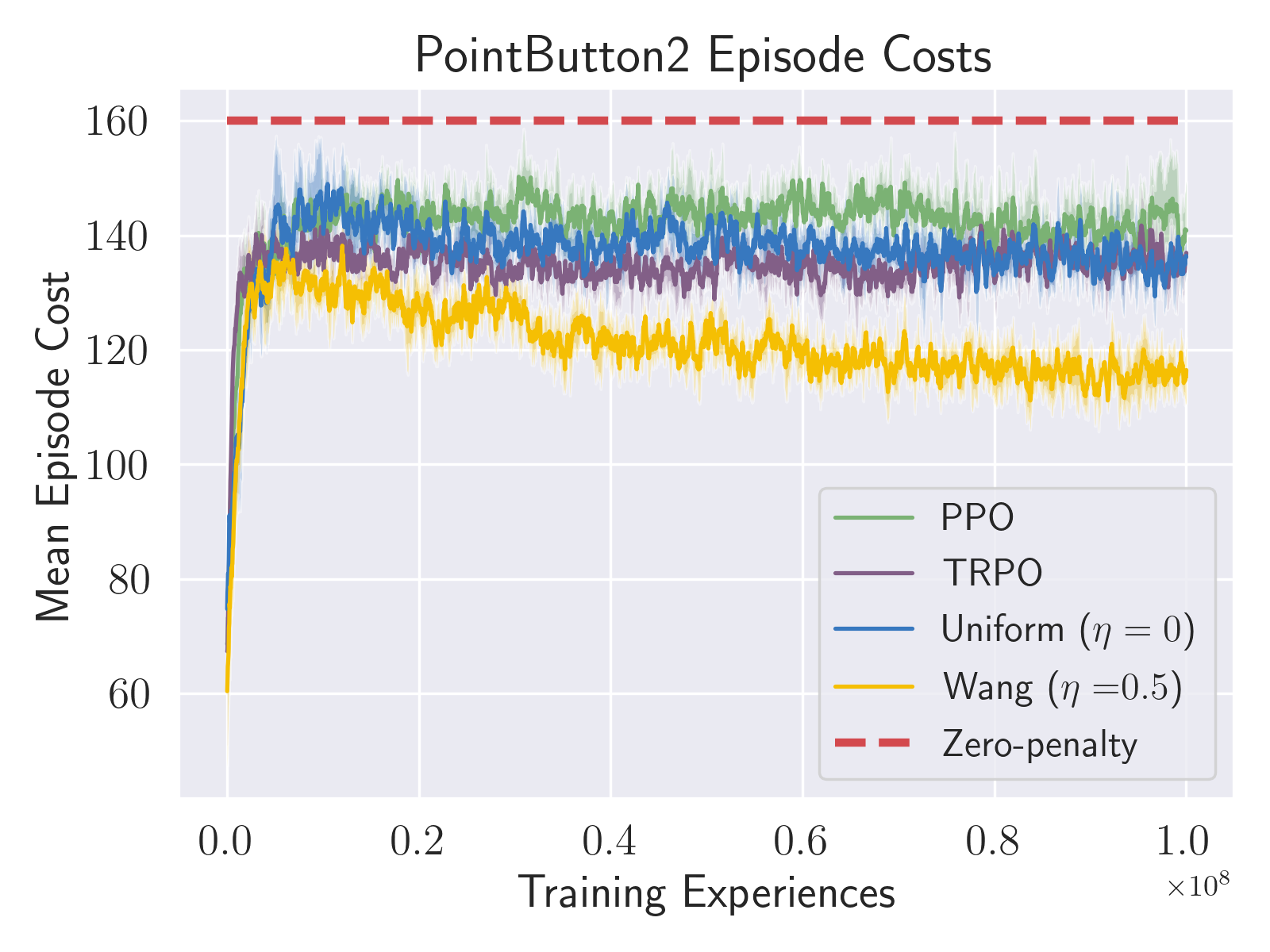}
    \includegraphics[width=0.234\textwidth]{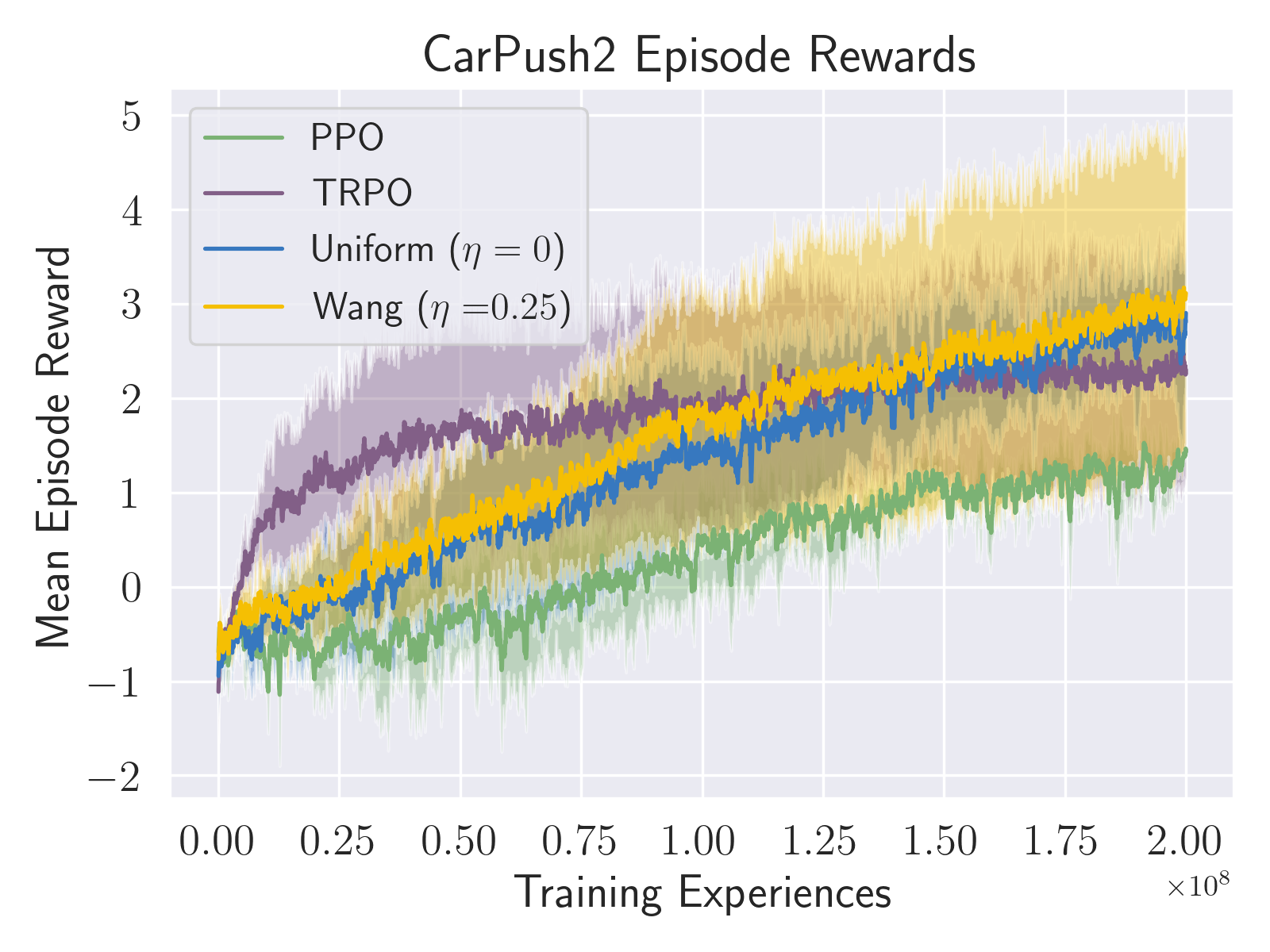}
    \includegraphics[width=0.234\textwidth]{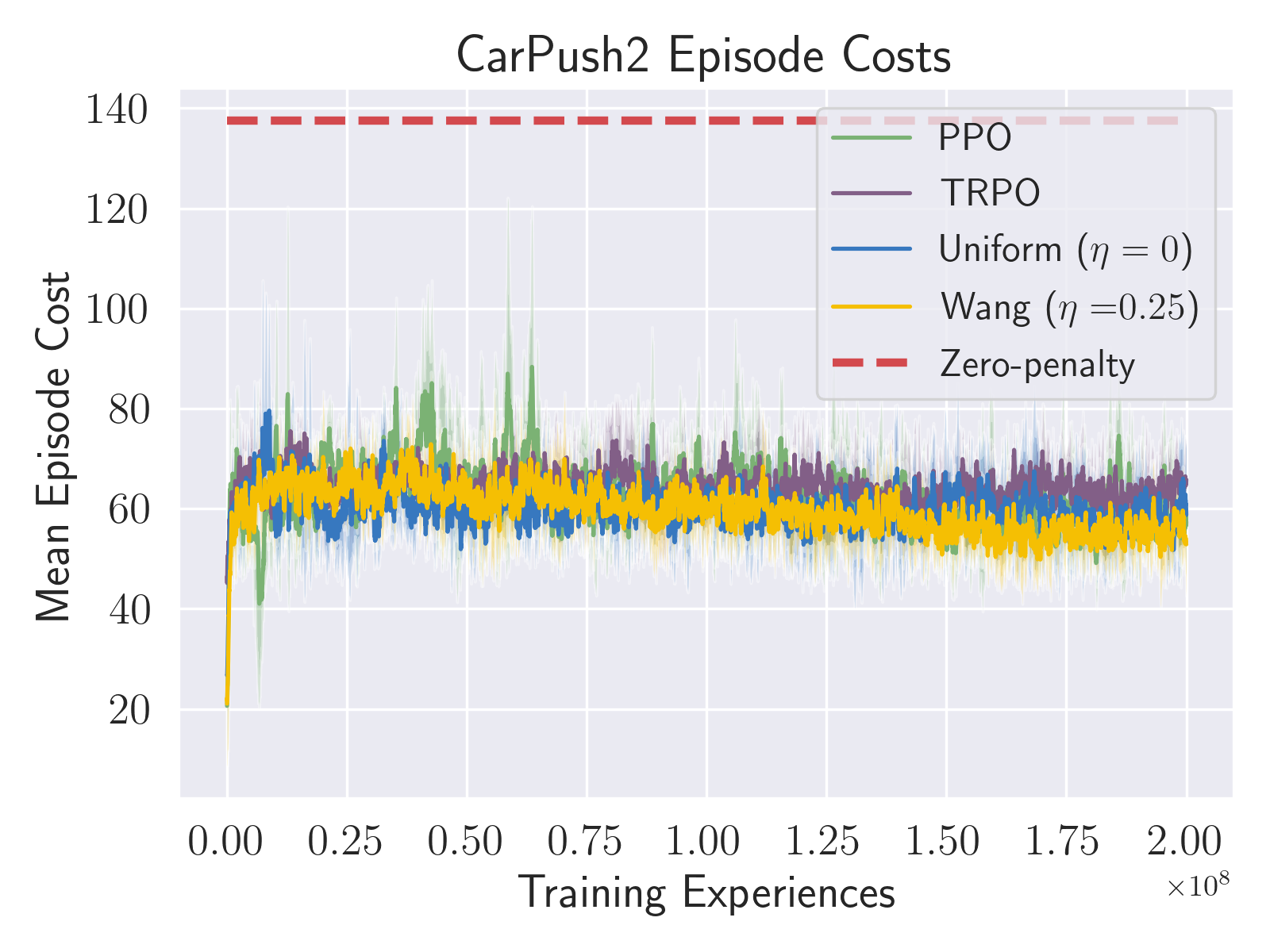}
    \centering
    \caption{Left column: Average episode reward (including penalty) over training for different unconstrained learning approaches in remaining three environments. Right column: Average number of cost events per episode (lower is better) over training for different unconstrained learning approaches in remaining three environments.}
    \label{long_cost_rest}
\end{figure}

\subsection{A.8 Additional Comparisons with Constrained Methods}\label{constrained_app}

In Figure 3 we include additional plots comparing our constrained, risk-sensitive approach to baselines.  We also include plots of the penalty weight (Figure 4), policy entropy (Figure 5), and truncated (taking into account control bounds) policy entropy (Figure 6) for each method and task.

As stated in the main text, the chosen baseline methods were found to significantly outperform the baselines used in \cite{RaAcAm19}.  PPO-Lagrangian and TRPO-Lagrangian, as formulated in the OpenAI safety-starter-agents, were found to be oscillatory for all but very slow learning rates of the Lagrange multiplier and/or very low cost limits. In the former case, agents were not able to reach the prescribed cost levels in a competitive amount of time.  We also ran
experiments with Constrained Policy Optimization (CPO; \cite{AcHeTaAb17}) using the OpenAI safety-starter-agents implementation.  However we decided to not include them because, as observed in \cite{RaAcAm19}, CPO did not adhere to the provided cost limits.

Our configurations of RCPO-PPO, RCPO-TRPO, and FOCOPS were chosen to provide stable learning that converged to the target cost level in a reasonable amount of time.  Our implementation of FOCOPS differed from the original \cite{zhang20} in that we used the Adam optimizer to update the Lagrange multiplier.  We found this change to significantly stabilize the learning, improving the strength of the baseline.

\begin{figure}
    \centering
    \includegraphics[width=0.234\textwidth]{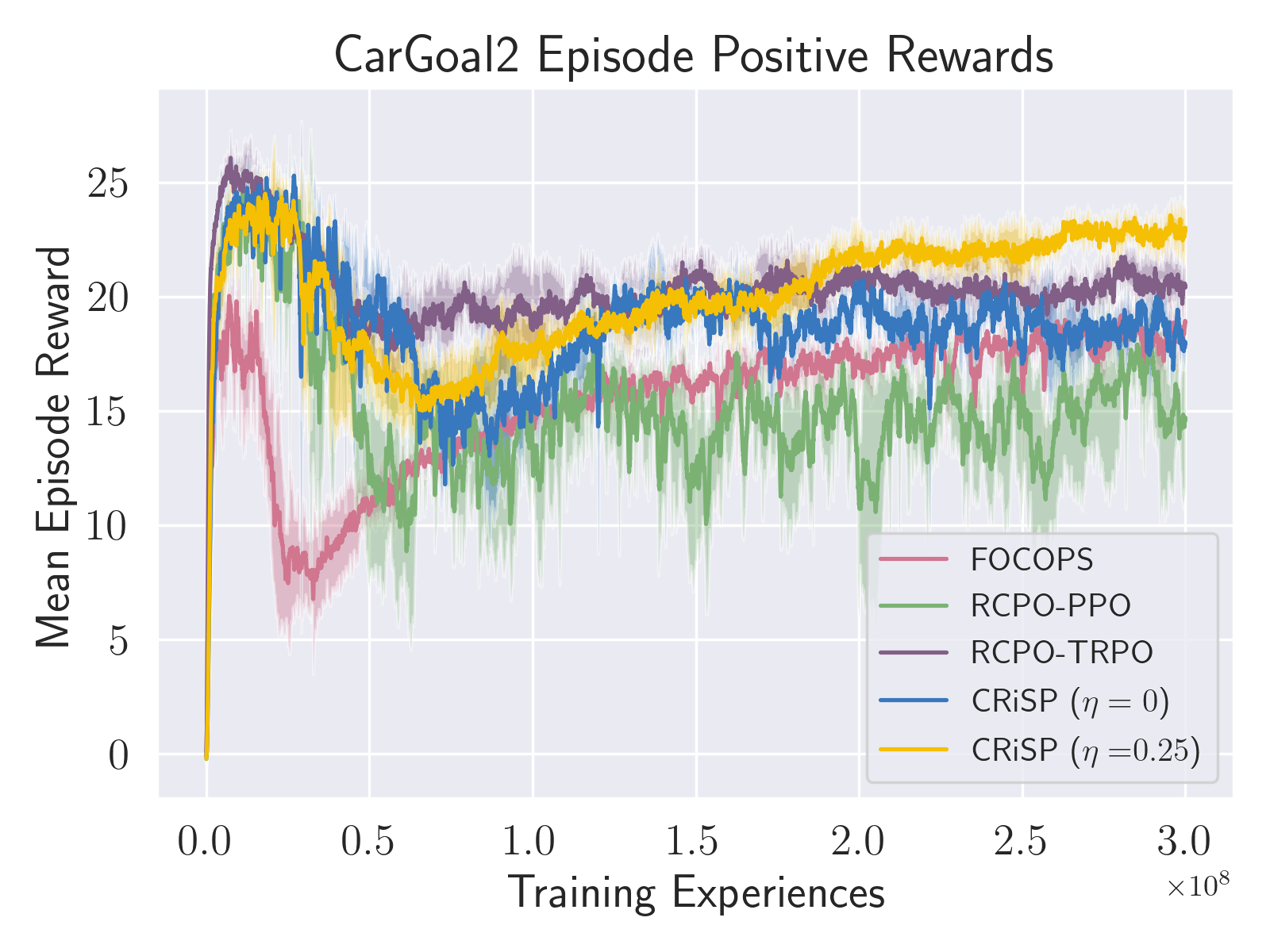}
    \includegraphics[width=0.234\textwidth]{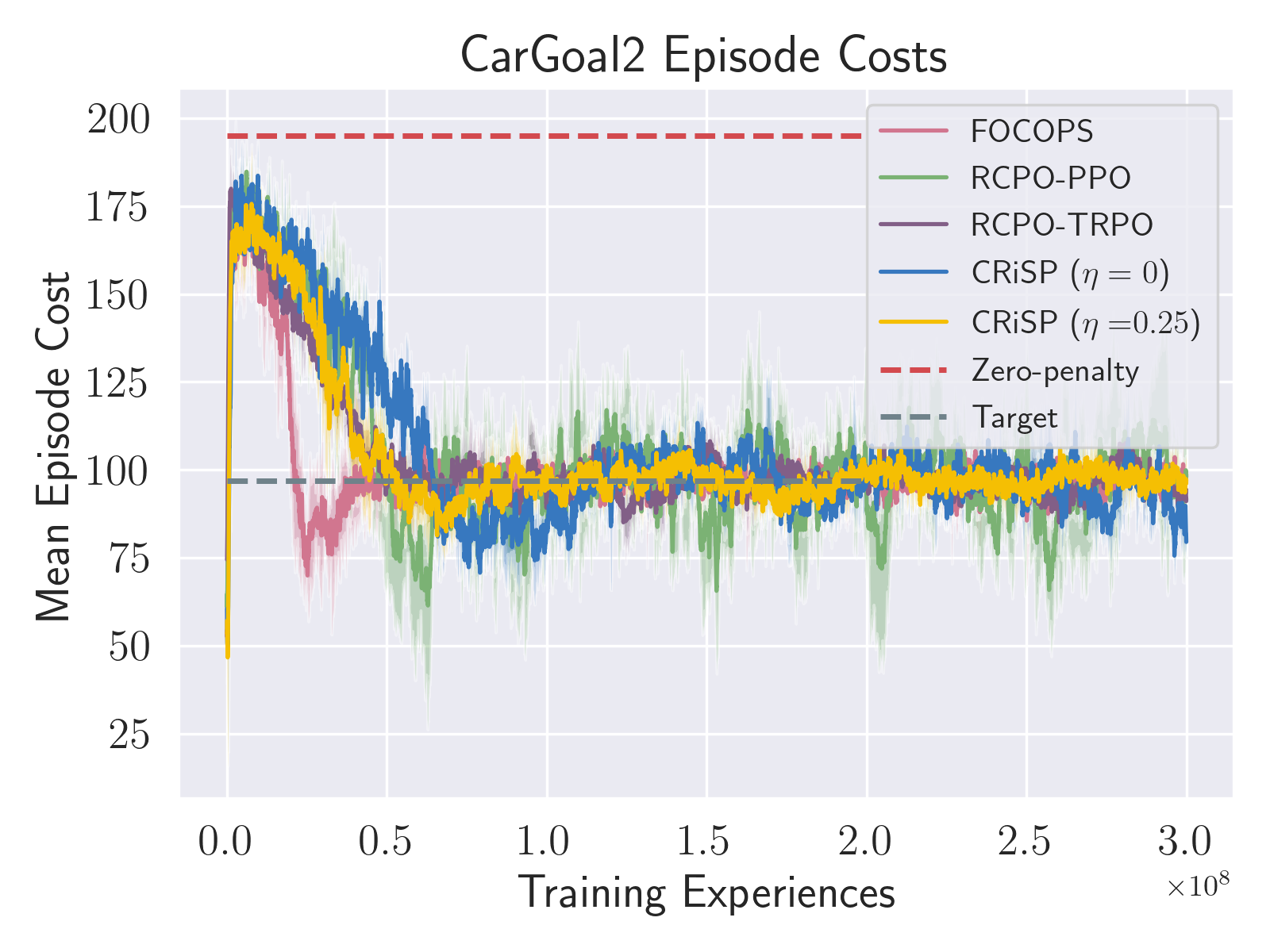}
    \includegraphics[width=0.234\textwidth]{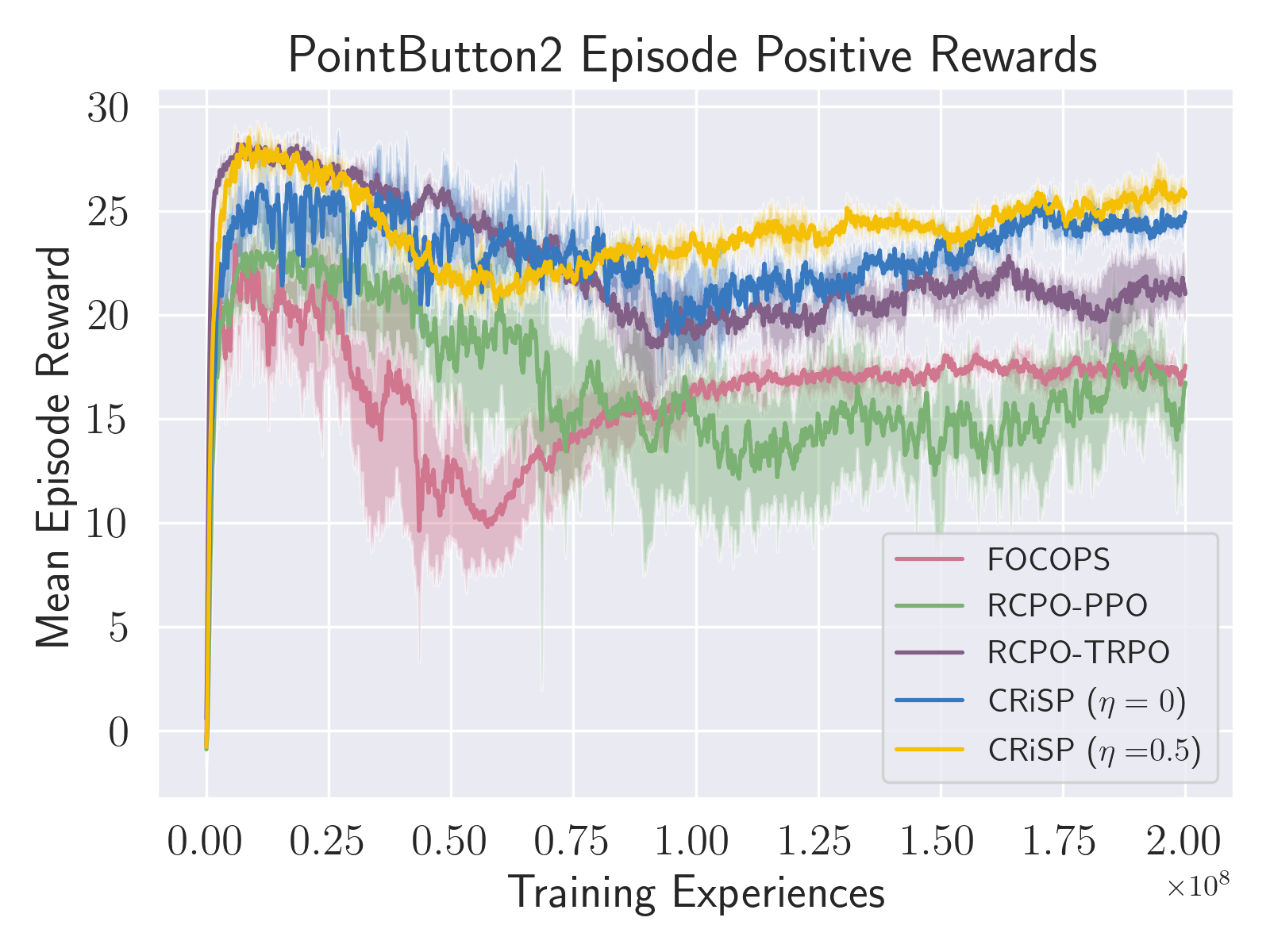}
    \includegraphics[width=0.234\textwidth]{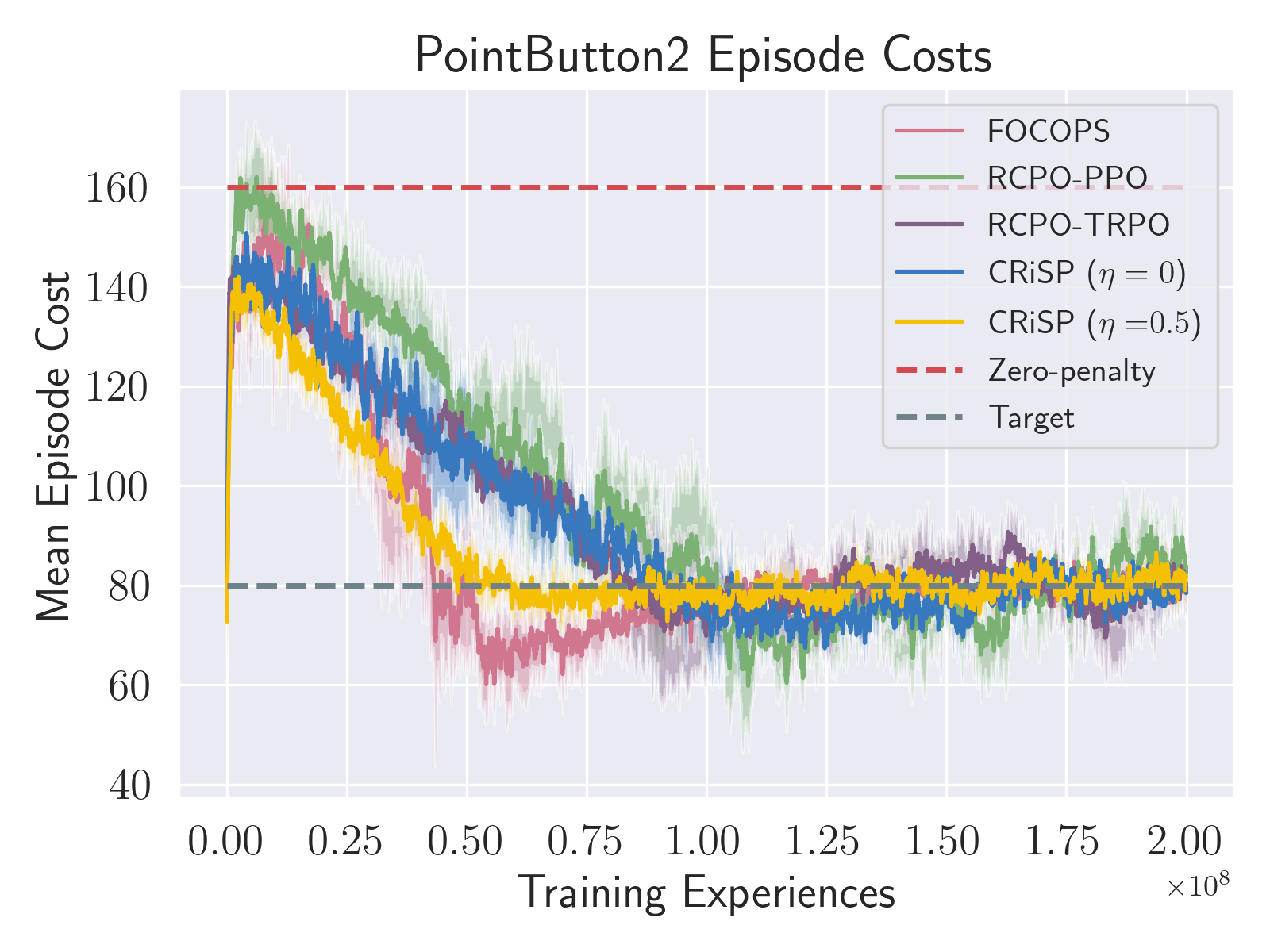}
    \includegraphics[width=0.234\textwidth]{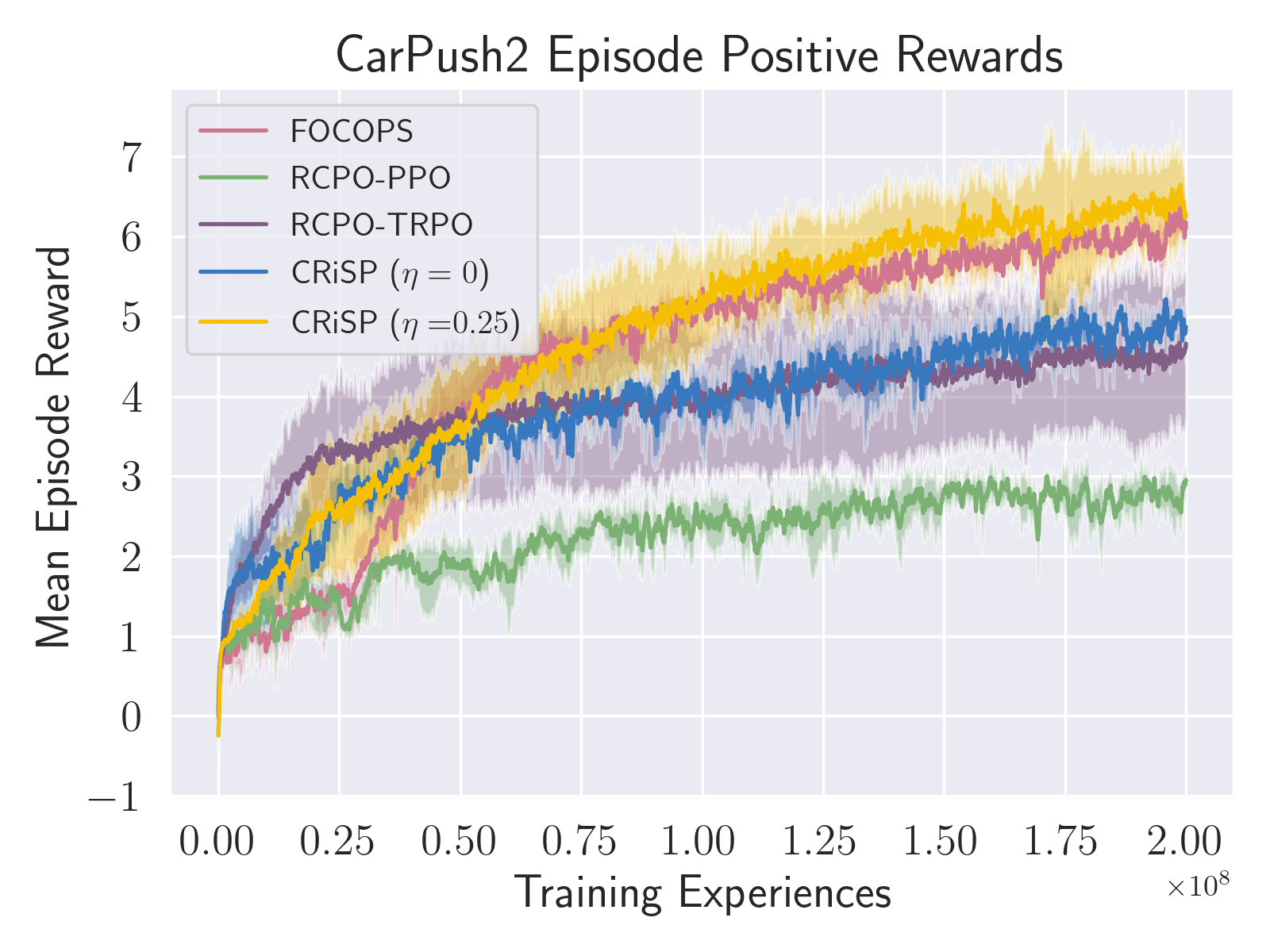}
    \includegraphics[width=0.234\textwidth]{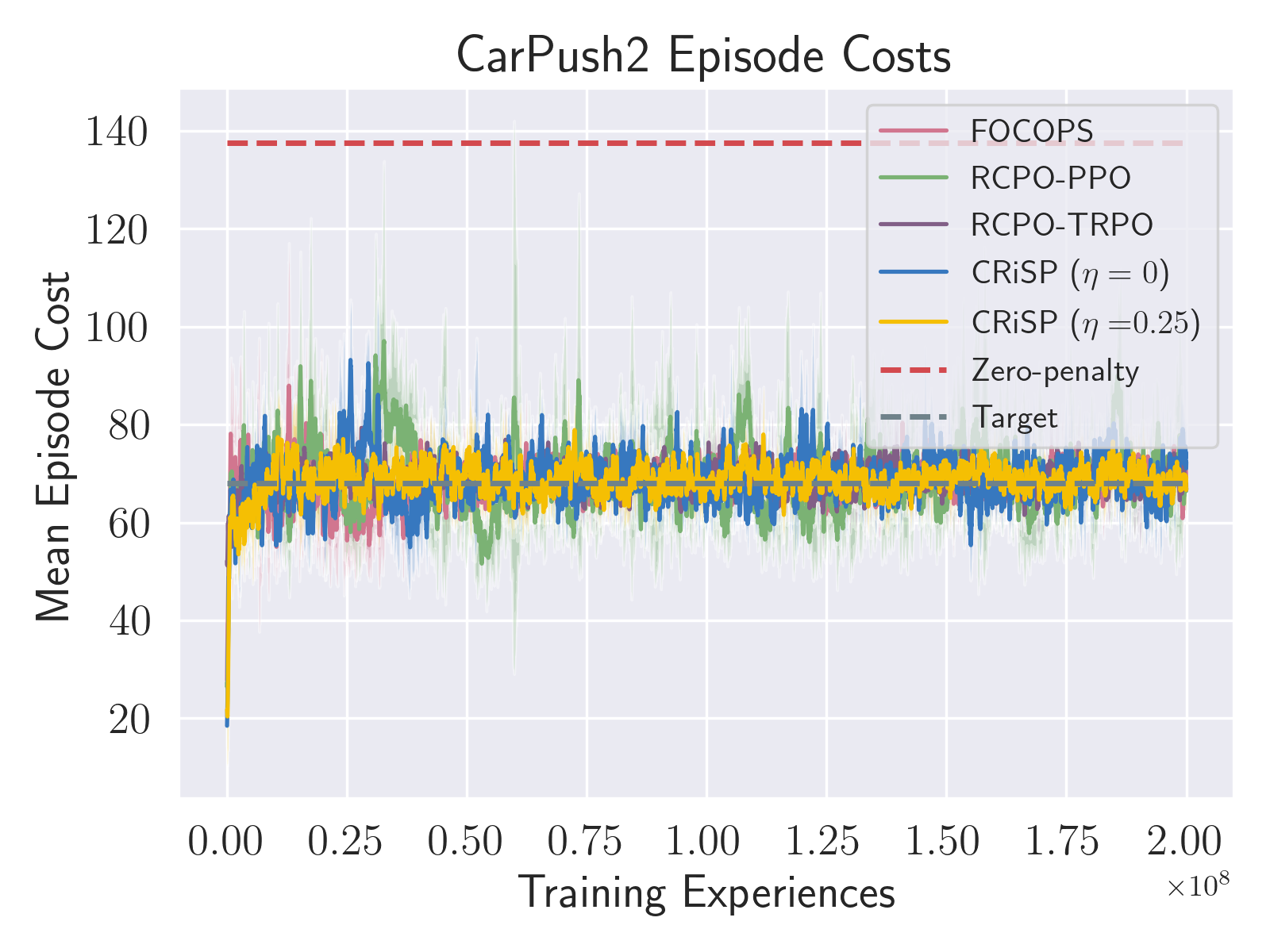}
     \caption{Our constrained, pessimistic, risk-sensitive method (CRiSP; yellow) accumulates larger positive rewards (left column) at the prescribed cost levels (right column) than other methods.}
    \label{constr2}
\end{figure}

\begin{figure}
    \centering
    \includegraphics[width=0.234\textwidth]{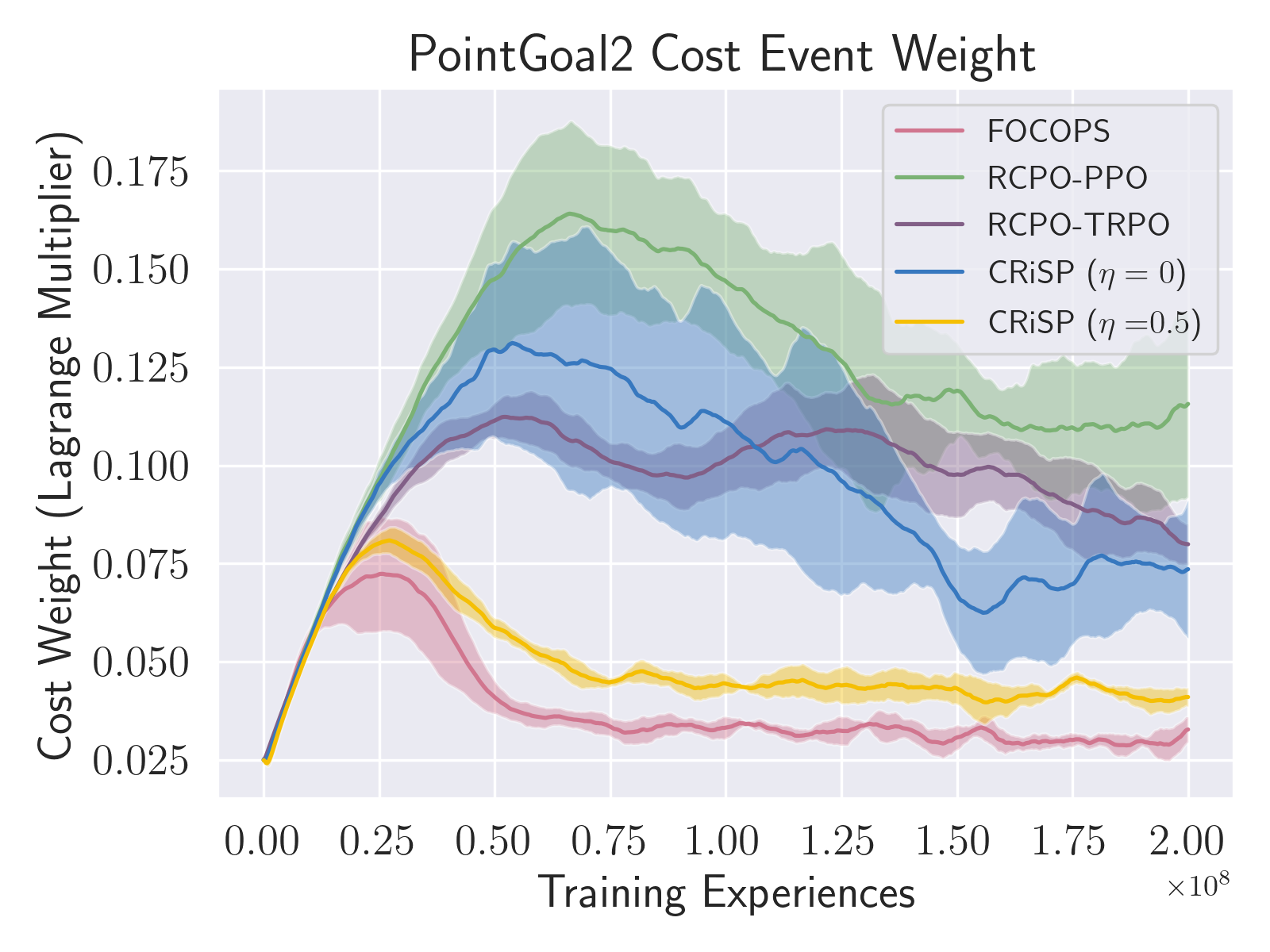}
    \includegraphics[width=0.234\textwidth]{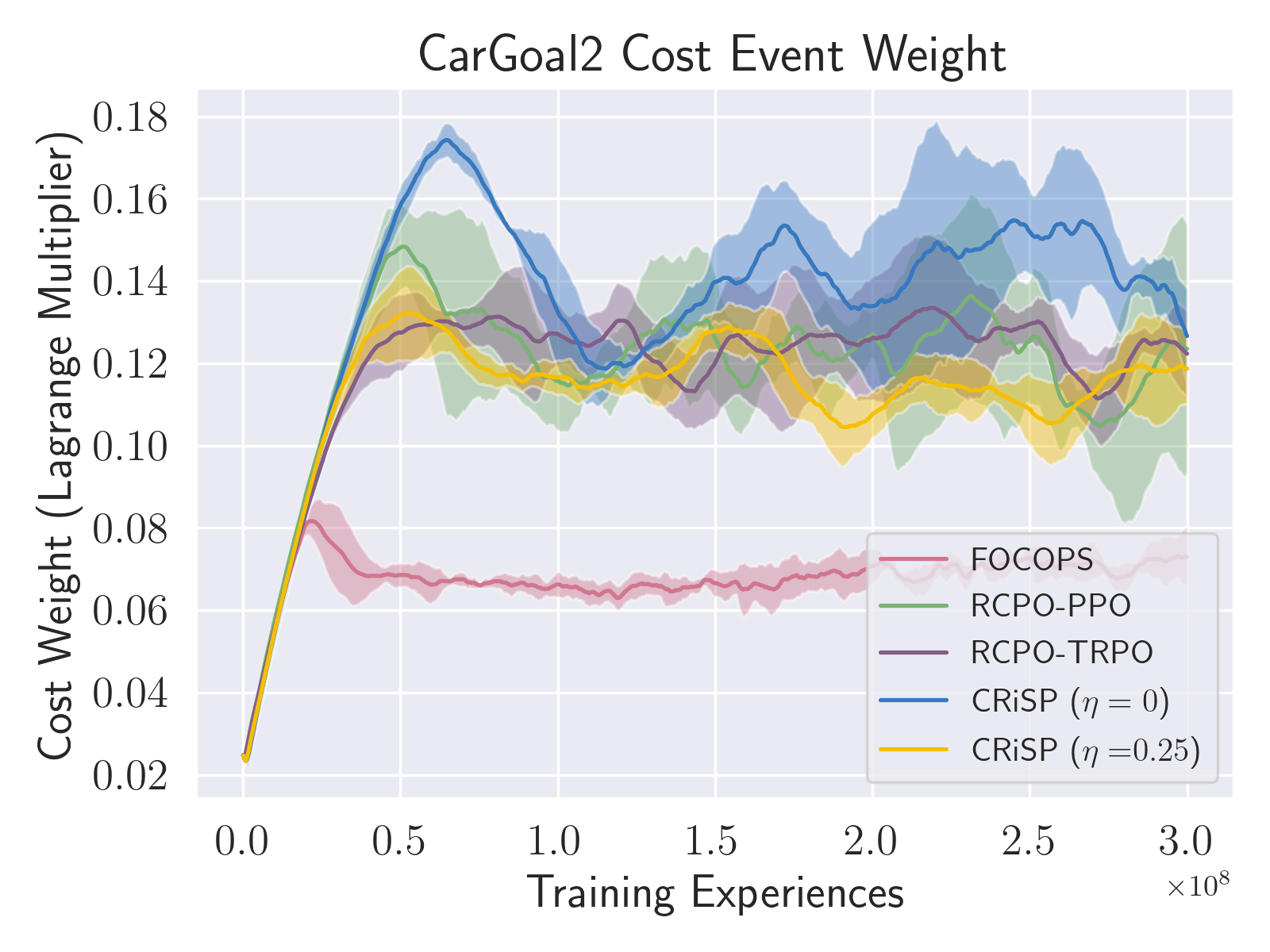}
    \includegraphics[width=0.234\textwidth]{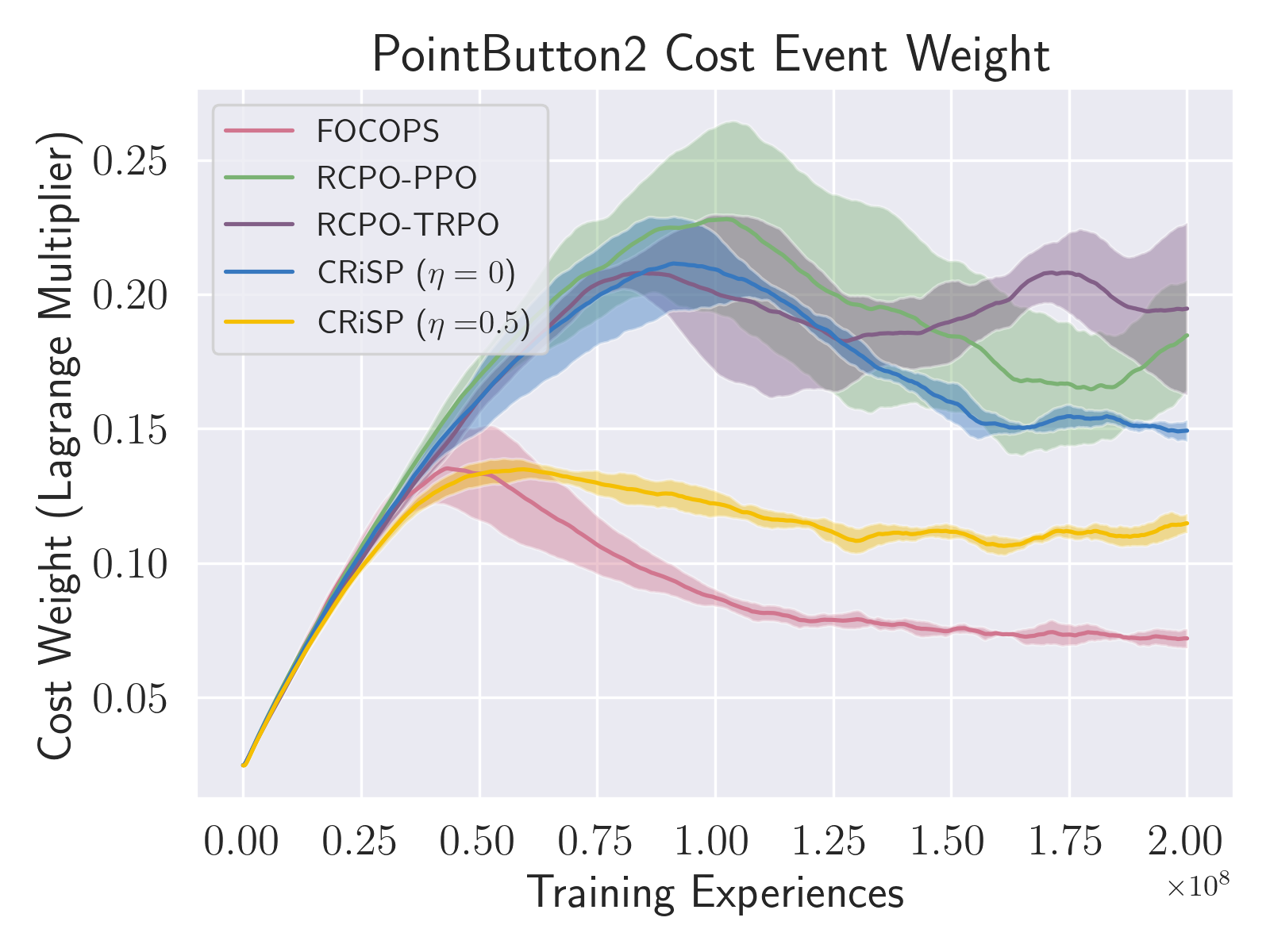}
    \includegraphics[width=0.234\textwidth]{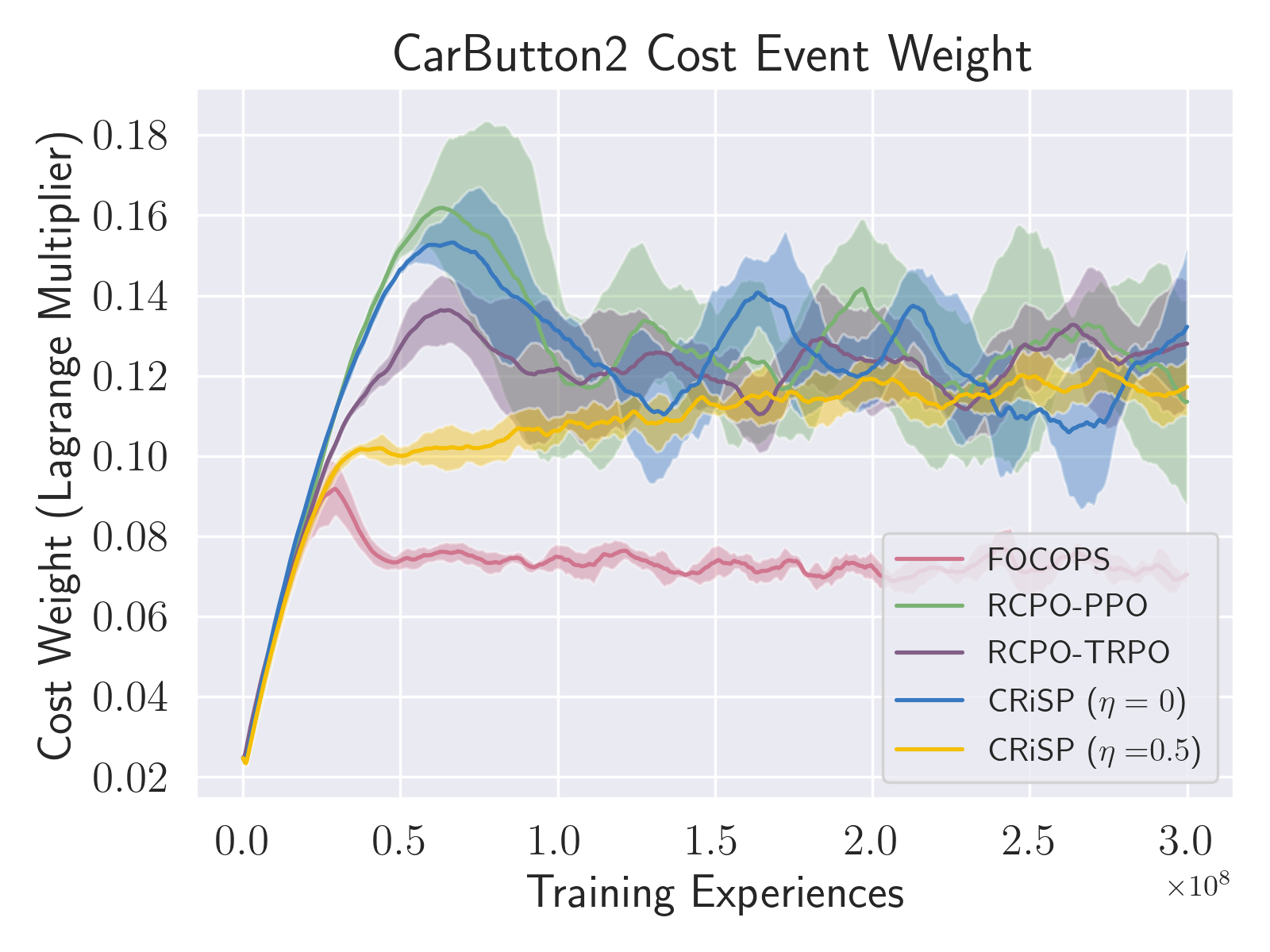}
    \includegraphics[width=0.234\textwidth]{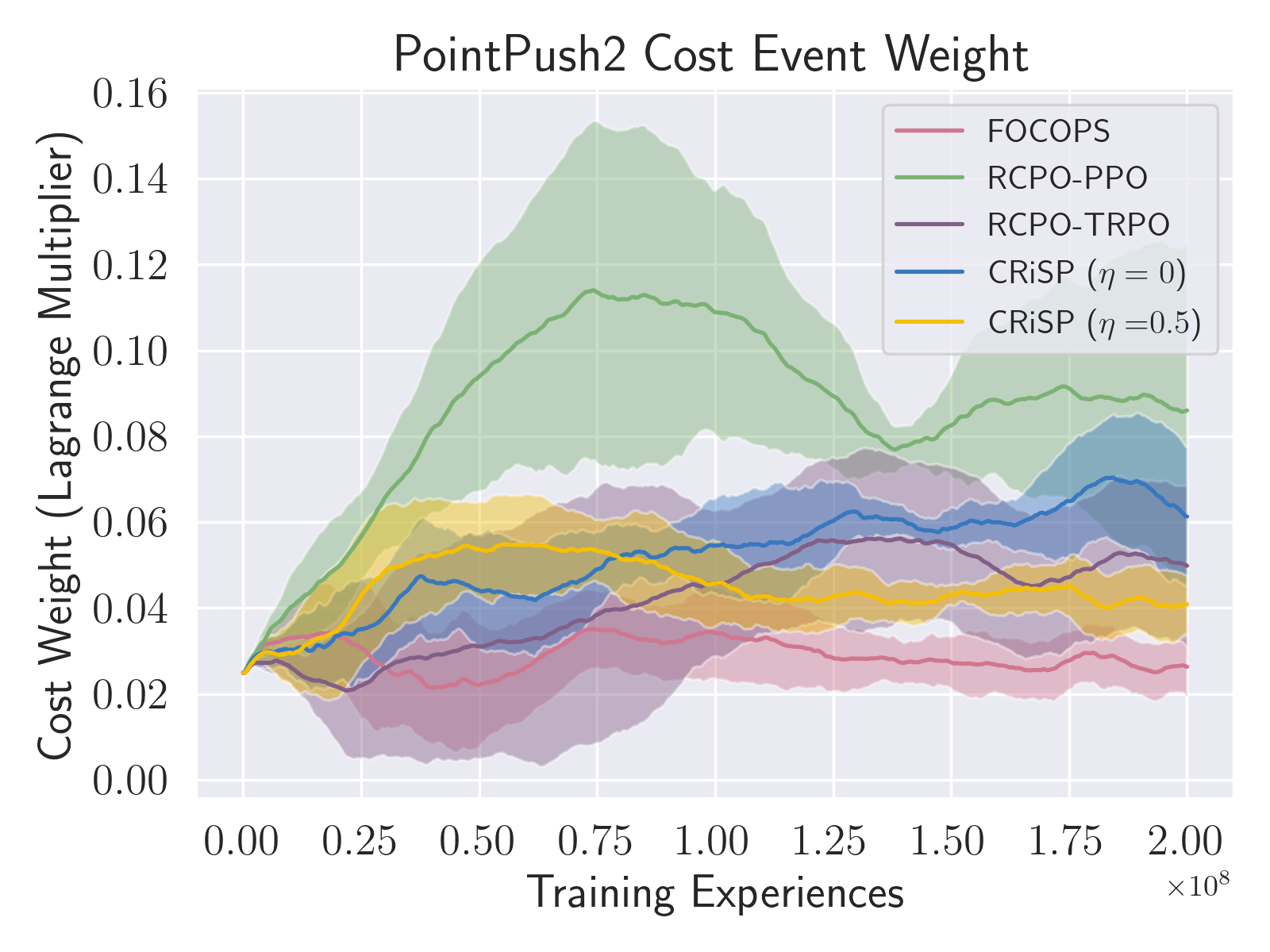}
    \includegraphics[width=0.234\textwidth]{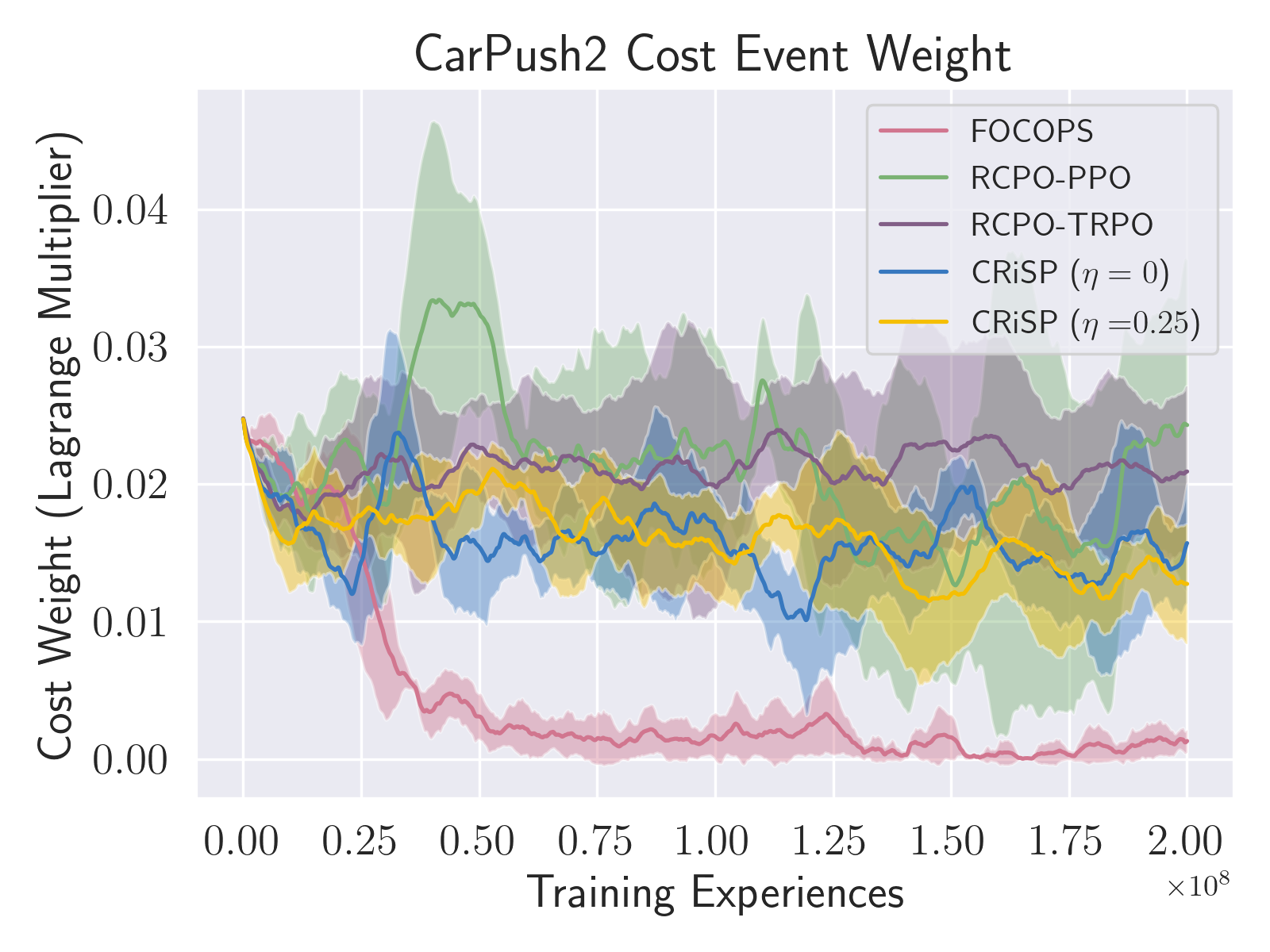}
     \caption{Learned cost weights for each method and environment.  The pessimistic risk-sensitive agents (CRiSP; $\eta > 0$) tend to converge to a lower learned penalty than every method except FOCOPS.  Note that the lower penalties used by FOCOPS do not correspond to higher positive reward accumulation.}
    \label{constr2}
\end{figure}

\begin{figure}
    \centering
    \includegraphics[width=0.234\textwidth]{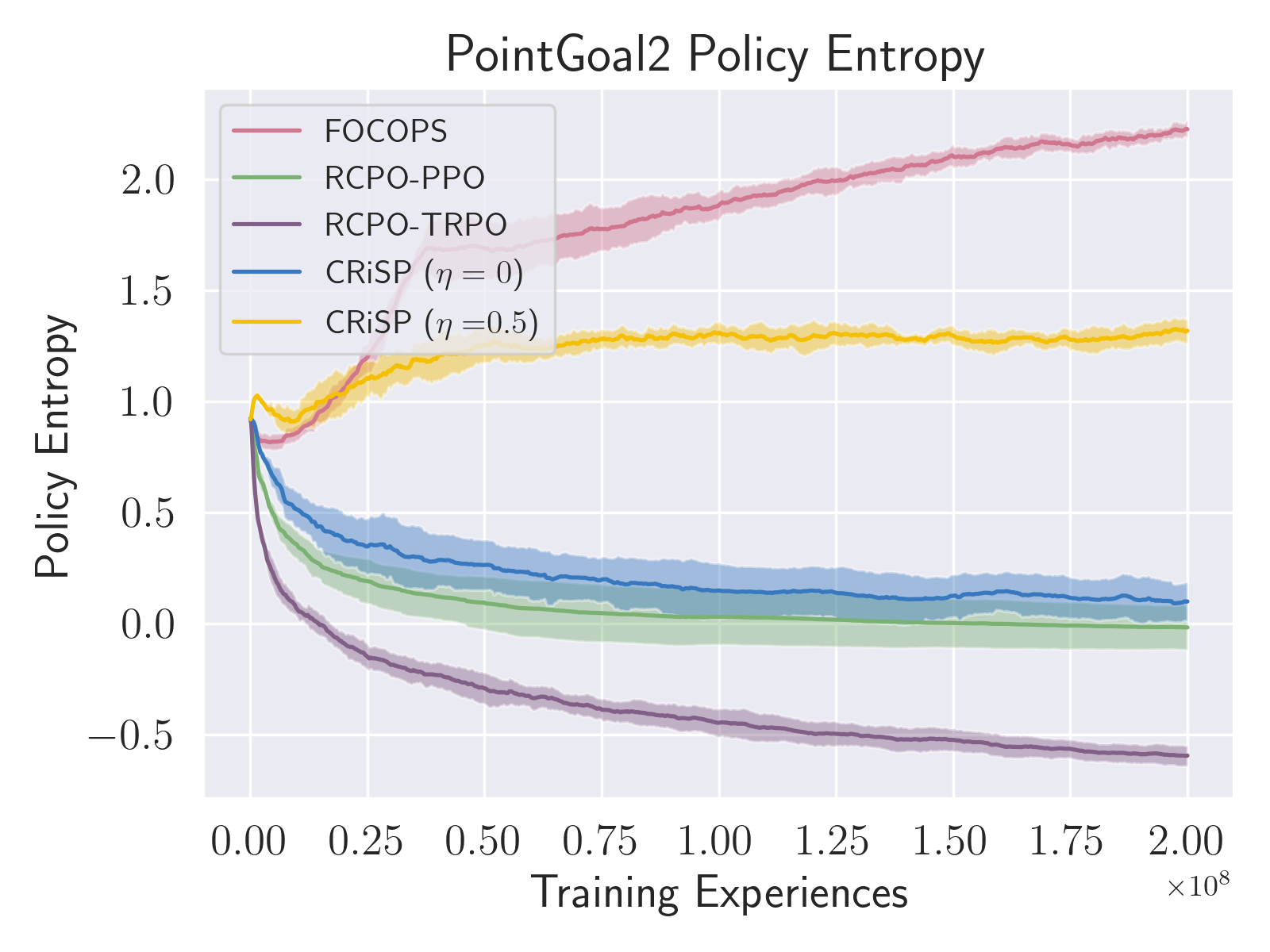} 
    \includegraphics[width=0.234\textwidth]{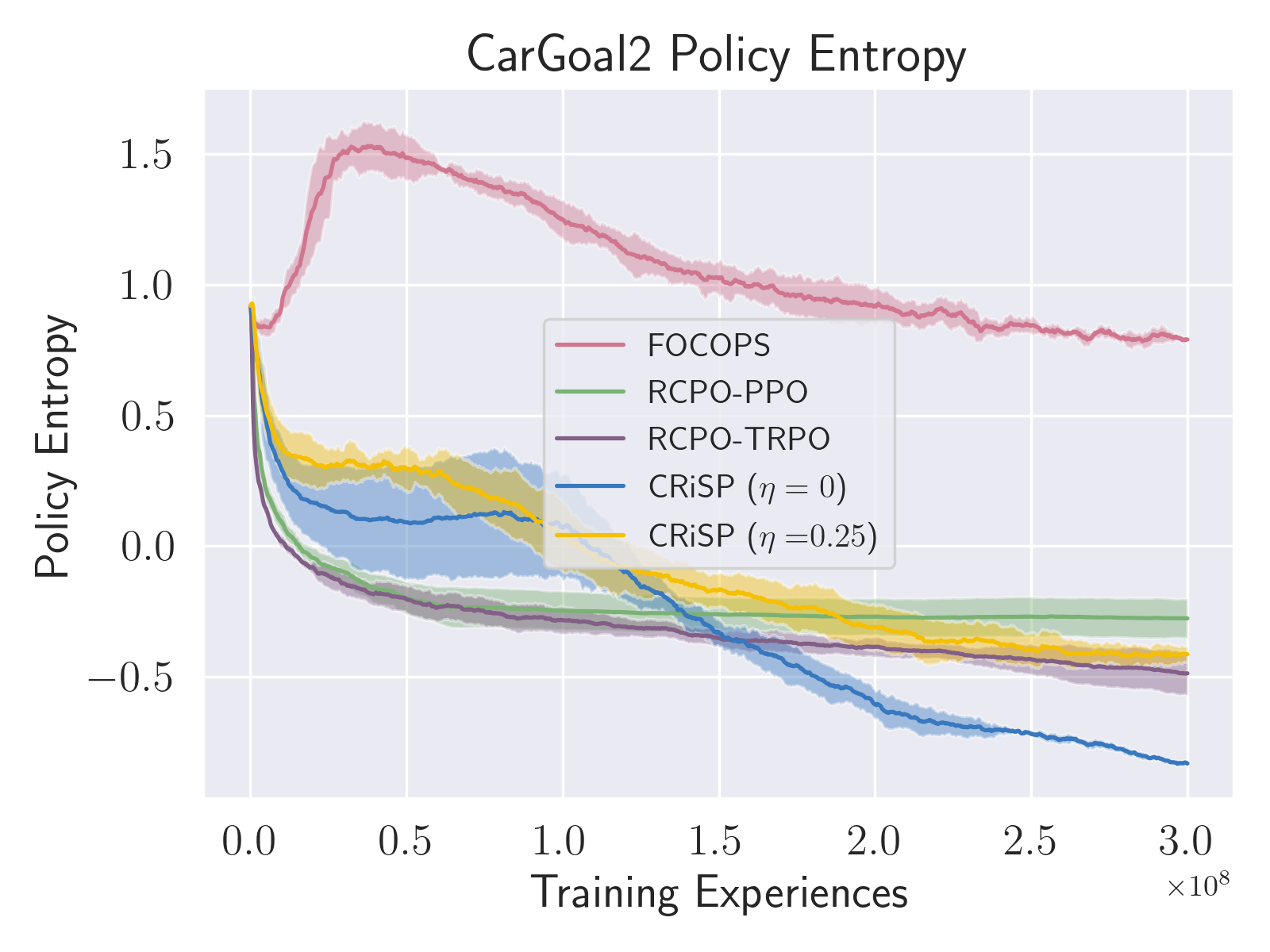}
    \includegraphics[width=0.234\textwidth]{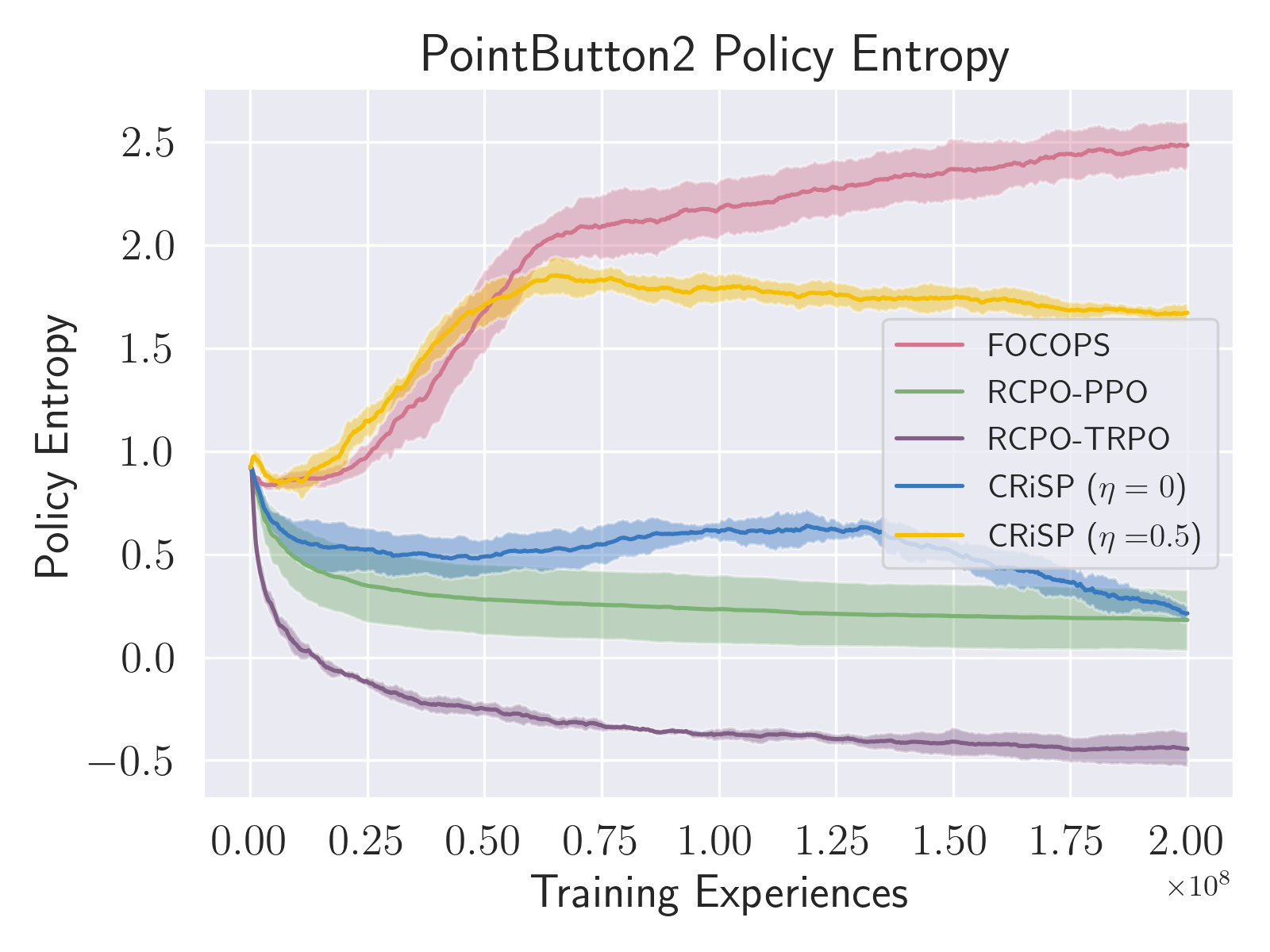}
    \includegraphics[width=0.234\textwidth]{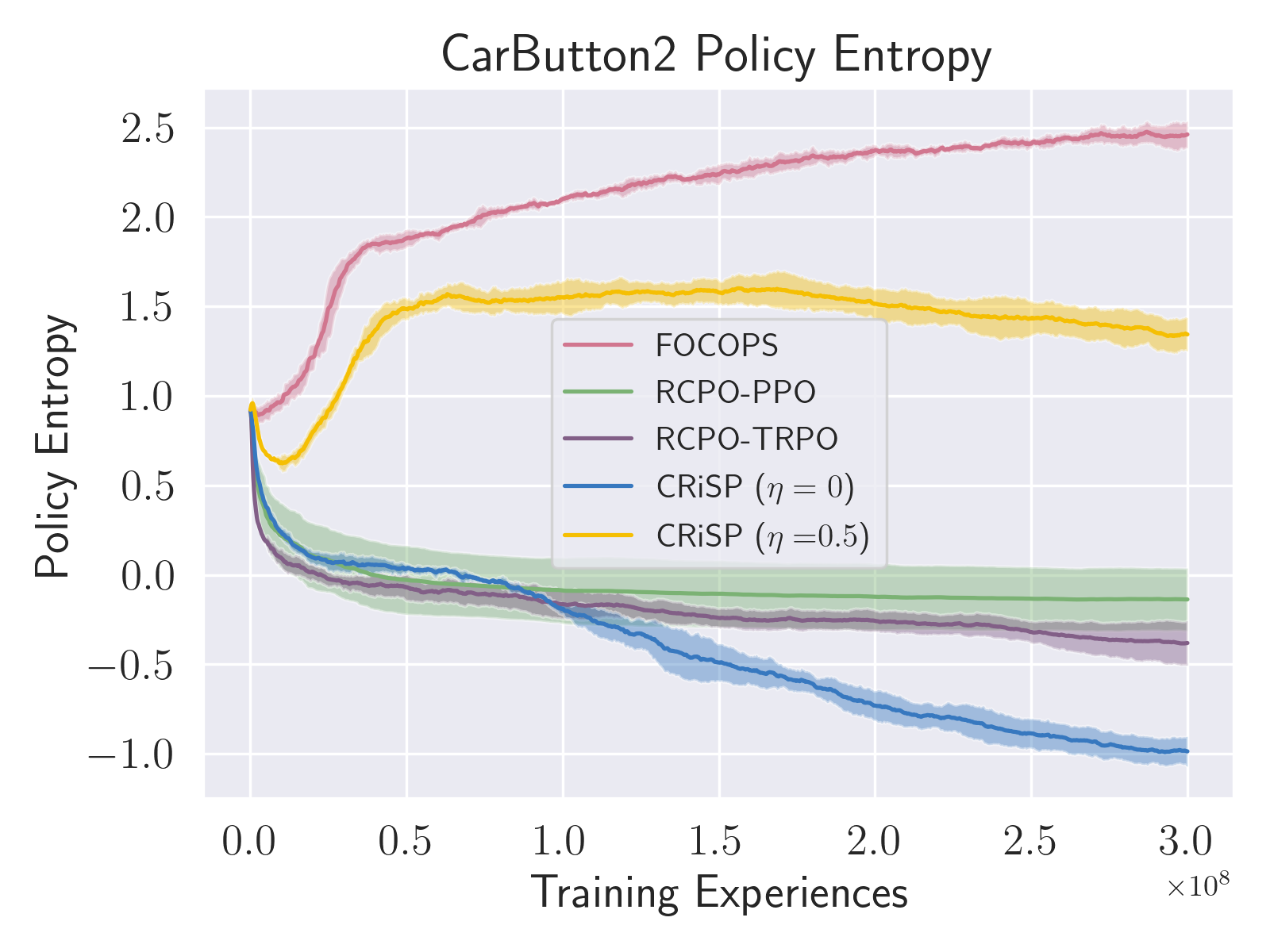}
    \includegraphics[width=0.234\textwidth]{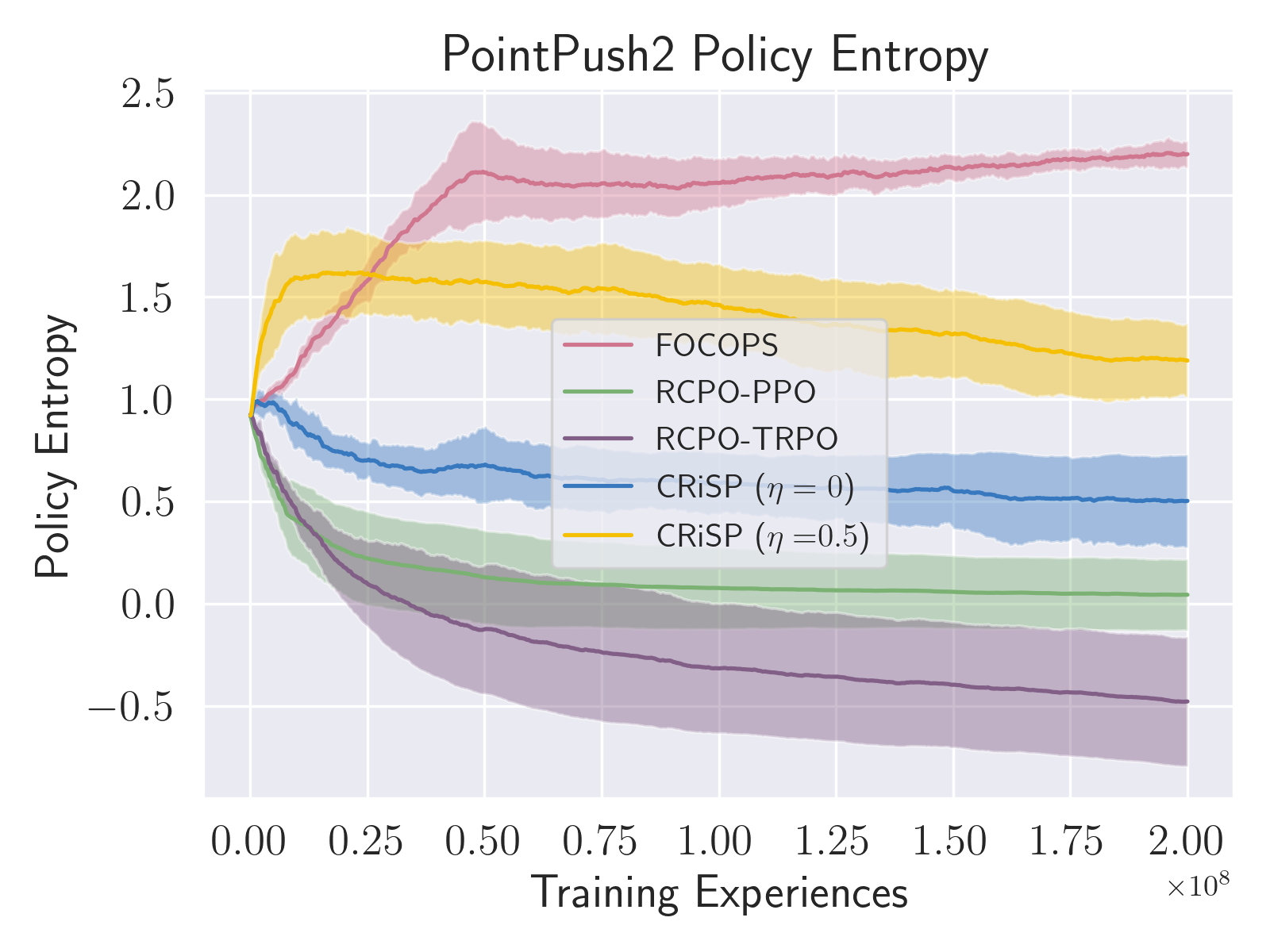}
    \includegraphics[width=0.234\textwidth]{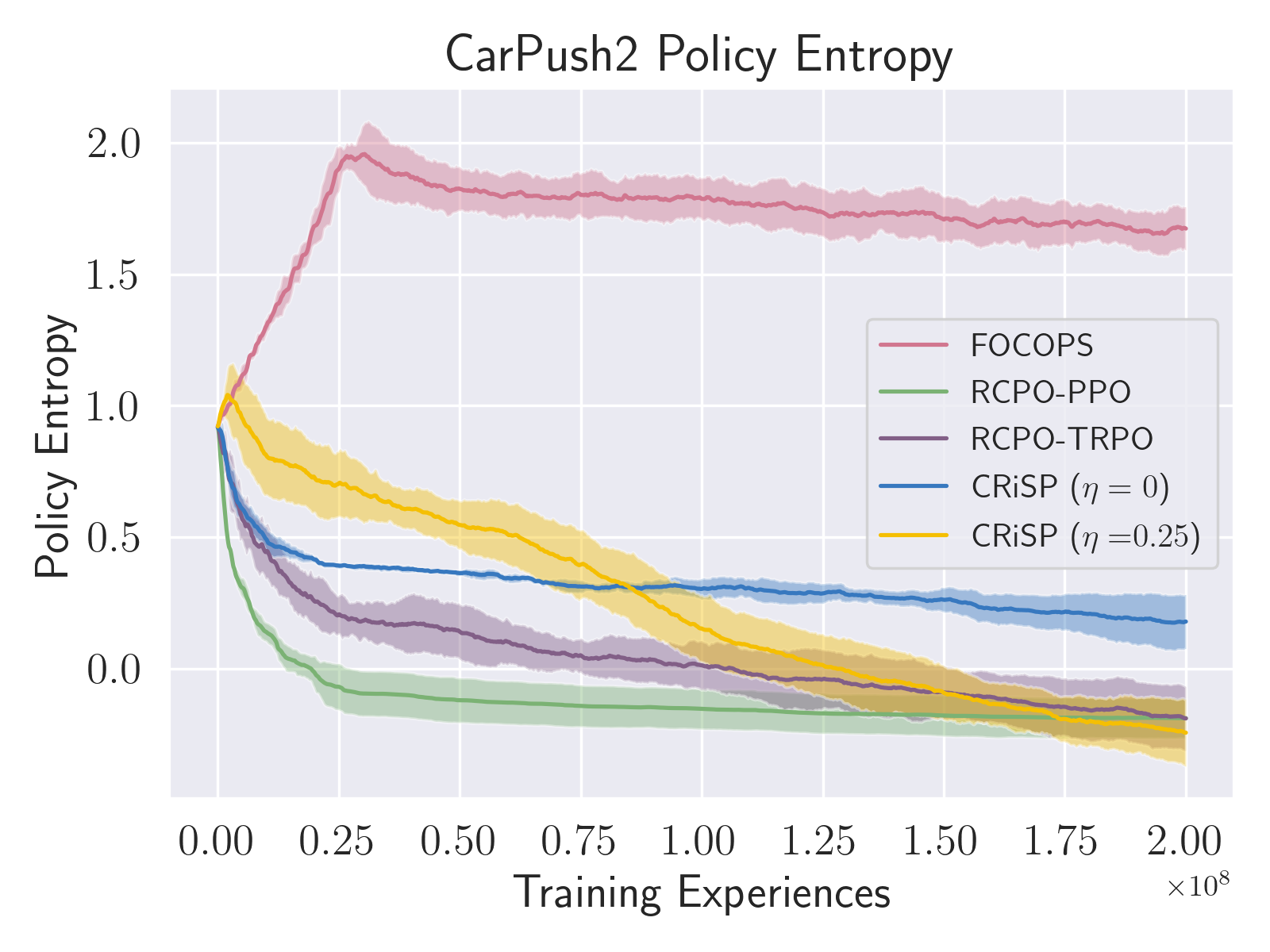}
     \caption{Policy entropies for constrained learning methods.  These entropies do not take into account control bounds.}
    \label{constr3}
\end{figure}

\begin{figure}
    \centering
    \includegraphics[width=0.234\textwidth]{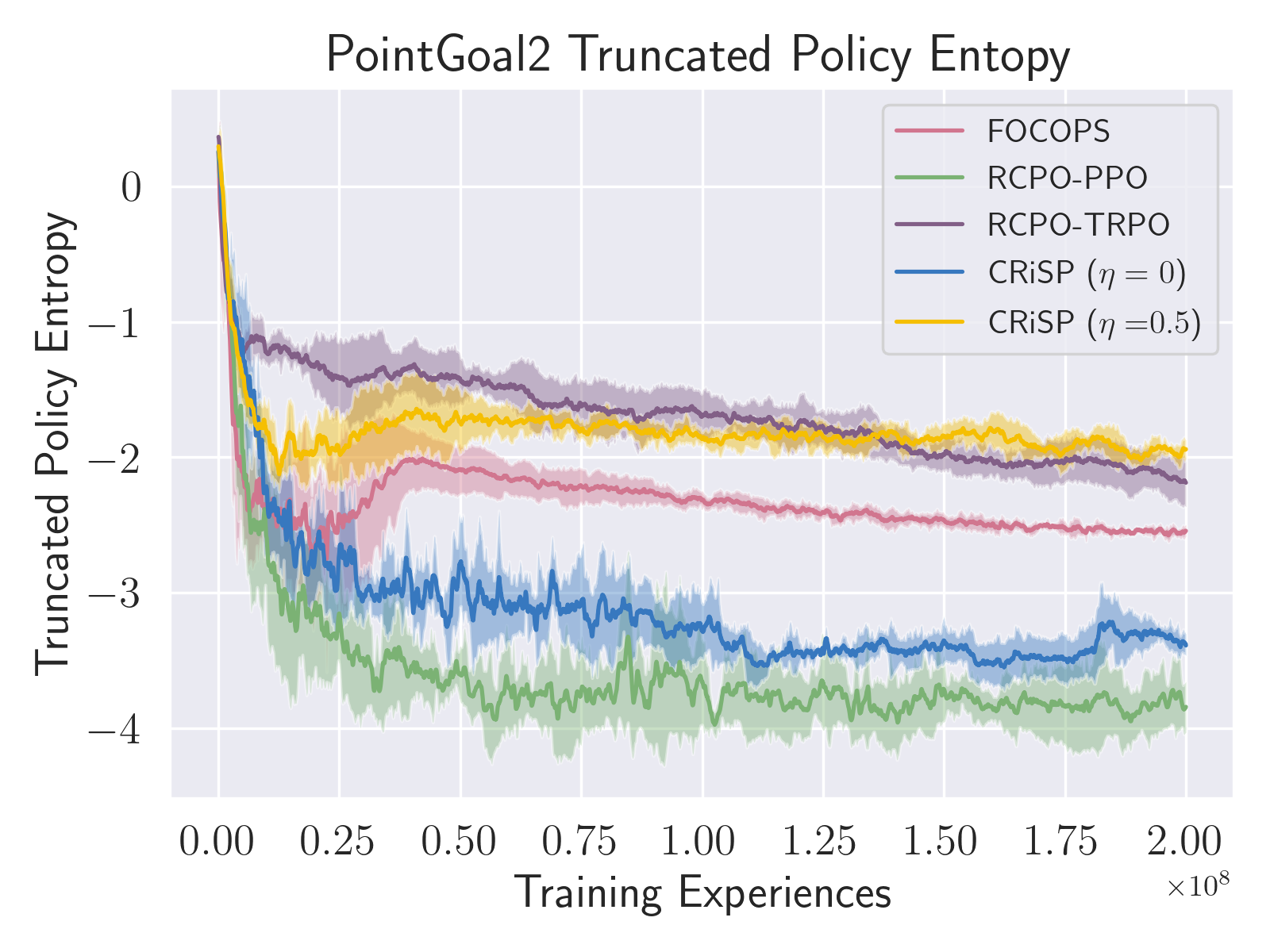}
    \includegraphics[width=0.234\textwidth]{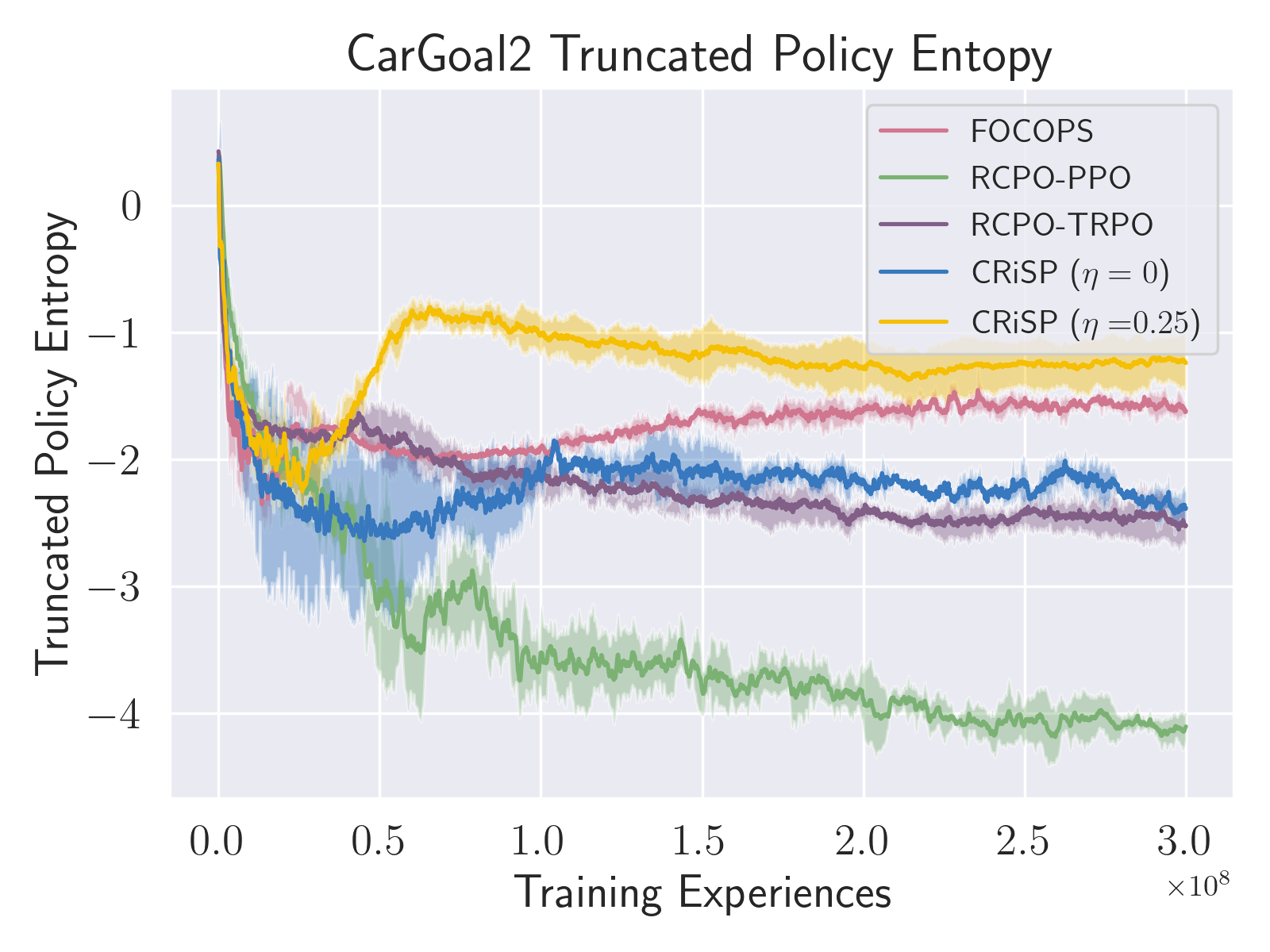}
    \includegraphics[width=0.234\textwidth]{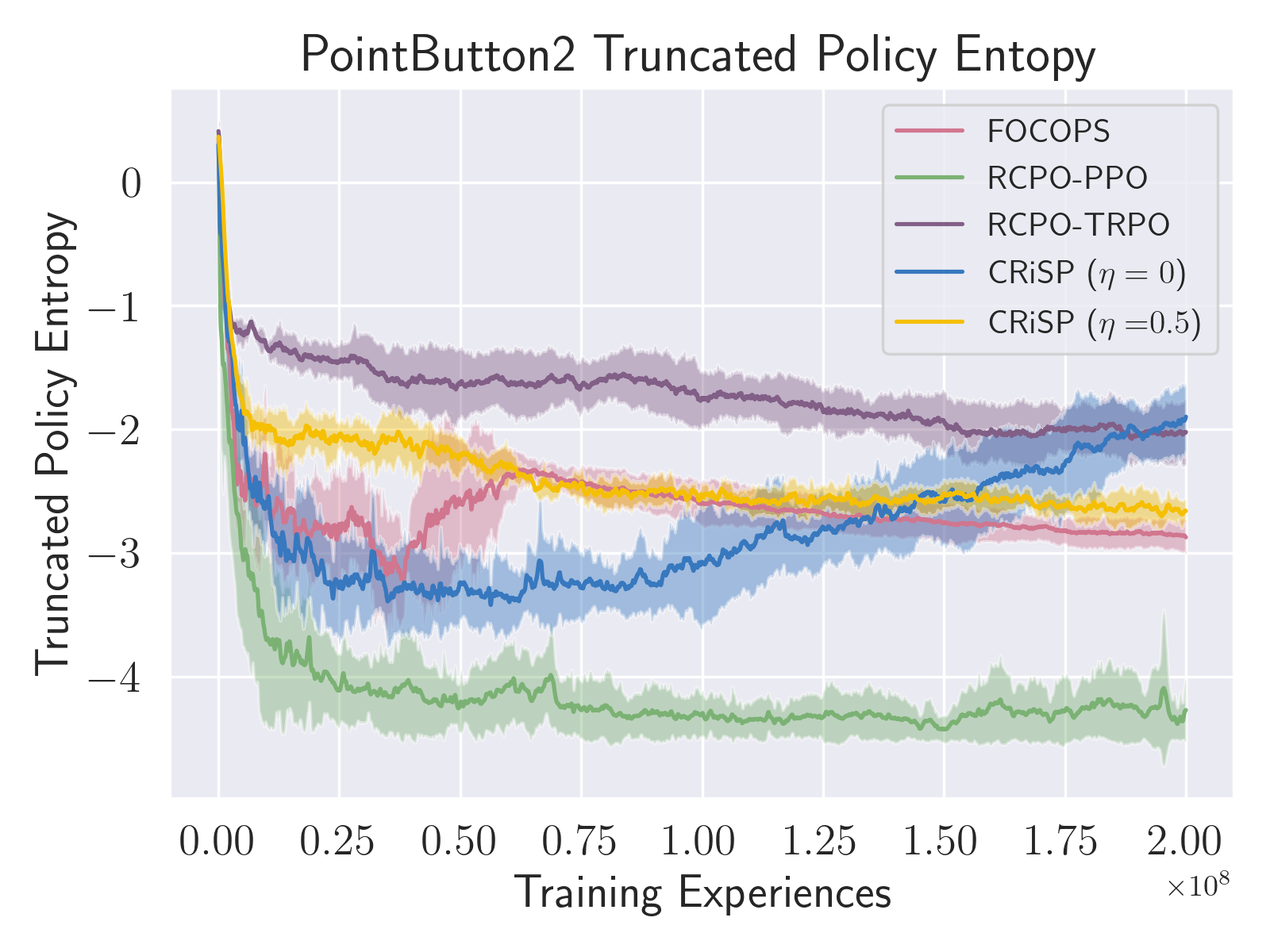}
    \includegraphics[width=0.234\textwidth]{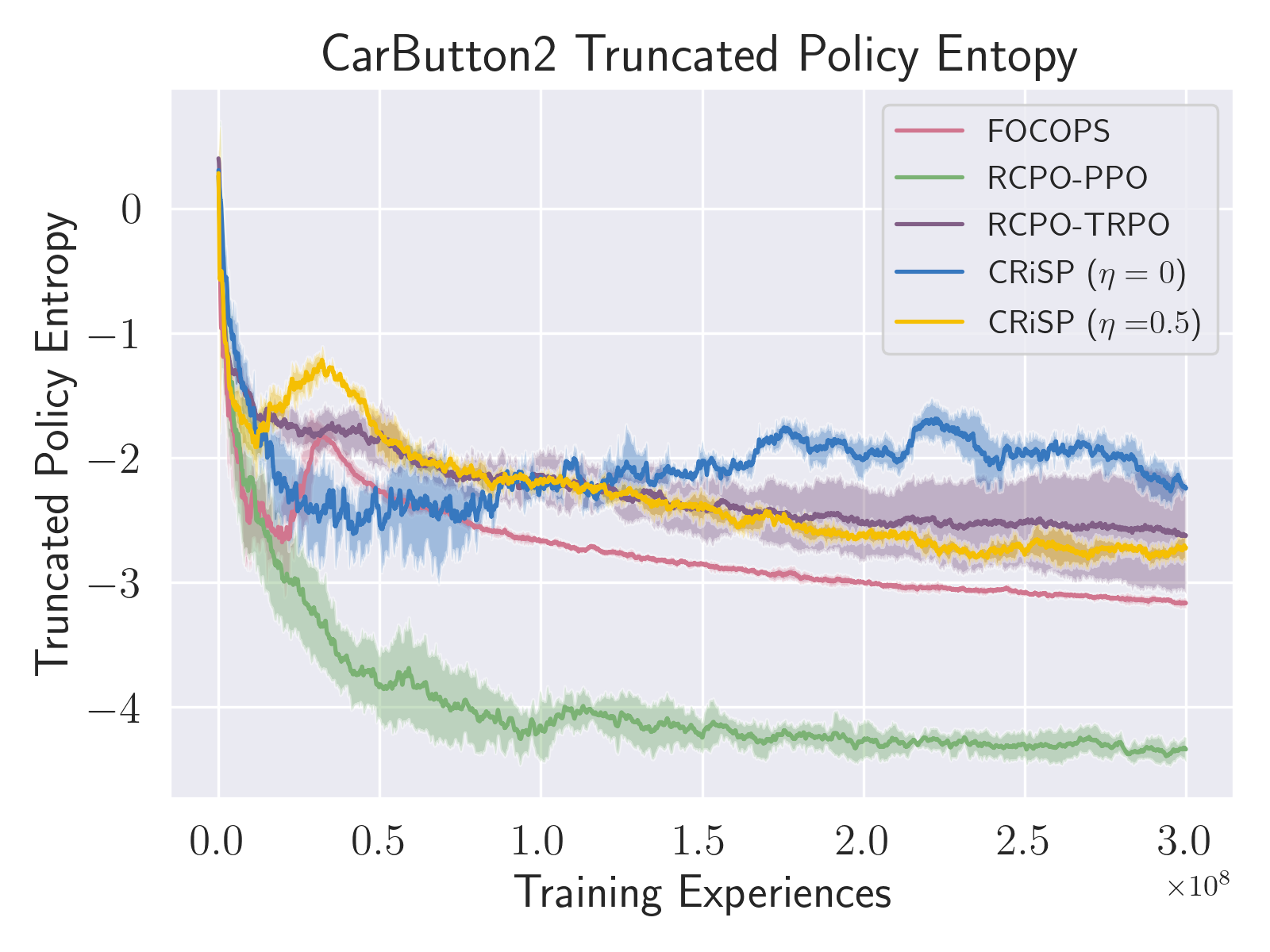}
    \includegraphics[width=0.234\textwidth]{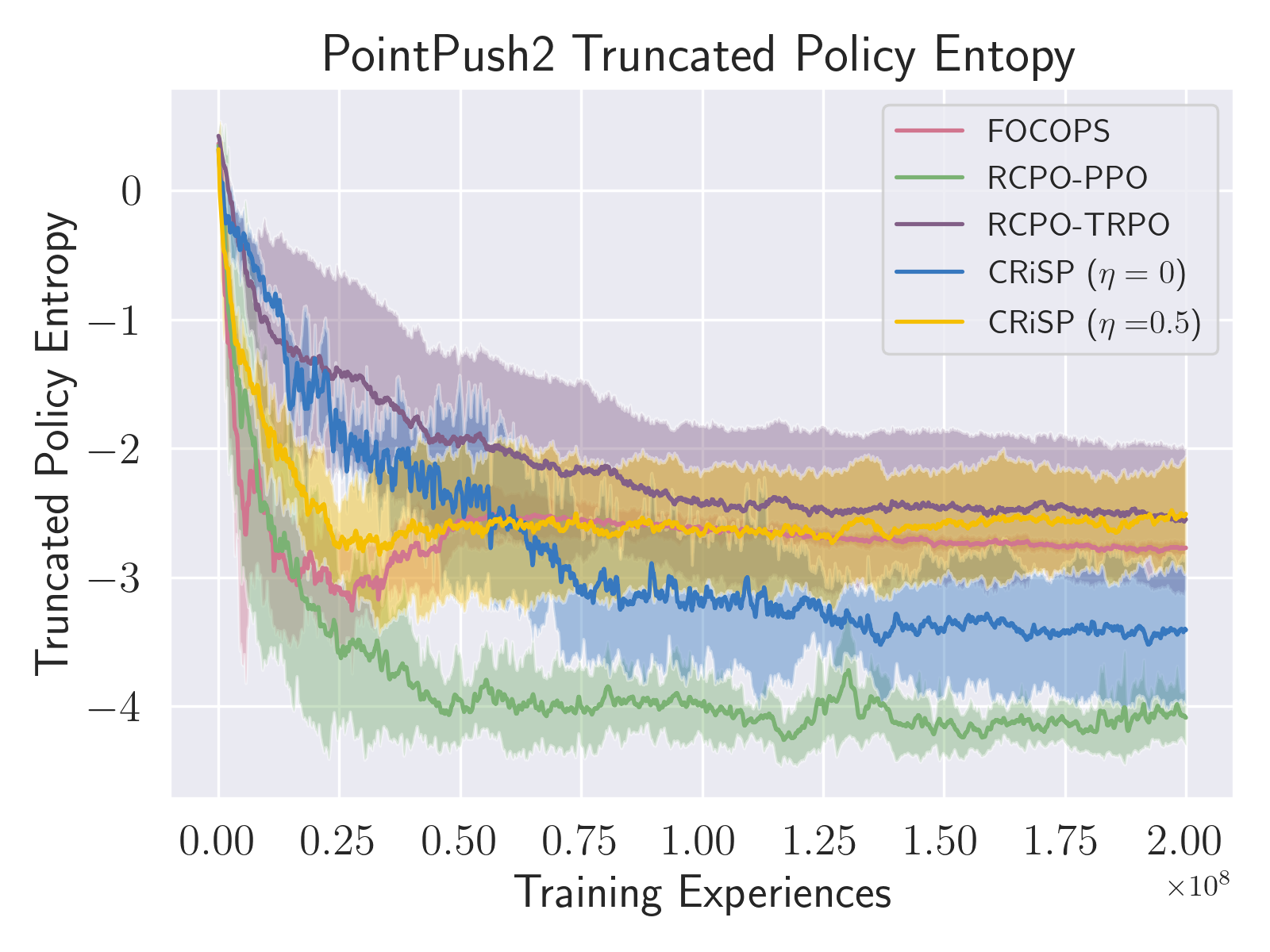}
    \includegraphics[width=0.234\textwidth]{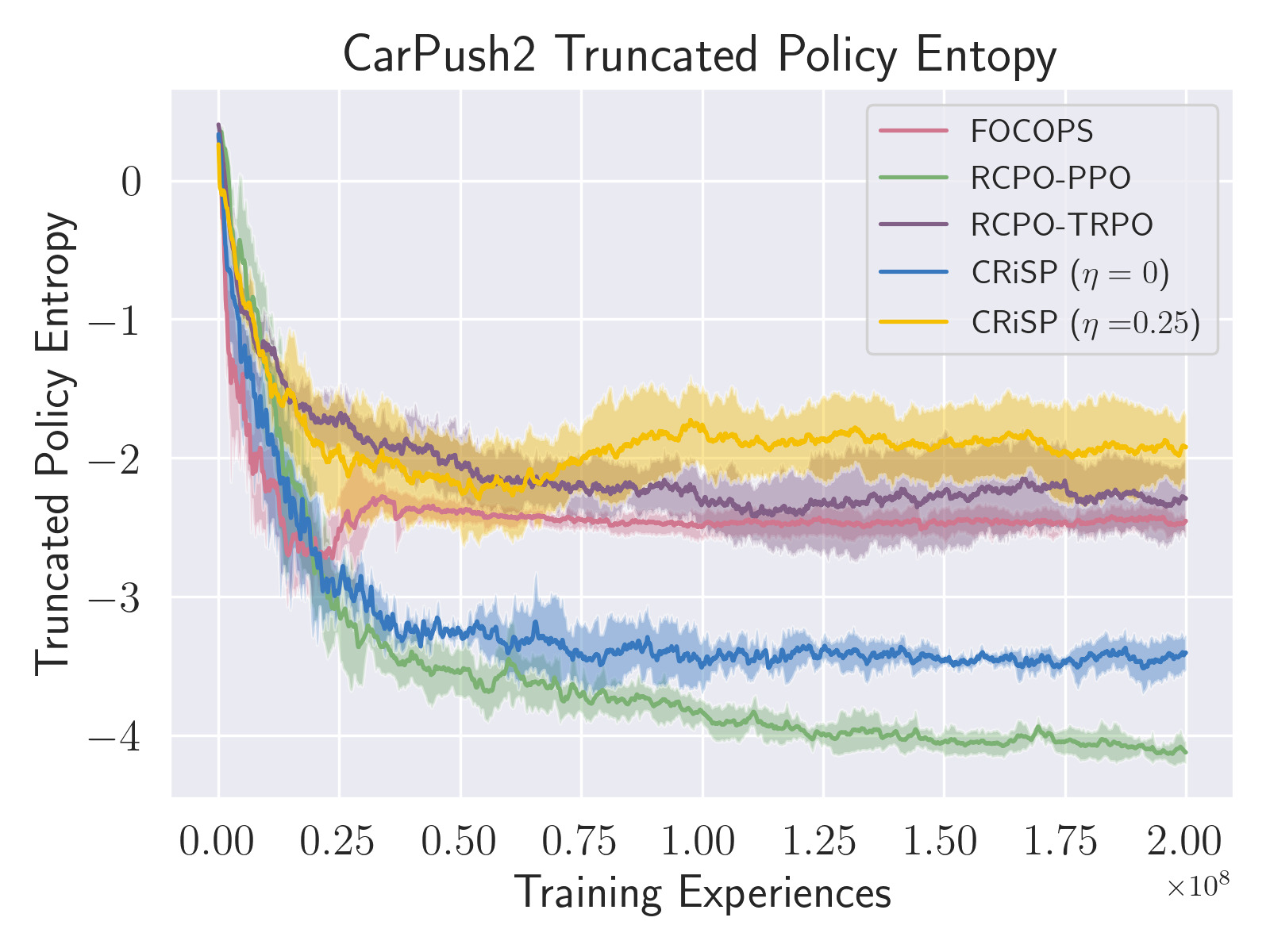}
     \caption{Truncated (i.e., taking into account control bounds) policy entropies for constrained learning methods.}
    \label{constr4}
\end{figure}

\subsection{A.9 Effect of Tuning $\eta$}\label{constr_eta_app}
We explored the effect of varying $\eta$ in the objective for one environment.  The weight functions and coefficients corresponding to different $\eta$ are shown in Figure 7.  In Figures 8-9, we show how performance changes with $\eta$ for unconstrained and constrained learning in that environment. Recall that increasing $\eta$ increases the emphasis placed on worst-case outcomes (``pessimistic'' weighting), while decreasing $\eta$ increases the emphasis on best-case outcomes (``optimistic'' weighting). $\eta = 0$ corresponds to risk-neutral behavior, or the standard RL objective.

The following trends are seen to be fairly general across environments:

\begin{itemize}
\item{Policy entropy remains higher as $\eta$ is increased}

\item{Truncated entropy, or entropy taking into account the control bounds, may be higher or lower with increased $\eta$, depending on the efficacy of control near the bounds of the environment.}

\item{Accumulated rewards tend to be similar over a broad range of positive $\eta$ centered on $\eta=0.5$.  The range is bounded by uniform weighting below and the point where the agent tends to ignore good outcomes too much above.}

\item{Costs at the minimum found by the agent tend to decrease with increasing $\eta$, up to the point where the agent ignores good outcomes too much.}

\item{Prescribed cost targets tend to be reached more quickly with increased $\eta$ in the constrained setting, again to the point where the good outcomes are ignored too much.}

\end{itemize}

\begin{figure}
    \centering
    \includegraphics[width=0.234\textwidth]{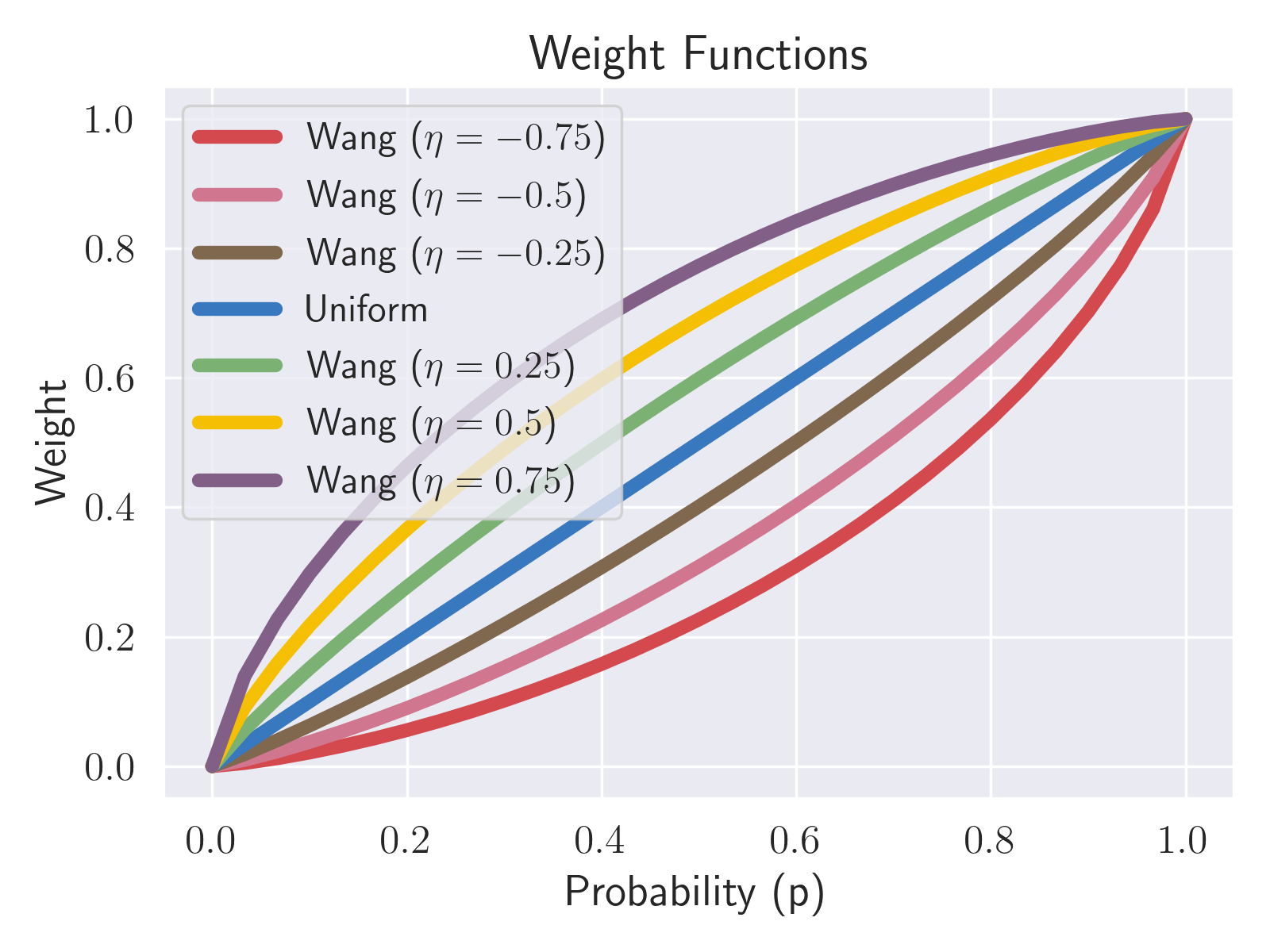}
    \includegraphics[width=0.234\textwidth]{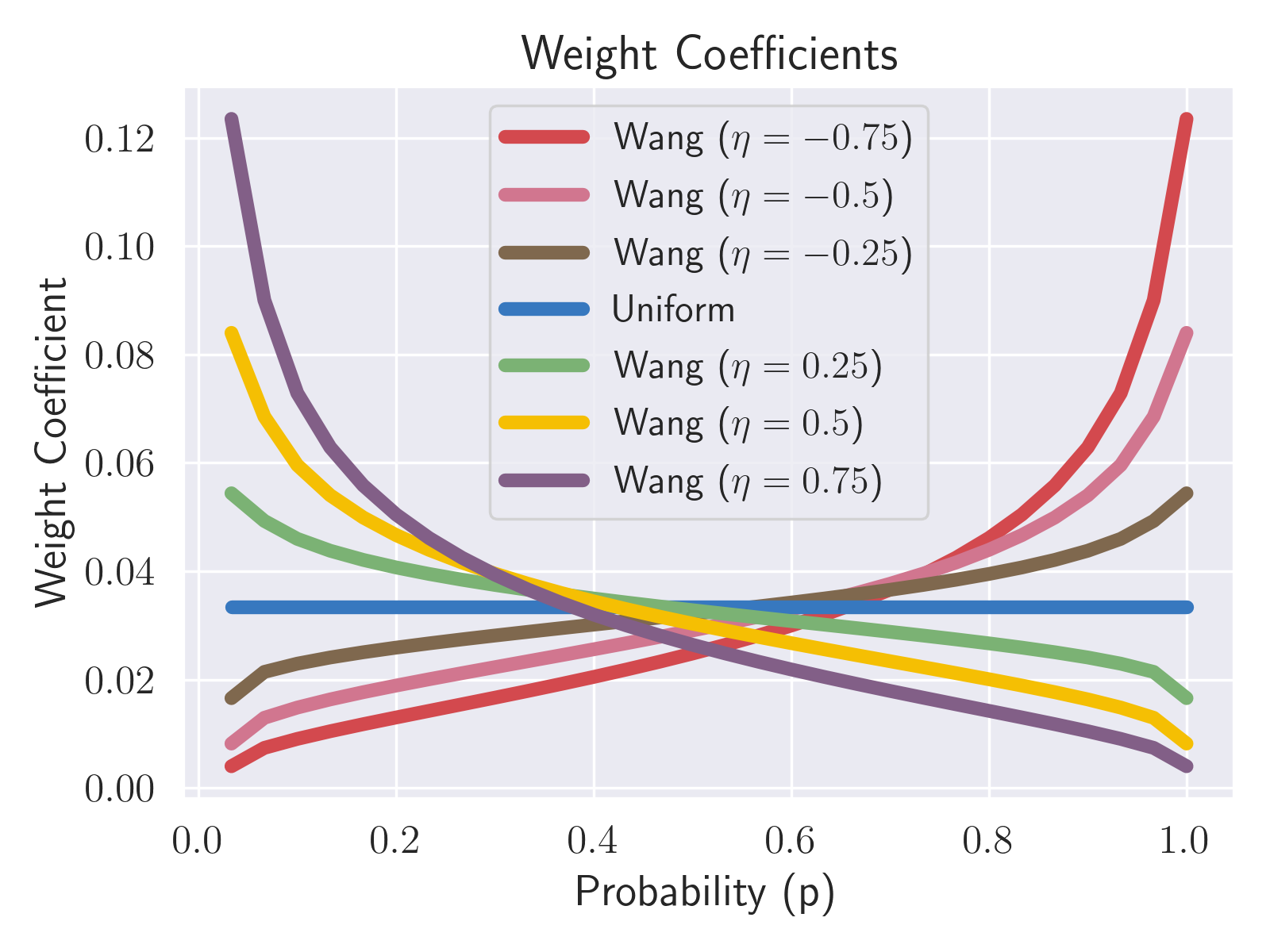}
    \caption{Effect of tuning $\eta$ on the weight functions (left) and coefficients in the risk-sensitive policy gradient estimate (Equation 9; right).}
    \label{all_weights}
\end{figure}

\begin{figure}
    \centering
    \includegraphics[width=0.234\textwidth]{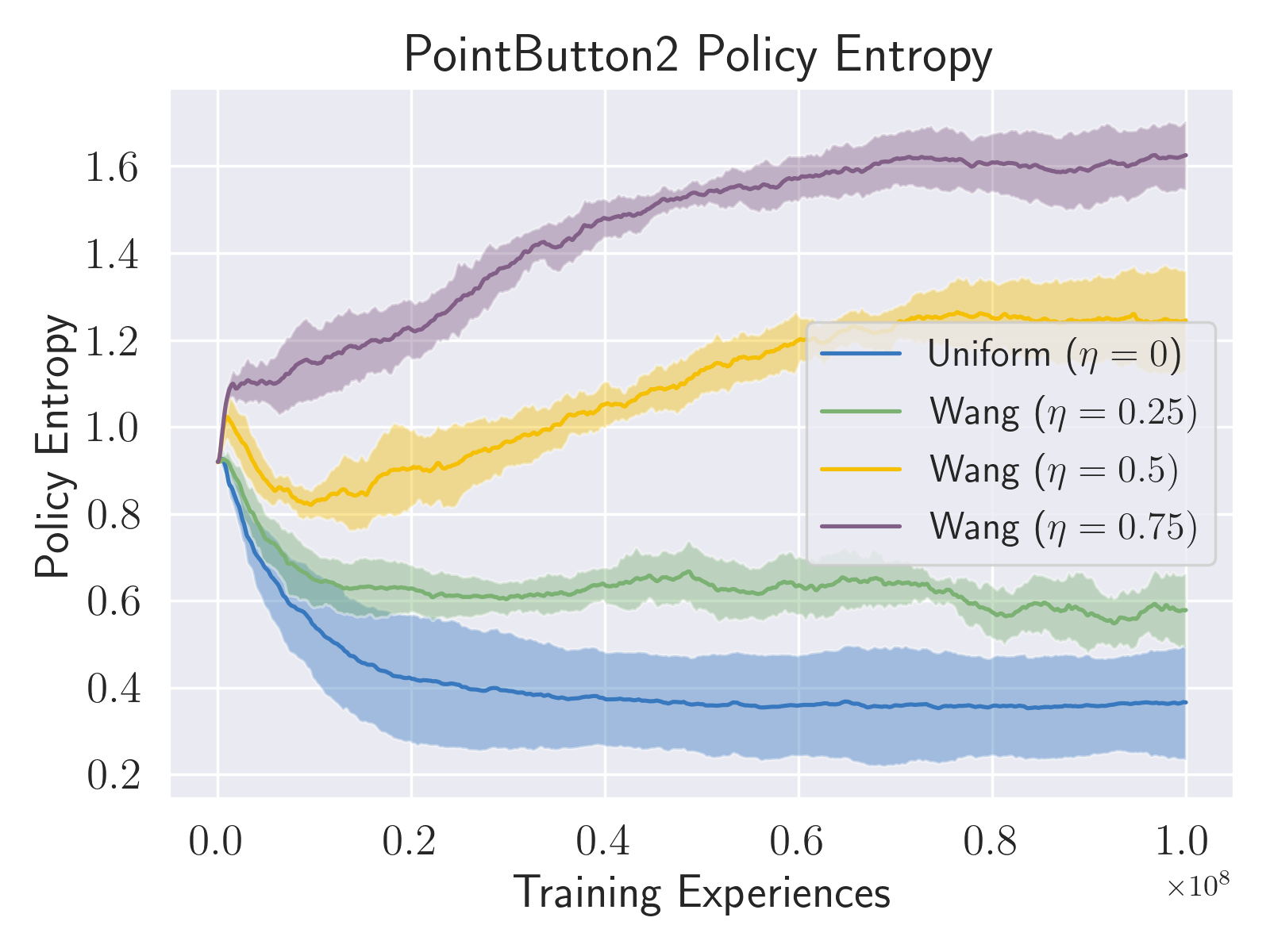}
    \includegraphics[width=0.234\textwidth]{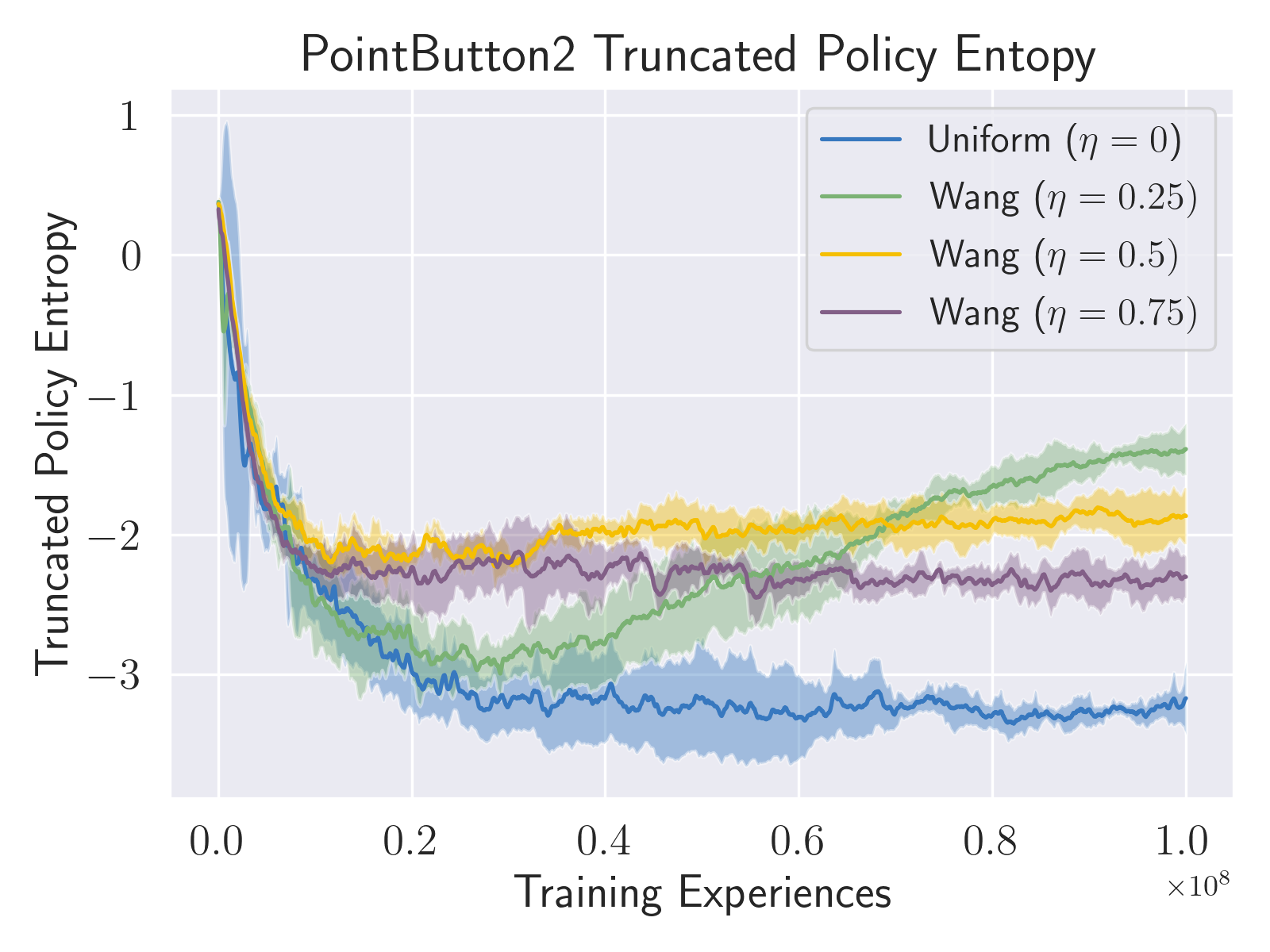}
    \includegraphics[width=0.234\textwidth]{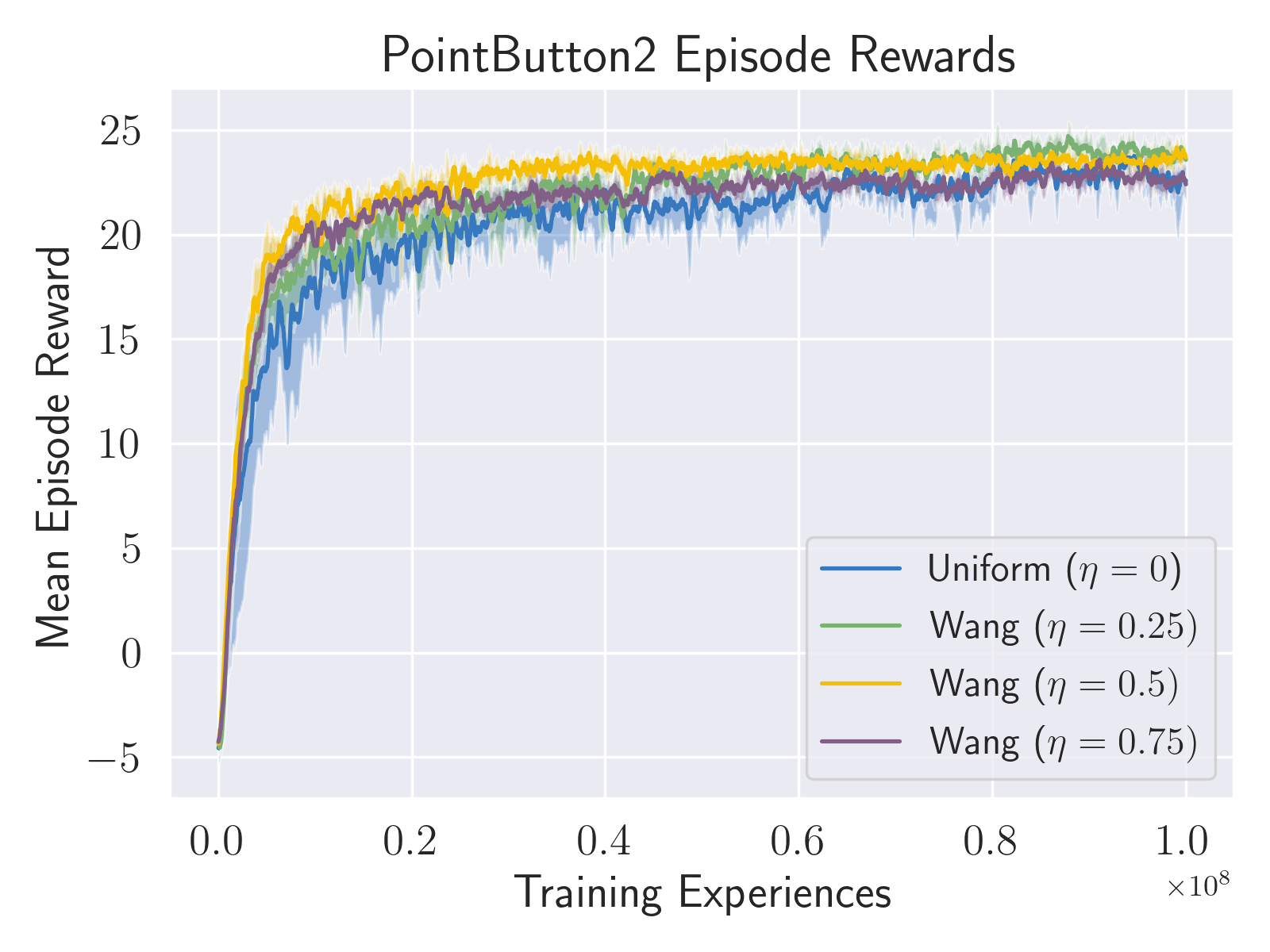}
    \includegraphics[width=0.234\textwidth]{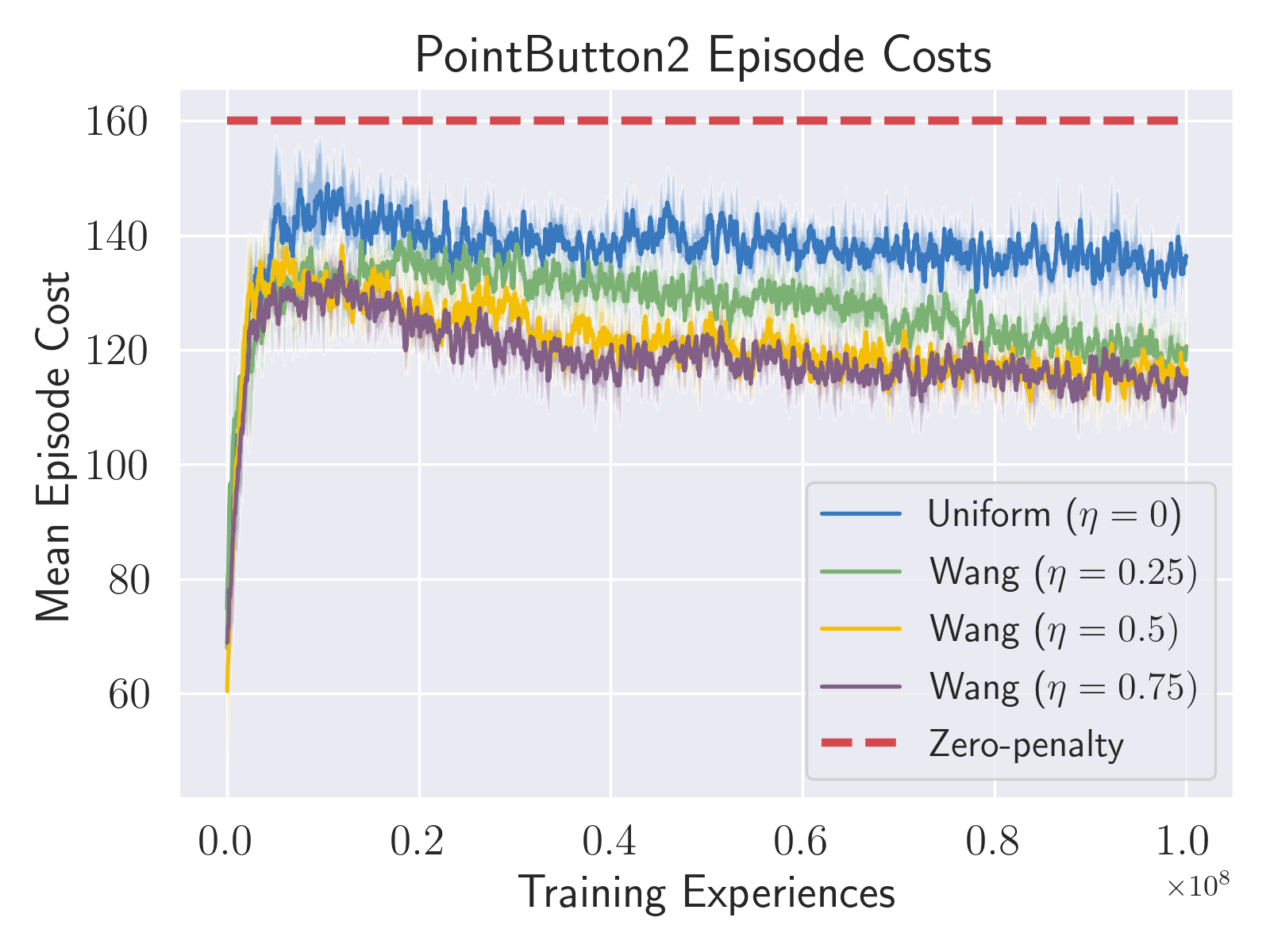}
     \caption{Effect of tuning $\eta$ on unconstrained learning in one environment (PointButton2).  Top row: policy distributions become wider with increasing $\eta$, while entropy considering control bounds depends on the locations of the control actions of the optimized policies.  Bottom row: episode rewards are similar over a fairly broad range of $\eta > 0$, with cost levels decreasing with increasing $\eta$ in that range.}
    \label{constr_eta1}
\end{figure}

\begin{figure}
    \centering
    \includegraphics[width=0.234\textwidth]{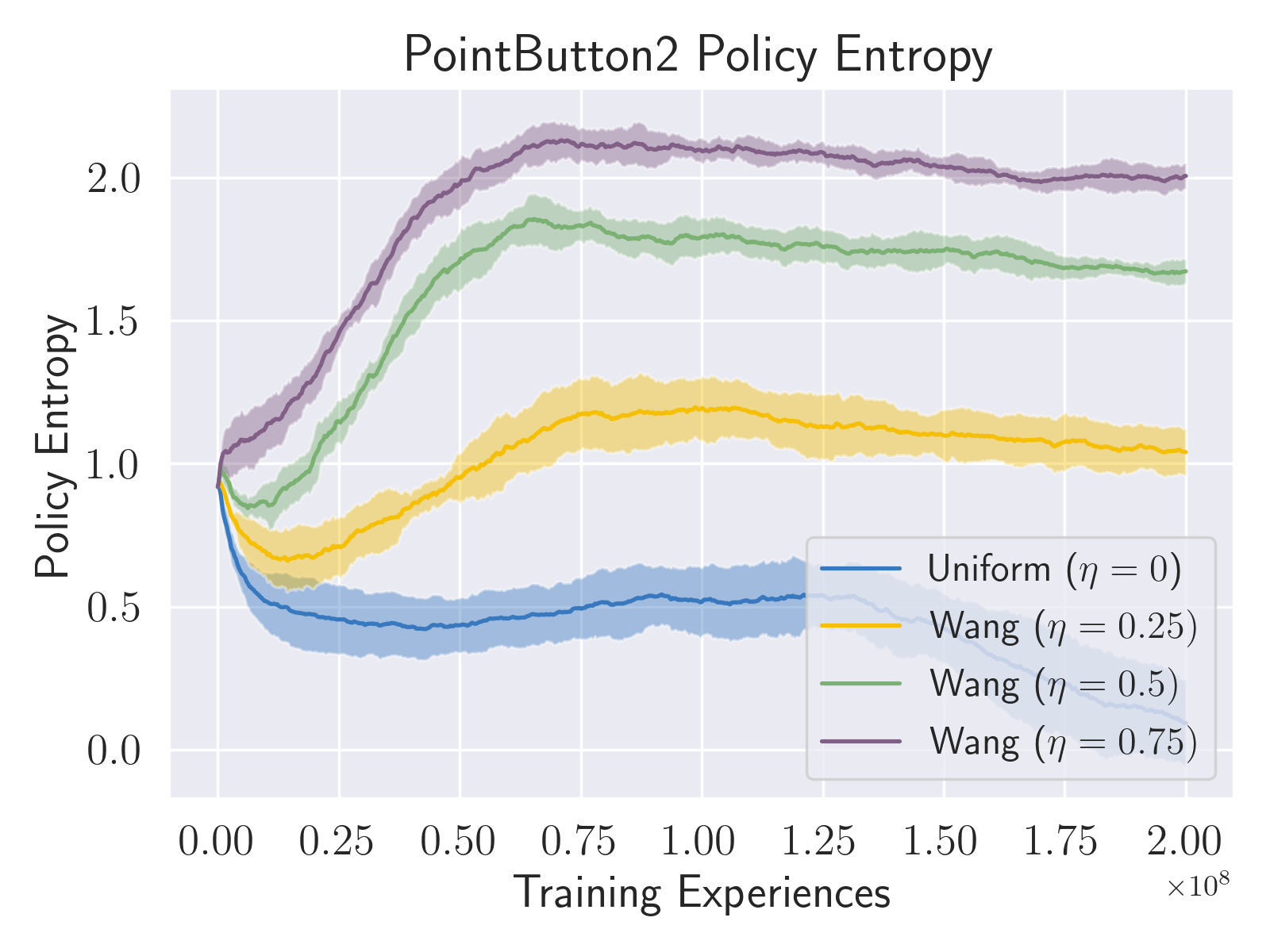}
    \includegraphics[width=0.234\textwidth]{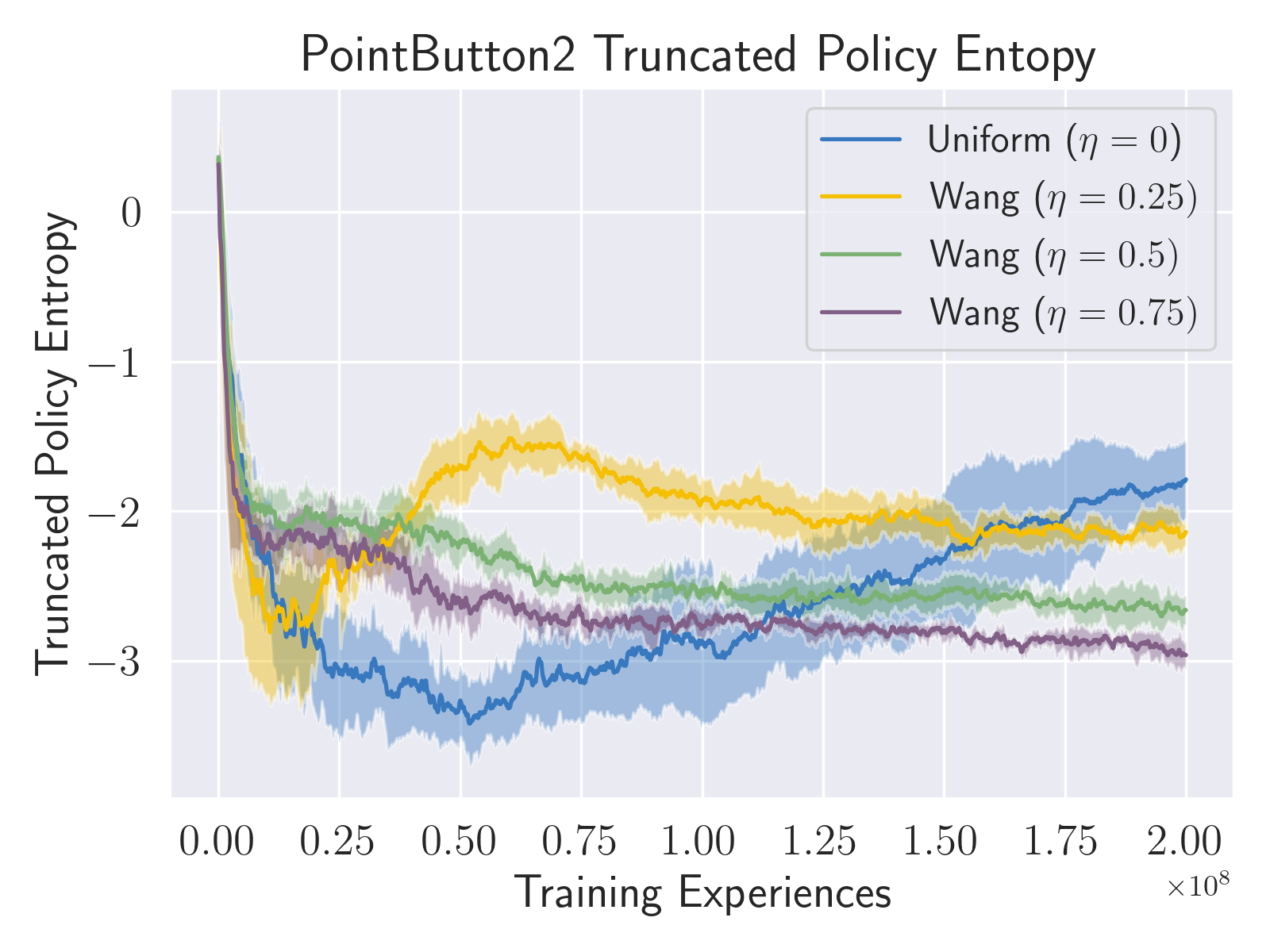}
    \includegraphics[width=0.234\textwidth]{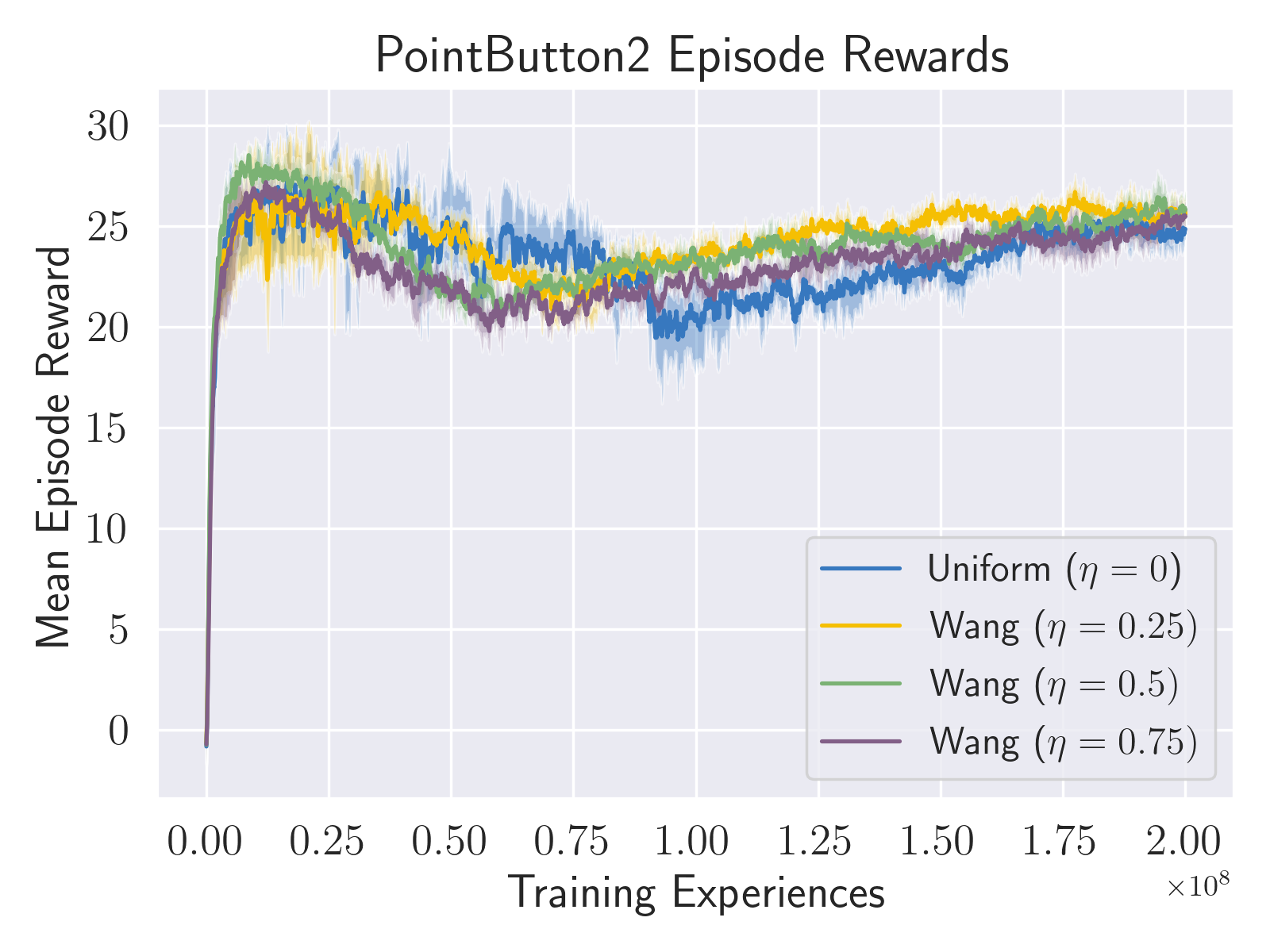}
    \includegraphics[width=0.234\textwidth]{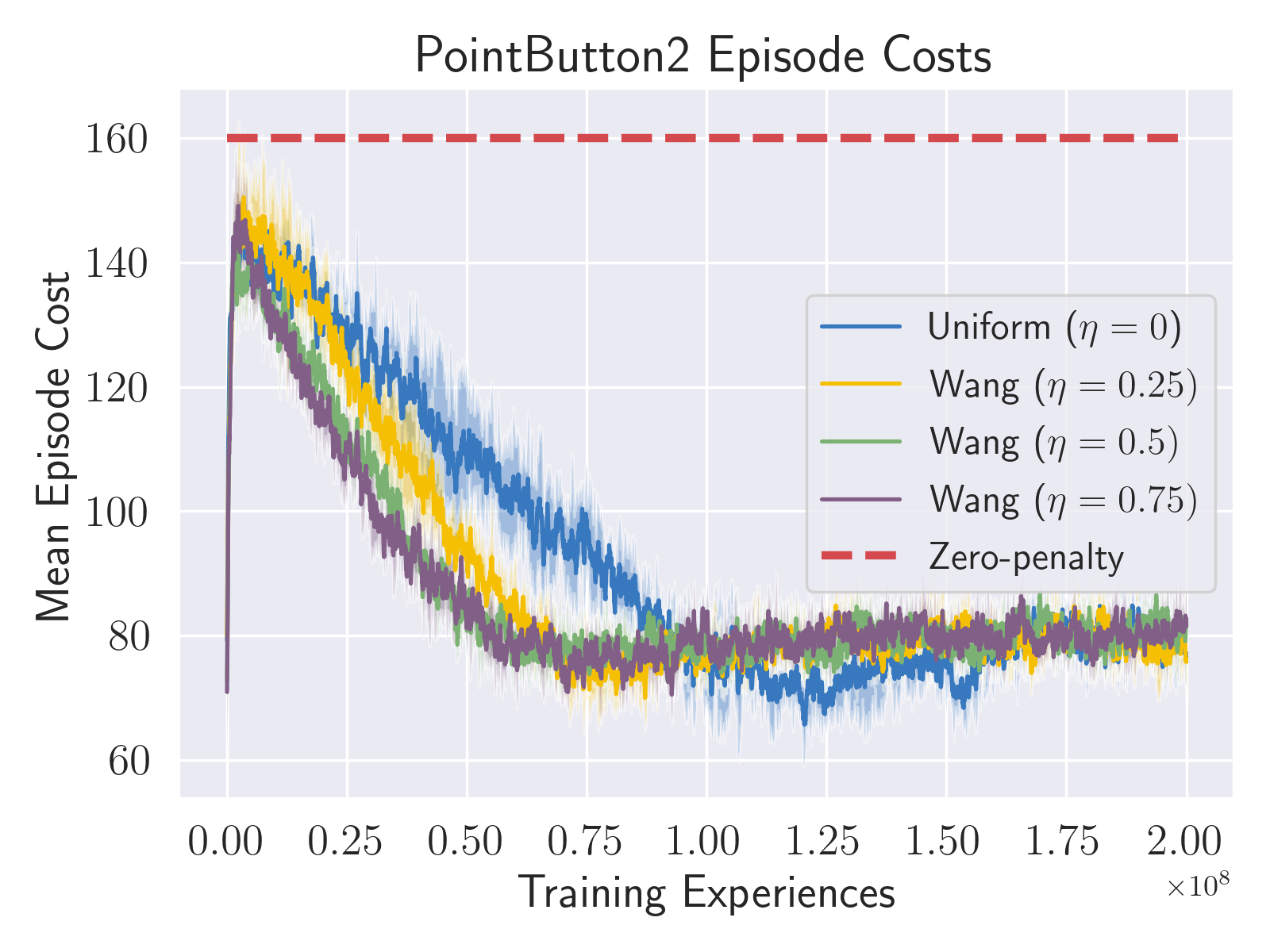}
     \caption{Effect of tuning $\eta$ on constrained learning in one environment (PointButton2).  Top row: policy distributions become wider with increasing $\eta$, while entropy considering control bounds depends on the locations of the control actions of the optimized policies.  Bottom row: episode rewards are similar over a fairly broad range of $\eta > 0$, with cost reaching the target level more rapidly with increasing $\eta$ in that range.}
    \label{constr_eta2}
\end{figure}

\subsection{A.10 Pilot Application to Multi-Task Learning}\label{meta_app}
Here we provide initial results of a pilot application of our unconstrained risk-sensitive approach to multi-task learning.  Therein, the set of tasks were considered as one large, ``joint'' MDP and our approach was applied without further modification.  Episodes collected from different tasks were ranked in terms of their total reward for the purpose of applying weights; no normalization across tasks was used.  In this experiment, we applied the ``pessimistic'' Wang weighting with $\eta=0.75$.  Differing from the Safety Gym configuration, variance was taken to be a function of state and shared computation layers with the network for policy mean.

Performance on the Multi-Task 10 (MT10) benchmark is given in Figure \ref{metaworld_fig}.  MT10 includes 50 flavors of each of 10 task families (i.e. 500 distinct MDPs).  We observed significant performance gains relative to standard, multi-task PPO.  The final observed success rate of the risk-sensitive approach was found to be comparable to that of multi-task Soft Actor-Critic \cite{metaworld}.

While further experimentation and validation are required, this result does indicate potential for the use of our method in the context of multi-task and meta-learning.  In particular, one may evaluate its ability to positively impact the distribution of outcomes over different tasks and episodes.

\begin{figure}
    \includegraphics[width=0.234\textwidth]{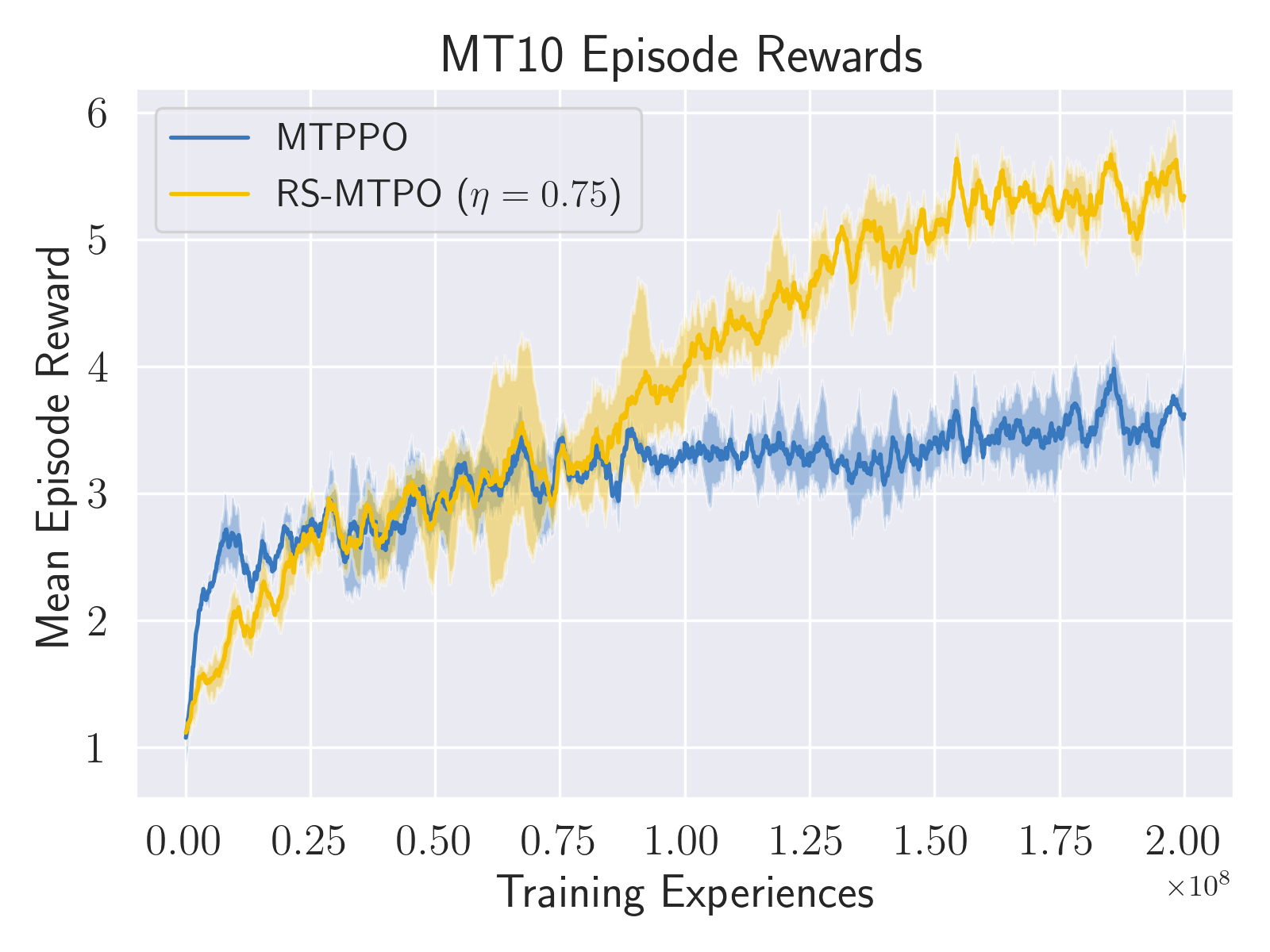}
    \includegraphics[width=0.234\textwidth]{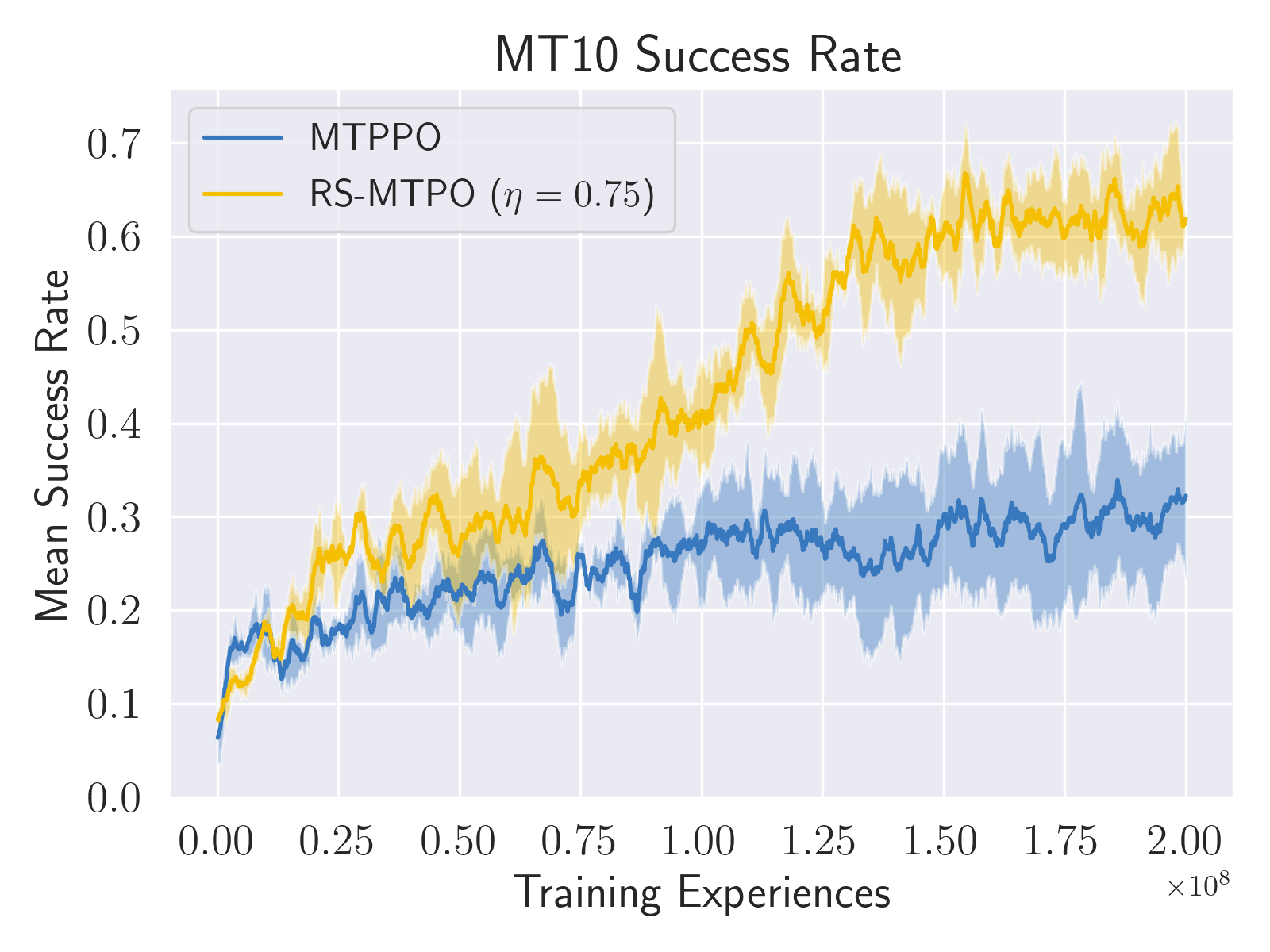}
     \caption{Pilot analysis of our unconstrained risk-sensitive approach (yellow) on MT10 compared to standard multi-task PPO (blue).  The success plot (right) may be compared to Figure 15 of \cite{metaworld}.  Results were generated using 3 random seeds.}
    \label{metaworld_fig}
\end{figure}

\subsection{Source Code}
The code and configurations used to produce these results are posted at https://github.com/JHU-APL-ISC-Deep-RL/risk-sensitive.

\end{document}